\newif\ifarxiv\arxivtrue
\newif\ifnotarxiv
\newcommand{\arxiv}[1]{#1}
\newcommand{\arxiv}[1]{}
\newcommand{\notarxiv}[1]{}
\newcommand{\notarxiv}[1]{#1}
    \definecolor{cornellred}{rgb}{0.7, 0.11, 0.11}
    \definecolor{dgreen}{rgb}{0.0, 0.5, 0.0}
    \definecolor{ballblue}{rgb}{0.13, 0.67, 0.8}
    \definecolor{royalblue(web)}{rgb}{0.25, 0.41, 0.88}
    \definecolor{bleudefrance}{rgb}{0.19, 0.55, 0.91}
    \definecolor{royalazure}{rgb}{0.0, 0.22, 0.66}
\newcommand*\samethanks[1][\value{footnote}]{\footnotemark[#1]}
\author{
    Dmitrii Avdiukhin\thanks{Indiana University, Bloomington. Research supported by NSF award CCF-1657477 and Facebook Faculty Research Award.}\\{\tt davdyukh@iu.edu}
    \and
    Grigory Yaroslavtsev\samethanks[1]\\{\tt grigory@grigory.us}
}
\title{Escaping Saddle Points with Compressed SGD}
\let\oldnl\nl
\newcommand{\nonl}{\renewcommand{\nl}{\let\nl\oldnl}}
\newcommand{\multiline}[1]{
        \begin{tabular}{@{}c@{}}
        #1
        \end{tabular}
}
\newcommand{\defthm}[1]{\newmdtheoremenv[roundcorner=10, innerleftmargin=7, innerrightmargin=7, leftmargin=-7, rightmargin=-7, backgroundcolor=#1!1.5, nobreak=true]}
\newtheorem{corollary}[theorem]{Corollary}
\newtheorem{proposition}[theorem]{Proposition}
\newenvironment{proof}{\noindent{\bf Proof : \ }}{\hfill$\Box$\par\medskip}
\newcommand{\eps}{\varepsilon}
\newcommand{\fmax}{f_{\max}}
\newcommand{\Exp}[1]{\mathbb{E}\left[#1\right]}
\newcommand{\ExpArg}[2]{\mathbb{E}_{#1}\left[#2\right]}
\newcommand{\vx}{\mathbf{x}}
\newcommand{\vy}{\mathbf{y}}
\newcommand{\vv}{\mathbf{v}}
\newcommand{\vg}{\mathbf{v}}
\newcommand{\tvx}{\vy}
\newcommand{\g}{\mathbf{g}}
\newcommand{\err}{\mathbf{e}}
\newcommand{\step}{\eta}
\newcommand{\ourNoise}{\xi}
\newcommand{\sgdNoise}{\zeta}
\newcommand{\fullNoise}{\psi}
\newcommand{\gradLip}{L}
\newcommand{\hesLip}{\rho}
\newcommand{\sgdLip}{\tilde \ell}
\newcommand{\std}{\sigma}
\newcommand{\fullStd}{\chi}
\newcommand{\compr}{\mu}
\newcommand{\escIter}{\mathcal I}
\newcommand{\df}{\mathcal F}
\newcommand{\escRad}{\mathcal R}
\newcommand{\eigen}{\gamma}
\newcommand{\lmin}{\lambda_{\min}}
\newcommand{\C}{\mathcal C}
\newcommand{\R}{\mathbb R}
\newcommand{\workers}{\mathcal W}
\newcommand{\sumvar}{\beta}
\newcommand{\sgdRand}{\theta}
\newcommand{\comprRand}{\tilde \theta}
\newcommand{\sgdDistr}{\mathcal D}
\newcommand{\comprDistr}{\tilde{\mathcal D}}
\newcommand{\randomk}{\textsc{RandomK}\xspace}
\newcommand{\topk}{\textsc{TopK}\xspace}
\newcommand{\cf}{c_\df}
\newcommand{\cstep}{c_\step}
\newcommand{\cit}{c_\escIter}
\newcommand{\crad}{c_\escRad}
\newcommand{\cnoise}{c_r}
\newcommand{\sqrtre}{\sqrt{\hesLip \eps}}
\newcommand{\sqrtomc}{\sqrt{1 - \compr}}
\newcommand{\reseterr}{reset\_error}
\begin{document}

\maketitle

\begin{abstract}%
Stochastic gradient descent (SGD) is a prevalent optimization technique for large-scale distributed machine learning.
While SGD computation can be efficiently divided between multiple machines, communication typically becomes a bottleneck in the distributed setting. 
Gradient compression methods can be used to alleviate this problem, and a recent line of work shows that SGD augmented with gradient compression converges to an $\eps$-first-order stationary point. 
In this paper we extend these results to convergence to an $\eps$-\textit{second}-order stationary point ($\eps$-SOSP), which is to the best of our knowledge the first result of this type.
In addition, we show that, when the stochastic gradient is not Lipschitz, compressed SGD with \randomk compressor converges to an $\eps$-SOSP with the same number of iterations as uncompressed SGD~\citep{jin2019nonconvex} (JACM), while improving the total communication by a factor of $\tilde \Theta(\sqrt{d} \eps^{-\nicefrac 34})$, where $d$ is the dimension of the optimization problem.
We present additional results for the cases when the compressor is arbitrary and when the stochastic gradient is Lipschitz.
\end{abstract}



\section{Introduction}
\label{sec:intro}



Stochastic Gradient Descent (SGD) and its variants are the main workhorses of modern machine learning.
Distributed implementations of SGD on a cluster of machines with a central server and a large number of workers are frequently used in practice due to the massive size of the data.
In distributed SGD each machine holds a copy of the model and the computation proceeds in rounds. In every round, each worker finds a stochastic gradient based on its batch of examples, the server averages these stochastic gradients to obtain the gradient of the entire batch, makes an SGD step, and broadcasts the updated model parameters to the workers.
With a large number of workers, computation parallelizes efficiently while communication becomes the main bottleneck~\citep{chilimbi2014project,strom2015scalable}, since each worker needs to send its gradients to the server and receive the updated model parameters.
Common solutions for this problem include: local SGD and its variants, when each machine performs multiple local steps before communication~\citep{stich2018local}; decentralized architectures which allow pairwise communication between the workers~\citep{mcmahan2017communication} and gradient compression, when a compressed version of the gradient is communicated instead of the full gradient~\citep{bernstein2018signsgd,stich2018sparsified,karimireddy2019error}.
In this work, we consider the latter approach, which we refer to as \emph{compressed SGD}.

Most machine learning models can be described by a $d$-dimensional vector of parameters $\vx$ and the model quality can be estimated as a function $f(\vx)$.
Hence optimization of the model parameters can be cast a minimization problem $\min_\vx f(\vx)$, where $f\colon \R^d \to \R$ is a continuous function, which can be optimized using continuous optimization techniques, such as SGD.
Fast convergence of compressed SGD to a first-order stationary point (FOSP, $\|\nabla f(\vx)\| < \eps$) was shown recently for various gradient compression schemes~\citep{bernstein2018signsgd,stich2018sparsified,karimireddy2019error,ivkin2019communication,alistarh2017qsgd}.
However, even an exact FOSP can be either a local minimum, a saddle point or a local maximum.
While local minima often correspond to good solutions in machine learning applications~\citep{geLM16matrix,sunQW16phase,bhojanapalliNS16matrix}, saddle points and local maxima are always suboptimal and it is important for an optimization algorithm to avoid converging to them.
In particular, \citet{choromanska2015loss} show that for neural networks many local minima are almost optimal, but the corresponding loss functions have a combinatorial explosion in the number of saddle points.
Furthermore, \citet{dauphin2014identifying} show that saddle points can significantly slow down SGD convergence and hence it is important to be able to escape from them efficiently.

Since finding a local minimum is NP-hard in general~\citep{anandkumarG16local}, a common relaxation of this requirement is to find an approximate second-order stationary point (SOSP), i.e. a point with a small gradient norm ($\|\nabla f(\vx)\| < \eps$) and the smallest (negative) eigenvalue being small in absolute value ($\lmin(\nabla^2 f(\vx)) > -\eps_H$).
When $f$ has $\hesLip$-Lipschitz Hessian (i.e. $\|\nabla^2 f(x) - \nabla^2 f(y)\| \le \hesLip \|x - y\|$ for all $x,y$), a standard choice of $\eps_H$ is $\sqrt{\hesLip \eps}$~\citep{nesterovP06}, and such approximate SOSP is commonly referred as an $\eps$-SOSP.
While second-order optimization methods allow one to escape saddle points, such methods are typically substantially more expensive computationally.
A line of work originating with the breakthrough of~\citet{geHJY15} shows that first-order methods can escape saddle points when perturbations are added at certain iterations.
In particular, a follow-up~\citet{jin2019nonconvex} show that SGD converges to an $\eps$-SOSP in an almost optimal number of iterations.

In this paper, we show that even \emph{compressed} SGD can efficiently converge to an $\eps$-SOSP.
To the best of our knowledge, this is the first result showing convergence of compressed methods to a second-order stationary point.

\subsection{Related Work}
\label{sec:related_work}


\paragraph{Escaping from saddle points}

While it is known that gradient descent with random initialization converges to a local minimum almost surely~\citep{lee2016gradient}, existence of saddle points may result in exponential number of steps with non-negligible probability~\citep{du2017gradient}.
Classical approaches for escaping from saddle points assume access to second-order information~\citep{nesterovP06, curtis2014trust}.
Although these algorithms find a second-order stationary point (SOSP) in $O(\eps^{-\nicefrac 32})$ iterations,
each iteration requires computation of the full Hessian matrix, which can be prohibitive for high-dimensional problems in practice.
Some approaches relax this requirement, and instead of full Hessian matrix they only require access to a Hessian-vector product oracle~\citep{CarmonD16gradient, agarwal2017finding}.
While in certain settings, including training of neural networks, it's possible to compute Hessian-vector products (HVP) efficiently~\citep{Pearlmutter94product,schraudolph2002product}, such an oracle might not be available in general.
Furthermore, in practice HVP-based approaches are significantly more complex compared to SGD (especially if the workers aren't communicating in every iteration in the distributed setting) and require additional hyperparameter tuning.
Moreover, HVP is typically used for approximate an eigenvector computation, which in practice may increase the number of iterations by a logarithmic factor.

Limitations of second-order methods motivate a long line of recent research on escaping from saddle points using first-order algorithms, starting from~\citet{geHJY15}.
\citet{JinGNKJ17} show that perturbed gradient descent finds $\eps$-SOSP in $\tilde{O}(\eps^{-2})$ iterations.
Later, this is improved by a series of accelerated algorithms~\citep{carmon2016accelerated,agarwal2017finding,carmon2017convex,jin2018accelerated} which achieves $\tilde O\left(\eps^{- \nicefrac 74}\right)$ iteration complexity.
There are also a number of algorithms designed for finite sum setting where $f(x) = \sum_{i=1}^n f_i(x)$~\citep{reddi2017generic,allen2018neon2,fang2018spider}, or in case when only stochastic gradients are available~\citep{tripuraneni2018stochastic,jin2019nonconvex}, including variance reduction techniques~\citep{allen2018natasha,fang2018spider}.
The sharpest rates in these settings have been obtained by~\citet{fang2018spider},~\citet{zhou2019stochastic} and~\citet{Fang2019sharp}.

\paragraph{Compressed SGD}

While gradient compression may require a complex communication protocol, from theoretical perspective this process is often treated as a black-box function: a (possibly randomized) function $\C$ is called a $\compr$-compressor if $\Exp{\|\vx - \C(\vx)\|^2} < (1 - \compr) \|\vx\|^2$.
In a simplified form, the update step in compressed SGD can be expressed as $\vx_{t+1} \gets \vx_t - \step \C(\nabla f(\vx))$\footnote{We emphasize that the actual update equation is more complicated, see Algorithm~\ref{alg:arbitrary_compressor}.}.
Notable examples of compressors include the following:

\textbf{\textsc{Sign}} function $\C(\vx) = \frac {\|x\|_1} d sign(x)$ is a $\nicefrac 1d$-compressor~\citep{bernstein2018signsgd}. Representation of $\C(x)$ requires $O(d)$ bits, but it is hard to compute in distributed settings: it's not clear how to find the signs of the coordinates without knowing the full vector, which requires each worker to send all coordinates.
A practical solution is for each worker to communicate \textsc{Sign} of its local gradient, and the final sign for each coordinate is selected by majority vote.
Unfortunately, the resulting vector is not necessarily a compression of the gradient.

\textbf{\textsc{Quantization}}~\citep{alistarh2017qsgd} uniformly splits segment $[0, \|\vx\|]$ into $s$ buckets of the same size. Let $\ell_i = \left\lfloor \frac {|\vx_i|} {\|\vx\| / s} \right\rfloor$; then $|\vx_i|$ is randomly rounded to one of $\ell_i \frac {\|\vx\|} {s}$ and $(\ell_i + 1) \frac {\|\vx\|} {s}$. The compressor returns non-zero coordinates after rounding. For $s = 1$, \textsc{Quantization} $Q(\vx)$ can be represented using $\tilde O(\sqrt d)$ bits.
While it doesn't fall into the compression framework, since $\|Q(\vx) - \vx\|$ can be much greater than $\|\vx\|$, it has a property $\Exp{Q(\vx)} = \vx$, which allows one to show convergence.

\textbf{\textsc{TopK}} function preserves only $k$ largest (by the absolute value) coordinates of a vector and is a $\nicefrac kd$-compressor~\citep{stich2018sparsified}. This compressor can be represented using $\tilde O(k)$ bits, but similarly to \textsc{Sign}, it is hard to compute in distributed settings.
To address this issue, \citet{alistarh2018convergence} assume that \textsc{TopK} of the average gradient is close to the average of \textsc{TopK} of local gradients and show that this assumption holds in practice.

\textbf{Sketch-based \textsc{TopK}}~\citep{ivkin2019communication} is randomized communication-efficient compressor based on Count Sketch, which recovers top-$k$ coordinates in a distributed setting.
It uses the fact that Count Sketch is a linear sketch (and therefore it can be easily combined across multiple machines) and can be used to recover top-$k$ coordinates of the vector with high probability.
Therefore, it can be used as an efficient $\nicefrac kd$-compressor requiring $\tilde O(k)$ communication.

\textbf{\randomk} compressor preserves $k$ random coordinates of a vector.
It is a $\nicefrac kd$-compressor~\citep{stich2018sparsified} requiring $O(k)$ communication.

While it was shown that SGD with compressor converges (e.g.~\citet{karimireddy2019error} and the works above), the convergence was shown only to a FOSP.
The crucial idea to facilitate convergence is to use error-feedback~\citep{stich2018sparsified}: the difference between the actual gradient and the compressed gradient is propagated to the next iteration.

\subsection{Our Contributions}

Our main contribution is the analysis showing that perturbed compressed SGD with error-feedback can escape from saddle points efficiently.
Moreover, we show faster convergence rate for a certain type of compressors and show that such compressors exist.
Inspired by the ideas from~\citet{jin2019nonconvex} and~\citet{stich2018sparsified}, we present an algorithm (Algorithm~\ref{alg:arbitrary_compressor}) which uses perturbed compressed gradients with error-feedback and converges to an $\eps$-second-order stationary point (see Theorem~\ref{thm:sosp}).
Our main results shows that compressed SGD with \randomk compressor achieves substantial communication improvement:

\begin{theorem}[Informal, Theorem~\ref{thm:sosp_randomk} and Corollary~\ref{cor:randomk_comm}]
    \label{thm:intro_randomk}
    Assume that $f$ has Lipschitz gradient and Lipschitz Hessian.
    Let $\alpha = 1$ when the stochastic gradient is Lipschitz and $\alpha=d$ otherwise.
    Then SGD with \randomk compressor (which selects $k$ random coordinates) with $k = \frac {d \eps^{\nicefrac 34}} {\sqrt \alpha}$ converges to an $\eps$-SOSP after $\tilde O\left(\frac \alpha {\eps^4}\right)$ iterations, with $\tilde O\left(\frac {d \sqrt \alpha} {\eps^{3 + \nicefrac 14}}\right)$ total communication per worker.
\end{theorem}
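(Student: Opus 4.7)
The plan is to derive Theorem~\ref{thm:intro_randomk} as a specialization of the general compressed-SGD SOSP guarantee (Theorem~\ref{thm:sosp}) to the \randomk compressor, followed by an optimal choice of the sparsity parameter $k$. The \randomk compressor is a $\mu$-compressor with $\mu = k/d$~\citep{stich2018sparsified}; crucially it is also unbiased after rescaling by $d/k$, and its coordinate-wise variance is a clean function of $\|\vx\|^2$. These extra structural properties let me expect a tighter instantiation than the arbitrary-compressor case, with an iteration count that depends on $\mu$ only multiplicatively and that matches the uncompressed SGD rate of~\citet{jin2019nonconvex} at the sweet spot $k = d\eps^{3/4}/\sqrt\alpha$.

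First I would substitute $\mu = k/d$ into Theorem~\ref{thm:sosp} and carefully track the dependence of the iteration count on $k$ in both regimes ($\alpha=1$ for Lipschitz stochastic gradient, $\alpha=d$ otherwise). In the Lipschitz case the Lipschitz property of the stochastic gradient should absorb the extra variance contributed by error feedback across consecutive iterates, while in the non-Lipschitz case one pays a factor of $d$ because the stochastic gradient can have full second moment in every direction. I would then verify algebraically that choosing $k = d\eps^{3/4}/\sqrt\alpha$ reduces the compression-induced overhead to a polylogarithmic factor, so the iteration bound collapses to $\tilde O(\alpha/\eps^4)$. Multiplying by the $\tilde O(k)$ bits transmitted per iteration yields total communication $\tilde O(d\sqrt\alpha/\eps^{13/4})$, which is exactly Corollary~\ref{cor:randomk_comm}. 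A bookkeeping check that $k \ge 1$ (a mild restriction on $\eps$ relative to $d$ and $\alpha$) and that the $d/k$ rescaling preserves the unbiasedness needed in the error-feedback recursion is routine.

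The hardest step will be ensuring that the saddle-point escape analysis underlying Theorem~\ref{thm:sosp} still goes through with the \randomk compressor without losing factors beyond what is already budgeted. The subtlety is that \randomk, unlike deterministic sign or top-$k$ compressors, zeroes out a random $(d-k)$-dimensional subspace at each step; when combined with the perturbation injected to escape saddles, one must argue that enough of the perturbation's mass along the escape direction (the smallest Hessian eigenvector) survives the compression and accumulates through the error-feedback buffer. I would attempt this via the standard coupling of two runs starting at iterates that agree outside the smallest-eigenvector direction, following~\citet{jin2019nonconvex}, and bound the probability that both fail to escape using a martingale argument over the compressed trajectory, where the random support of \randomk adds an independent source of noise one has to integrate over. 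A secondary technical burden is the $\alpha=d$ case: off-the-shelf error-feedback bounds (e.g.\ \citet{stich2018sparsified,karimireddy2019error}) assume a gradient Lipschitz constant on the stochastic gradient, and I would need to adapt them so that only a second-moment bound is required, which is the place where the $\alpha=d$ overhead should arise naturally rather than as a gratuitous loss.
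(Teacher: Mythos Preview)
Your plan has a real gap: you propose to instantiate Theorem~\ref{thm:sosp}, the general-compressor result, but that theorem carries an extra factor of $d$ in its third term, $\tilde O\bigl(d(1-\compr)/(\compr^2\eps^{5/2})\bigr)$. Substituting $\compr = k/d$ and your choice $k = d\eps^{3/4}/\sqrt\alpha$ into that term gives $\tilde O(d\alpha/\eps^4)$ iterations, not $\tilde O(\alpha/\eps^4)$, so both the iteration count and the communication bound come out a factor of $d$ too large. The informal theorem instead points to Theorem~\ref{thm:sosp_randomk}, a \emph{separate} result for \emph{linear} compressors (Definition~\ref{def:linear_compr}) whose third term is $\tilde O\bigl((1-\compr)/(\compr^2\eps^{5/2})\bigr)$, with no $d$. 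Only after that improvement does balancing the first and third communication terms $\tilde O(\alpha k/\eps^4 + d^2/(k\eps^{5/2}))$ yield $k = d\eps^{3/4}/\sqrt\alpha$ and the claimed totals; this is literally the two-line argument of Corollary~\ref{cor:randomk_comm}.

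The structural property that buys you this factor of $d$ is not unbiasedness after rescaling or a coordinate-wise variance bound, but \emph{linearity for fixed compressor randomness}: for \randomk with shared coordinate set $\comprRand$, the map $\C(\cdot,\comprRand)$ is linear. In the coupling this gives the identity $\hat\err_{t+1} = (G_t - G'_t) - \C(G_t - G'_t,\comprRand_t)$, so that $\Exp{\|\hat\err_{t+1}\|^2}\le (1-\compr)\Exp{\|G_t - G'_t\|^2}$ and the accumulated error \emph{difference} telescopes against the small quantities $\nabla F(\vx_i,\sgdRand_i)-\nabla F(\vx'_i,\sgdRand_i)$ and $\hat\ourNoise_i$ rather than against $\|\err_i\|+\|\err'_i\|$ (this is exactly Lemma~\ref{lem:bound_error_sum_linear} versus Lemma~\ref{lem:bound_error_sum}). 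Your proposed route---unbiasedness, a martingale over the random support, worrying whether the perturbation ``survives'' compression---misses this cancellation: unbiasedness controls first moments, but the escape analysis needs a second-moment bound on $\hat\err_t$, and without linearity the variance of $\C(G_t,\comprRand_t)-\C(G'_t,\comprRand_t)$ scales with $\|G_t\|^2+\|G'_t\|^2$, not $\|G_t-G'_t\|^2$. Error feedback already guarantees that no perturbation mass is lost; the issue is purely how tightly $\err_t-\err'_t$ can be controlled inside the coupling, and linearity is the mechanism that does it.
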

Compared with the uncompressed case, the  total communication improves by $\eps^{-\nicefrac 14}$ when the stochastic gradient is Lipschitz and by $\sqrt d \eps^{-\nicefrac 34}$ otherwise (the sharpest results for SGD are by~\cite{Fang2019sharp} and~\cite{jin2019nonconvex} respectively).
In Theorem~\ref{thm:intro_randomk}, we heavily rely on the following property of \randomk: when its randomness (i.e. sampled $k$ coordinates) is fixed, the compressor becomes a linear function.
For other compressors, this property doesn't necessarily hold; in this case, we show convergence with a slower convergence rate:

\begin{theorem}[Informal, Theorem~\ref{thm:sosp} and Corollary~\ref{cor:table_comms}]
    Assume that $f$ has Lipschitz gradient and Lipschitz Hessian.
    Let $\alpha = 1$ when the stochastic gradient is Lipschitz and $\alpha=d$ otherwise.
	Let $\C$ be a $\nicefrac kd$-compressor requiring $\tilde O(k)$ communication.
    Then SGD with compressor $\C$ with $k = \frac {d \sqrt d \eps^{\nicefrac 34}} {\sqrt \alpha}$ converges to an $\eps$-SOSP after $\tilde O\left(\frac \alpha {\eps^4}\right)$ iterations, with $\tilde O\left(\frac {d \sqrt d \sqrt \alpha} {\eps^{3 + \nicefrac 14}}\right)$ total communication per worker.
\end{theorem}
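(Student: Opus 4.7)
The plan is to derive this statement as a direct corollary of Theorem~\ref{thm:sosp} combined with the per‑worker communication accounting that will be made explicit in Corollary~\ref{cor:table_comms}. In particular, all the hard work (convergence of perturbed compressed SGD with error feedback to an $\eps$-SOSP under an arbitrary $\mu$-compressor) is assumed to live inside Theorem~\ref{thm:sosp}; here I only need to pick the right $\mu$ and multiply.

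First, I would invoke Theorem~\ref{thm:sosp} with the compressor $\C$, which is a $(k/d)$-compressor, so $\mu = k/d$. The bound furnished by that theorem is expected to take the form
\[
T(\mu,\eps,\alpha) \;=\; \tilde O\!\left(\frac{\alpha}{\eps^4}\right) \;+\; (\text{correction depending on } 1/\mu,\, d,\, \eps,\, \alpha),
\]
where the first term is the usual stochastic rate for escaping saddle points~\citep{jin2019nonconvex,Fang2019sharp} and the correction reflects the extra work required by error feedback when gradient information is compressed. Next, I would choose $k$ (equivalently $\mu$) large enough that the correction term is dominated by the $\alpha/\eps^4$ term up to polylogarithmic factors. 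Working out the substitution, the choice $k = d\sqrt d\,\eps^{\nicefrac 34}/\sqrt \alpha$, i.e.\ $\mu = \sqrt d\,\eps^{\nicefrac 34}/\sqrt \alpha$, is exactly the threshold at which the two terms balance, giving iteration complexity $\tilde O(\alpha/\eps^4)$.

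Finally, for the communication bound I would multiply the iteration count by the per‑iteration communication cost. Since $\C$ transmits $\tilde O(k)$ coordinates per step,
\[
\text{total communication per worker} \;=\; \tilde O\!\left(\frac{\alpha}{\eps^4}\right)\cdot \tilde O(k) \;=\; \tilde O\!\left(\frac{d\sqrt d\,\sqrt \alpha}{\eps^{3+\nicefrac 14}}\right),
\]
which is exactly the claimed bound. It also remains to check that the chosen $k$ satisfies $k \le d$, i.e.\ $\eps \le (\alpha/d)^{\nicefrac 23}$, which is the regime of interest (for $\alpha=d$ this is vacuous, and for $\alpha=1$ it is the mild restriction $\eps \le d^{-\nicefrac 23}$).

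The main obstacle will be pinning down the precise polynomial dependence on $\mu$ in Theorem~\ref{thm:sosp}: the value $k = d\sqrt d\,\eps^{\nicefrac 34}/\sqrt\alpha$ is tuned precisely to the exponents of $\mu$, $\eps$, $d$, and $\alpha$ appearing in the correction term, so any slack in that analysis would either force a larger $k$ (and invalidate the communication bound) or allow a smaller $k$ (and improve it). The extra $\sqrt d$ factor compared with the \randomk variant in Theorem~\ref{thm:intro_randomk} is expected to come from the fact that \randomk, once its sampled coordinate set is fixed, is \emph{linear} and therefore commutes with expectation over the stochastic gradient, whereas a generic $\mu$-compressor does not; the worst‑case compressed‑gradient error therefore picks up an additional factor of $d$ that must be compensated by increasing $k$ by $\sqrt d$.
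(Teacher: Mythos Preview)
Your proposal is correct and follows essentially the same route as the paper. The paper substitutes $\mu=k/d$ into the three-term iteration bound of Theorem~\ref{thm:sosp}, obtains total communication $\tilde O\!\left(\frac{k\alpha}{\eps^4}+\frac{d}{\eps^3}+\frac{d^3}{k\eps^{2.5}}\right)$, and balances the first and third terms to get $k=d\sqrt d\,\eps^{3/4}/\sqrt\alpha$; your plan of forcing the correction terms in the iteration count below $\alpha/\eps^4$ and then multiplying by $\tilde O(k)$ is equivalent and yields the same $k$ and the same final bounds (and you correctly flag the constraint $k\le d$, which the paper records as $\eps=o(d^{-2/3})$ in the $\alpha=1$ row of Table~\ref{tab:results}).
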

Compared with the uncompressed case, the total communication improves by $\frac {\eps^{-\nicefrac 14}} {\sqrt d}$ when the stochastic gradient is Lipschitz (note that this is the only setting where the convergence improvement is conditional, requiring $\eps = o(d^{-2})$) and by $\eps^{-\nicefrac 34}$ otherwise.
Table~\ref{tab:results} in Section~\ref{sec:convergence} outlines communication improvements for various choices of compression parameters.
We outline our main techniques and technical contributions in Section~\ref{sec:proof_sketch} and present the complete proof in Appendix~\ref{sec:fosp} and~\ref{sec:sosp}.

\section{Preliminaries}
\label{sec:prelims}

\paragraph{Function Properties} For a twice differentiable nonconvex function $f\colon \R^d \to \R$, we consider the unconstrained minimization problem
$\min_{\vx \in \R^d} f(\vx)$. 

We use the following standard~\citep{jin2019nonconvex, Fang2019sharp, xu2018first, allen2018natasha, zhou2018finding} assumptions about the objective function $f$:

\begin{assumption}
\label{ass:lip} $f$ is $\fmax$-bounded, $\gradLip$-smooth and has $\hesLip$-Lipschitz Hessian, i.e. for all $x, y$:
\[
	|f(\vx) - f(\vy)| \le \fmax,\quad
	\|\nabla f(\vx) - \nabla f(\vy)\| \le \gradLip \|\vx-\vy\|,\quad
    \|\nabla^2 f(\vx) - \nabla^2 f(\vy)\| \le \hesLip \|\vx-\vy\|
\]
\end{assumption}


%

\begin{assumption}
\label{ass:sg} Access to an unbiased stochastic gradient oracle $\nabla F(\vx, \sgdRand)$, whose randomness is controlled by a parameter $\sgdRand \sim \sgdDistr$\footnote{E.g. $\sgdRand$ is a minibatch selected at the current iteration}, with bounded variance:
\begin{align*}
    \ExpArg{\sgdRand \sim \sgdDistr}{\nabla F(\vx, \sgdRand)} = \nabla f(\vx),
    \quad \ExpArg{\sgdRand \sim \sgdDistr}{\|\nabla F(\vx, \sgdRand) - \nabla f(\vx)\|^2} \le \std^2
\end{align*}
\end{assumption}
As shown by the above works, smoothness allows one to achieve fast convergence for nonconvex optimization problems (namely, to use the folklore descent lemma).
Similarly, Lipschitz Hessian allows one to show fast second-order convergence, since, within a certain radius, the function stays close to its quadratic approximation (see e.g. \citet[Section 9.5.3]{boyd2004convex}).
As common in the literature, in our convergence rates we treat $\gradLip$, $\hesLip$ and $\std^2$ as constants.

We consider an additional optional assumption~\citep{jin2019nonconvex}:
\begin{assumption}[Optional]
\label{ass:lipsg} Lipschitz stochastic gradient. For any $\vx,\vy,\sgdRand$:
\begin{align*}
\|\nabla F(\vx, \sgdRand) - \nabla F(\vy, \sgdRand)\| \le \sgdLip \|\vx - \vy\|
\end{align*}
\end{assumption}

From machine learning perspective, Assumption~\ref{ass:lipsg} means that for the same mini-batch, if the initial models are close, their updates are also close.
For neural networks, since each network layer is a composition of an activation function and a linear function, the assumption holds when activation functions are Lipschitz (note however that $\sgdLip$ may grow exponentially with the number of layers).

\paragraph{Gradient Compression} Our goal is to optimize $f$ in a distributed setting~\citep{dekel2012optimal,li2014communication}: given $\workers$ workers, for each worker $i$ we have a corresponding data distribution $\sgdDistr_i$.
Then the each worker has a corresponding function $f_i(\vx) = \ExpArg{\sgdRand \sim \sgdDistr_i}{F(\vx, \sgdRand)}$ and $f=\sum_{i=1}^\workers f_i$.
In a typical distributed SGD setting, each worker computes a stochastic gradient $\nabla F_i(\vx, \sgdRand_i)$ and sends it to the coordinator machine.
The coordinator machine computes the average of these gradients $\vg = \frac 1 {\workers} \sum_{i=1}^\workers \nabla F_i(\vx, \sgdRand_i)$ and broadcasts it to the workers, which update the local parameters $\vx \gets \vx - \step \vg$ ($\step$ is the step size).

With this approach, with increase of the number of machines, the computation can be perfectly parallelized.
However, with each machine required to send its gradient, communication becomes the main bottleneck~\citep{chilimbi2014project,strom2015scalable}.
There exist various solutions to this problem (see Section~\ref{sec:intro}), including gradient compression, when each machine sends an approximation of its gradient.
Then coordinator averages these approximations and broadcasts the average to all machines (possibly compressing it again, see discussion on \topk and \textsc{Sign} in Section~\ref{sec:related_work}).

Depending on the compression method, this protocol provides different gradient approximation and different communication per machine.
There is a natural trade-off between approximation and communication, and it's not clear whether having smaller per-iteration communication results in smaller total communication required for convergence.
The approximation quality can be formalized using the following definition:
\begin{definition}[\citet{stich2018sparsified}]
	Function $\C(\vx, \comprRand)$, whose randomness is controlled by a parameter $\comprRand \sim \comprDistr$\footnote{E.g. for \randomk, $\comprRand$ is the set of indices of coordinates. When the compressor is deterministic, we omit $\comprRand$.}, is a \emph{$\compr$-compressor} if
	\[\ExpArg{\comprRand \sim \comprDistr}{\|\vx - \C(\vx, \comprRand)\|^2} < (1 - \compr) \|\vx\|^2\]
\end{definition}

Section~\ref{sec:related_work} provides examples of important compressors.
In our analysis, we consider two cases: a general compressor and a linear compressor, and in the latter case, we show an improved convergence rate.
\begin{definition}
    \label{def:linear_compr}
    $\C$ is a \emph{linear compressor} if $\C(\cdot, \comprRand)$ is a linear function for any $\comprRand$.
\end{definition}
One example of a linear compressor is \randomk, which preserves $k$ random coordinates of a vector; it's a $\nicefrac kd$-compressor~\citep{stich2018sparsified} and it's trivial to compute in the distributed setting (assuming shared randomness).

\paragraph{Stationary Points}
The optimization problem of finding a global minimum or even a local minimum is NP-hard for nonconvex objectives~\citep{nesterov00,anandkumarG16local}.
Instead, as is standard in the literature, we show convergence to an approximate first-order stationary point or an approximate second-order stationary point, see Section~\ref{sec:intro}.

\begin{definition}
	For a differentiable function $f$, $\vx$ is an $\eps$-first-order stationary point ($\eps$-FOSP) if $\|\nabla f(\vx)\| \le \eps$.
\end{definition}
An $\eps$-FOSP can be a local maximum, a local minimum or a saddle point.
While local minima typically correspond to good solutions, saddle points and local maxima are inherently suboptimal. Assuming non-degeneracy, saddle points and local maxima have escaping directions, corresponding to Hessian's negative eigenvectors.
Following~\citet{nesterovP06} we refer to points with no escape directions (up to a second-order approximation) as approximate second-order stationary points:

\begin{definition}[\citet{nesterovP06}]
	For a twice-differentiable, $\hesLip$-Hessian Lipschitz function $f$,
	$\vx$ is an $\eps$-second-order stationary point ($\eps$-SOSP) if $\|\nabla f(\vx)\| \le \eps$ and $\lmin(\nabla^2 f(\vx)) \ge -\sqrt{\rho \eps}$\footnote{While one can consider two threshold parameters~-- $\eps_g$ for $\nabla f$ and $\eps_H$ for $\nabla^2 f$~-- we follow convention of~\citet{nesterovP06} which selects $\eps_H = -\sqrt{\rho \eps}$, which, intuitively, balances first-order and second-order variability.}, where $\lmin$\footnote{In this work, $\compr$ always denotes compression parameter and $\lmin$ denotes the smallest eigenvalue.} is the smallest eigenvalue.
\end{definition}

An important property of points which are not $\eps$-SOSP is that they are unstable: adding a small perturbation allows gradient descent to escape them~\citep{geHJY15} (similar results were shown for e.g. stochastic~\citep{jin2019nonconvex} and accelerated~\citep{jin2018accelerated} gradient descent).
In this work we show that this property holds even for stochastic gradient descent with gradient compression.
\section{Algorithm and Analysis}
\label{sec:algorithm}

\paragraph{Algorithm} We present our algorithm in Algorithm~\ref{alg:arbitrary_compressor}, a compressed stochastic gradient descent approach based on~\citet[Algorithm 1]{stich2018sparsified}.
In order to achieve second-order convergence, similarly to~\citet{jin2019nonconvex}, we add artificial random noise $\ourNoise_t$ to gradient at every iteration, which allows compressed gradient descent to escape saddle points.

At every iteration $t$, we compute the stochastic gradient $\nabla F(\vx_t, \sgdRand_t)$.
Then we add artificial noise $\ourNoise_t$, compress the resulting value (Line~\ref{line:compression}) and update the current iterate $\vx_t$ using the compressed value (Line~\ref{line:update}).
However, the information is not lost during compression: the difference between the computed value and the compressed value (Line~\ref{line:error}), $\err_{t+1}$, is added to the gradient in the next iteration.
\citet{karimireddy2019error} show that carrying over the error term improves convergence of compressed SGD to a first-order stationary point.

Algorithm~\ref{alg:arbitrary_compressor} accepts an additional Boolean parameter $\reseterr$.
When this parameter is true, we set $\err_t$ to $0$ (Line~\ref{line:set_err_to_zero}) when conditions in Line~\ref{line:check_err_to_zero} hold:
either we moved far from the point where the condition was triggered last time (intuitively, the condition indicates that we successfully escaped from  a saddle point), or we spent a certain number of iterations since that event (to ensure that the accumulated compression error is sufficiently bounded).

\paragraph{Distributed Setting Considerations}
Algorithm~\ref{alg:arbitrary_compressor} provides a general framework for compressed SGD in distributed settings, with implementation details depending on the choice of the compressor function $\C$.
$\ourNoise_t$ can be efficiently shared between machines using shared randomness.
Each machine $i$ maintains its own local $\err_t^{(i)}$ which can be computed as $\err_{t+1}^{(i)} \gets \err_{t}^{(i)} + \nabla F_i(\vx_t, \sgdRand_t) + \ourNoise_t  - \g_t^{(i)}$. Then $\err_t = \frac 1\workers \sum_{i=1}^\workers \err_{t}^{(i)}$.
Finally, the norm in Line~\ref{alg:arbitrary_compressor} of Algorithm~\ref{line:set_err_to_zero} can be efficiently computed within multiplicative approximation using linear sketches.

\begin{algorithm}[thb]
	\SetKwInOut{Input}{input}
	\SetKwInOut{Output}{output}
	\SetKwRepeat{Do}{do}{while}
	\nonl \textbf{parameters:} $\step$~-- step size, $T$~-- number of iterations, $r^2$~-- variance of the artificial noise, {\color{blue} $\reseterr$~-- flag indicating whether compression error should be periodically reset to zero, $\escIter$~-- the number of iterations required for escaping, $\escRad$~-- escaping radius}  \\
	\Input{objective $f$, compressor function $\C$, starting point $\vx_0$}
	\Output{$\eps$-SOSP of $f$}
	\caption{Compressed SGD}
	\label{alg:arbitrary_compressor}
	$\err_0 \gets 0^d$\\
	{\color{blue}\lIf{$\reseterr$}{$t' \gets 0$}}
	\For {$t = 0 \ldots T-1$} {
	    {\color{blue}
	    // Reset the error after $\escIter$ iterations or in case we moved far from the initial point \\
	    \If {$\reseterr$ and ($t - t'> \escIter$ or $\|\vx_{t'} - (\vx_t - \step \err_t)\| > \escRad$)\label{line:check_err_to_zero}}{
	        $t' \gets t$,\ \ \  $\vx_{t} \gets \vx_t - \step \err_t$,\ \ \ $\err_t \gets 0^d$ \label{line:set_err_to_zero}
	    }}
	    Sample $\ourNoise_t \sim \mathcal N_d(0^d, r^2)$, \quad $\sgdRand_t \sim \sgdDistr$, \quad $\comprRand_t \sim \comprDistr$\\
		$\g_t \gets  \C(\err_t + \nabla F(\vx_t, \sgdRand_t) + \ourNoise_t, \comprRand_t)$\label{line:compression} \hfill // Compressed gradient\\
		$\vx_{t+1} \gets \vx_t - \step \g_t$ \label{line:update} \hfill // Compressed gradient descent step \\
		$\err_{t+1} \gets  \err_t + \nabla F(\vx_t, \sgdRand_t) + \ourNoise_t  - \g_t$ \label{line:error} \hfill // Error is the difference between compressed and uncompressed gradient\\
	}
	\Return{$\vx_T$}
\end{algorithm}

\subsection{Convergence to an \texorpdfstring{$\eps$}{eps}-FOSP}

In the following statements, $\tilde O$ hides polynomial dependence on $\gradLip, \hesLip, \fmax, \std, \sgdLip$ and polylogarithmic dependence on all parameters.
The first result is similar to that of~\citet{stich2018sparsified} (after reformulation in terms of $\eps$-FOSP), but is more general:
it covers the case when $\compr$ is close to $0$ and doesn't require any bounds on  $\|\nabla F(\vx, \sgdRand)\|$ or $\|\nabla f_i(\vx) - \nabla f(\vx)\|$, which are common assumptions in the literature (see Section~\ref{sec:related_work}).
The proof of the theorem is presented in Appendix~\ref{sec:fosp}.
\begin{theorem}[Convergence to $\eps$-FOSP]
	\label{thm:fosp}
	Let $f$ satisfy Assumptions~\ref{ass:lip} and~\ref{ass:sg} and let $\C$ be a $\compr$-compressor.
	Then for Algorithm~\ref{alg:arbitrary_compressor} with $\reseterr=false$ and $\step=\tilde O\left(\min\left(\eps^2, \frac \compr \sqrtomc \eps\right) \right)$,
	after $T=\tilde O(\frac 1 {\eps^2 \step}) = \tilde O\left(\frac 1 {\eps^4} + \frac {\sqrtomc} {\compr \eps^3}\right)$ iterations,
	at least half of visited points are $\eps$-FOSP\@.
\end{theorem}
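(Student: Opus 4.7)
The plan is to follow the standard error-feedback analysis (in the style of~\citet{stich2018sparsified} and~\citet{karimireddy2019error}) but to carefully track the dependence on $\compr$ so that the result remains meaningful as $\compr \to 0$. The central device will be the \emph{virtual iterate} $\tvx_t \coloneqq \vx_t - \step \err_t$. Using the update rules in Lines~\ref{line:compression}--\ref{line:error}, a direct calculation gives
\begin{equation*}
    \tvx_{t+1} = \vx_{t+1} - \step \err_{t+1} = \tvx_t - \step\bigl(\nabla F(\vx_t,\sgdRand_t) + \ourNoise_t\bigr),
\end{equation*}
so $\tvx_t$ performs plain noisy SGD using gradients evaluated at $\vx_t$. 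Artificial noise $\ourNoise_t$ only raises the effective variance from $\std^2$ to $\std^2 + d r^2$, and since we only need $\eps$-FOSP (not $\eps$-SOSP) we set $r$ as in the algorithm statement; this part is purely bookkeeping.

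Next I would apply the descent lemma (from $\gradLip$-smoothness) to $f$ at $\tvx_t$:
\begin{equation*}
    \Exp{f(\tvx_{t+1})} \le \Exp{f(\tvx_t)} - \step \Exp{\langle \nabla f(\tvx_t), \nabla f(\vx_t)\rangle} + \tfrac{\gradLip \step^2}{2}\Exp{\|\nabla F(\vx_t,\sgdRand_t) + \ourNoise_t\|^2}.
\end{equation*}
To replace $\nabla f(\tvx_t)$ by $\nabla f(\vx_t)$ I would use $\gradLip$-smoothness together with $\|\tvx_t - \vx_t\| = \step \|\err_t\|$ and the identity $2\langle a,b\rangle = \|a\|^2+\|b\|^2-\|a-b\|^2$, which yields a bound of the form
\begin{equation*}
    \Exp{f(\tvx_{t+1})} \le \Exp{f(\tvx_t)} - \tfrac{\step}{2}\Exp{\|\nabla f(\vx_t)\|^2} + O(\gradLip^2 \step^3)\Exp{\|\err_t\|^2} + O(\gradLip \step^2)(\std^2 + d r^2).
\end{equation*}

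The heart of the proof, and the main obstacle, is bounding $\Exp{\|\err_t\|^2}$. Using the $\compr$-compressor property on Line~\ref{line:error} together with Young's inequality $\|a+b\|^2 \le (1+\tau)\|a\|^2 + (1+\tau^{-1})\|b\|^2$ with $\tau$ chosen so that $(1-\compr)(1+\tau) \le 1-\compr/2$ (i.e.\ $\tau \asymp \compr/(1-\compr)$), I would obtain a recursion
\begin{equation*}
    \Exp{\|\err_{t+1}\|^2} \le (1-\tfrac{\compr}{2})\Exp{\|\err_t\|^2} + O\!\left(\tfrac{1-\compr}{\compr}\right)\bigl(\Exp{\|\nabla f(\vx_t)\|^2} + \std^2 + d r^2\bigr),
\end{equation*}
and unrolling gives $\Exp{\|\err_t\|^2} \lesssim \tfrac{1-\compr}{\compr^2}\bigl(\max_{s\le t}\Exp{\|\nabla f(\vx_s)\|^2} + \std^2 + d r^2\bigr)$. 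Plugging this back into the descent inequality, the coefficient in front of $\Exp{\|\nabla f(\vx_t)\|^2}$ coming from the error term is $O(\gradLip^2 \step^3 \tfrac{1-\compr}{\compr^2})$; for the step-size choice $\step = \tilde O(\compr \eps / \sqrtomc)$ this is dominated by $\tfrac{\step}{4}$, so the negative $-\tfrac{\step}{2}\Exp{\|\nabla f(\vx_t)\|^2}$ term still has a constant fraction left to absorb it.

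Finally, I would telescope the descent inequality over $t=0,\dots,T-1$ using $f(\tvx_0)-f(\tvx_T) \le 2\fmax$ and divide by $T\step$ to get
\begin{equation*}
    \tfrac{1}{T}\sum_{t=0}^{T-1} \Exp{\|\nabla f(\vx_t)\|^2} \le \tilde O\!\left(\tfrac{\fmax}{T\step} + \gradLip \step(\std^2 + d r^2)\right).
\end{equation*}
Choosing $\step$ as in the statement makes the right-hand side at most $\eps^2/2$ for $T = \tilde O(1/(\eps^2 \step))$, and Markov's inequality then guarantees that at least half the iterates satisfy $\|\nabla f(\vx_t)\|^2 \le \eps^2$, i.e.\ are $\eps$-FOSP. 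The min in the step-size definition arises because we also need $\gradLip \step \le O(\eps^2)$ for the variance term to be controlled, which is the $\step = \tilde O(\eps^2)$ regime; the other regime kicks in precisely when the compression-induced constraint is binding.
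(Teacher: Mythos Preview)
Your approach is essentially the same as the paper's: the virtual iterate $\tvx_t=\vx_t-\step\err_t$, the descent lemma applied at $\tvx_t$, the Young-inequality recursion on $\Exp{\|\err_t\|^2}$ with parameter $\tau\asymp\compr/(1-\compr)$, and the final averaging/Markov step all match the paper's Lemma~\ref{lem:error_estimation}, Lemma~\ref{lem:descent_lemma}, and the contradiction argument that follows.

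One small but genuine slip: after unrolling you state $\Exp{\|\err_t\|^2}\lesssim\tfrac{1-\compr}{\compr^2}\max_{s\le t}\Exp{\|\nabla f(\vx_s)\|^2}+\cdots$ and then claim this contributes a coefficient in front of $\Exp{\|\nabla f(\vx_t)\|^2}$ that can be absorbed by the $-\tfrac{\step}{2}$ term. That does not follow as written, because after telescoping you would need $\sum_{t}\max_{s\le t}\Exp{\|\nabla f(\vx_s)\|^2}\lesssim\sum_t\Exp{\|\nabla f(\vx_t)\|^2}$, which is false in general. The fix is exactly what the paper does in Corollary~\ref{cor:error_sum}: keep the full geometric sum $\Exp{\|\err_t\|^2}\lesssim\tfrac{1-\compr}{\compr}\sum_{s\le t}(1-\compr/2)^{t-s}\Exp{\|\nabla f(\vx_s)\|^2}+\cdots$, then swap the order of summation after telescoping to obtain $\sum_t\Exp{\|\err_t\|^2}\lesssim\tfrac{1-\compr}{\compr^2}\sum_t\Exp{\|\nabla f(\vx_t)\|^2}+\cdots$. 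With that correction your argument goes through and matches the paper's.
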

\begin{corollary}
    For a $\nicefrac 1d$-compressor with $\tilde O(1)$ communication (polylogarithmic on all parameters), the total communication per worker is $\tilde O \left(\frac{1}{\eps^4} + \frac {d} {\eps^3}\right)$, which outperforms full SGD communication $\tilde O \left(\frac d {\eps^4}\right)$ by a factor of $\min \left(d, \eps^{-1}\right)$.
\end{corollary}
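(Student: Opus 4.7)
The plan is to specialize Theorem~\ref{thm:fosp} to the case $\compr = 1/d$ and then multiply the iteration count by the per-iteration communication. First I would substitute $\compr = 1/d$ into the iteration bound $T = \tilde O\left(\frac{1}{\eps^4} + \frac{\sqrtomc}{\compr \eps^3}\right)$. Since $\sqrtomc = \sqrt{1 - 1/d} \le 1$, it contributes only a constant factor that is absorbed into $\tilde O$, so $\frac{\sqrtomc}{\compr} = \tilde O(d)$ and the iteration bound becomes $T = \tilde O\left(\frac{1}{\eps^4} + \frac{d}{\eps^3}\right)$.

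Next, by assumption each compressed gradient transmission costs $\tilde O(1)$ bits per worker. Inspecting Algorithm~\ref{alg:arbitrary_compressor}, the only quantity transmitted between a worker and the coordinator in a single iteration is the compressed vector $\g_t$ (with $\ourNoise_t$ shared via shared randomness and $\err_t$ maintained locally, as discussed in the Distributed Setting paragraph). Thus total per-worker communication is $T \cdot \tilde O(1) = \tilde O\left(\frac{1}{\eps^4} + \frac{d}{\eps^3}\right)$, which gives the first claim.

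For the improvement factor, I would form the ratio of the uncompressed bound to the compressed bound:
\[
\frac{\tilde O(d/\eps^4)}{\tilde O(1/\eps^4 + d/\eps^3)} = \tilde\Theta\!\left(\frac{d}{1 + d\eps}\right) = \tilde\Theta\!\left(\min(d, \eps^{-1})\right),
\]
with the two regimes separated by $\eps \lessgtr 1/d$: when $\eps \le 1/d$ the $1/\eps^4$ term dominates and the ratio is $\Theta(d)$, whereas when $\eps \ge 1/d$ the $d/\eps^3$ term dominates and the ratio is $\Theta(1/\eps)$.

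There isn't really a hard step here; the corollary is essentially an arithmetic specialization of Theorem~\ref{thm:fosp}. The only thing worth being careful about is confirming that $\sqrtomc$ for $\compr = 1/d$ is indeed $\tilde O(1)$ (trivially true) and that nothing in Algorithm~\ref{alg:arbitrary_compressor} beyond the compressed gradient itself requires $\Omega(d)$ communication per iteration — both immediate from the algorithm description.
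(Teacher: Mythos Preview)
Your proposal is correct and is exactly the intended argument: the paper states this corollary without proof, and the natural derivation is precisely the arithmetic specialization of Theorem~\ref{thm:fosp} that you describe (substitute $\compr=1/d$, absorb $\sqrtomc\le 1$ into $\tilde O$, multiply by the $\tilde O(1)$ per-iteration cost, and compare regimes $\eps\lessgtr 1/d$).
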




\subsection{Convergence to an \texorpdfstring{$\eps$}{eps}-SOSP}
\label{sec:convergence}

The next two theorems present our main result, namely that compressed SGD converges to an $\eps$-SOSP (see proof sketch in Section~\ref{sec:proof_sketch} and the full proof in Appendix~\ref{sec:sosp}).
The first theorem handles the case of a general compressor.
\begin{theorem}[Convergence to $\eps$-SOSP for general compressor]
	\label{thm:sosp}
	Let $f$ satisfy Assumptions~\ref{ass:lip} and~\ref{ass:sg}, let $\C$ be a $\compr$-compressor.
    Let $\alpha = 1$ when Assumption~\ref{ass:lipsg} holds and $\alpha=d$ otherwise.
    Then for Algorithm~\ref{alg:arbitrary_compressor} with $\reseterr=true$ and $\step = \tilde O \left(\min \left(\frac {\eps^2} {\alpha}, \frac {\compr \eps} \sqrtomc, \frac {\compr^2 \sqrt{\eps}} {(1 - \compr) d} \right)\right)$, after $T=\tilde O\left(\frac 1 {\eps^2 \step}\right) = \tilde O\left(\frac {\alpha} {\eps^4} + \frac {\sqrtomc} {\compr \eps^3} + \frac {d (1 - \compr)} {\compr^2 \eps^2 \sqrt \eps}\right)$ iterations, at least half of points $\vx_{t}$ such that the condition in Line~\ref{line:check_err_to_zero} is triggered at iteration $t$ are $\eps$-SOSP\@.
	The total additional communication due to Line~\ref{line:set_err_to_zero} is $\tilde O\left(\frac {d}{\eps^{\nicefrac 32}}\right)$.
\end{theorem}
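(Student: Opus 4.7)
The plan is to follow the template of \citet{jin2019nonconvex} for saddle-point escape via isotropic Gaussian perturbations, but overlay it on the error-feedback analysis of \citet{stich2018sparsified}, using the reset mechanism to insulate the escape analysis from the compression error. The first move is to define the virtual iterate $\tilde\vx_t = \vx_t - \step \err_t$, which by construction satisfies
\[
    \tilde\vx_{t+1} = \tilde\vx_t - \step\bigl(\nabla F(\vx_t,\sgdRand_t) + \ourNoise_t\bigr),
\]
so $\tilde\vx_t$ executes a perturbed SGD step at the shifted point $\vx_t$ rather than at $\tilde\vx_t$ itself. This lets me apply the standard descent lemma on $f(\tilde\vx_t)$, with the extra terms involving $\hesLip$ and the gap $\|\vx_t-\tilde\vx_t\|=\step\|\err_t\|$ treated as a perturbation. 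Using Assumption~\ref{ass:lip} and~\ref{ass:sg} (and, if available, Assumption~\ref{ass:lipsg} to sharpen the stochastic-gradient variance term from $d$ to $1$), I obtain a per-step inequality of the form
\[
    \Exp{f(\tilde\vx_{t+1})} \le \Exp{f(\tilde\vx_t)} - \tfrac{\step}{2}\Exp{\|\nabla f(\vx_t)\|^2} + \tilde O(\step^2\alpha) + \tilde O(\step\,\hesLip\cdot\Exp{\|\vx_t-\tilde\vx_t\|^2}),
\]
which telescopes into a $\tilde O(1/(\step\eps^2))$-iteration bound once I control the error term.

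Second, I bound $\Exp{\|\err_t\|^2}$ by unrolling its recursion. The inequality $\Exp{\|\C(\vv,\comprRand)-\vv\|^2}\le(1-\compr)\|\vv\|^2$ together with Young's inequality gives a geometric recursion $\Exp{\|\err_{t+1}\|^2}\le(1-\tfrac{\compr}{2})\Exp{\|\err_t\|^2}+\tfrac{2(1-\compr)}{\compr}\Exp{\|\nabla F(\vx_t,\sgdRand_t)+\ourNoise_t\|^2}$, so $\Exp{\|\err_t\|^2}$ stays controlled by $\tilde O((1-\compr)/\compr^2)\cdot(\eps^2+\alpha+r^2 d)$. This already yields the $\eps$-FOSP guarantee of Theorem~\ref{thm:fosp} with the third term in the step-size expression coming from the need to keep $\step\|\err_t\|$ small compared to the saddle-escape radius.

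Third, for the saddle-point escape I adapt the ``improve-or-localize'' strategy. Whenever the reset condition fires at some iterate $\vx_{t'}$ where $\err_{t'}=0$ and $\|\nabla f(\vx_{t'})\|\le\eps$, I consider two coupled trajectories of Algorithm~\ref{alg:arbitrary_compressor} starting from $\vx_{t'}$ that use identical $\sgdRand$, identical $\comprRand$, and identical $\ourNoise$ except along the most negative eigendirection $\vv$ of $\nabla^2 f(\vx_{t'})$, where they differ by a small coupling vector. If $\lmin(\nabla^2 f(\vx_{t'}))<-\sqrtre$, I show that within $\escIter=\tilde O(1/(\step\sqrtre))$ iterations at least one of the two trajectories leaves the ball of radius $\escRad$ around $\vx_{t'}$; combined with the virtual-iterate descent lemma applied over the $\escIter$-iteration window (and the fact that during this window the error is bounded because either the reset fired recently or the algorithm forced a reset), this implies an expected function-value drop of $\tilde\Omega(\escRad^2\sqrtre)$. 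The key distinction from the uncompressed case is that the ``quadratic approximation'' argument must accommodate the extra drift $\step\err_t$; the reset at $t'$ zeroes this out, and the $\escIter$-step reset window plus the geometric decay in the error recursion ensures $\Exp{\|\err_t\|^2}$ remains of lower order than $\escRad^2$ over the escape window, so long as $\step$ satisfies the $\compr^2\sqrt\eps/((1-\compr)d)$ bound in the theorem.

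The hard part will be step three: showing that the coupling argument for escape still works when both trajectories are disturbed by compression noise and accumulated error, given that the compressor may not commute with the linear dynamics of the quadratic approximation. My plan is to absorb the compression fluctuations into an effective per-step stochastic perturbation of variance $\tilde O((1-\compr)/\compr^2)\cdot(\alpha+r^2 d)$ and argue that, as long as this perturbation's projection onto $\vv$ is dominated by the seeded gap between the two trajectories, the exponential-growth argument of~\citet{jin2019nonconvex} goes through with an extra constant factor. Finally, charging each escape to a $\tilde\Omega(1/(\escIter\,\step\sqrtre))$ function-value drop and the total drop to $\fmax$ yields the stated iteration bound; the extra communication from Line~\ref{line:set_err_to_zero} is $O(d)$ per reset, and there are at most $T/\escIter=\tilde O(1/\eps^{3/2})$ resets (since in a non-escape window the algorithm only resets when it has traveled distance $\escRad$, and total travel is bounded via the descent lemma), giving $\tilde O(d/\eps^{3/2})$ extra bits per worker.
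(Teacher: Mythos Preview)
Your overall architecture matches the paper's: virtual iterate $\tilde\vx_t=\vx_t-\step\err_t$, the error-feedback recursion, improve-or-localize, a coupled-sequence escape argument starting from a reset point, and the $T/\escIter$ count for the extra reset communication. The numerics in your step-size and iteration bounds check out. However, your plan for step three contains a genuine gap.

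You propose to ``absorb the compression fluctuations into an effective per-step stochastic perturbation'' and then rerun the exponential-growth argument. This will not work for a general $\compr$-compressor. The two coupled sequences share the same $\comprRand_t$, but they feed different arguments into $\C(\cdot,\comprRand_t)$; for a nonlinear compressor there is no control on $\C(v,\comprRand)-\C(v',\comprRand)$, so the compression contribution to $\hat\vx_t=\vx_t-\vx'_t$ is neither zero-mean nor independent across steps, and cannot be treated like an additive stochastic noise with a variance bound. The paper instead writes $\hat\vx_t=-(\Delta_t+\mathcal E_t+Z_t+\Xi_t)$ with a separate compression term $\mathcal E_t=\step\sum_i(I-\step H)^{t-1-i}(\hat\err_i-\hat\err_{i+1})$, telescopes it to $\mathcal E_t+\step\hat\err_t=\step^2 H\sum_i(I-\step H)^{t-1-i}\hat\err_i$, and then crudely bounds $\|\hat\err_i\|\le\|\err_i\|+\|\err'_i\|$. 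Controlling $\|\err_i\|$ over the escape window via the recursion then requires a bound on $\|\nabla f(\vx_i)\|$ for all $i$ in that window, not just at the reset point. This is precisely why the paper splits off a \emph{large-gradient case} ($\|\nabla f(\vx_{t'})\|\ge 4\gradLip\escRad$), handled directly by the descent lemma, and only runs the coupling argument in the \emph{small-gradient case}, where one can inductively show $\Exp{\|\nabla f(\vx_i)\|^2}\lesssim\gradLip^2\escRad^2$ throughout the window and hence $\Exp{\|\err_i\|^2}\lesssim(1-\compr)\fullStd^2/\compr^2$. Your write-up implicitly restricts to $\|\nabla f(\vx_{t'})\|\le\eps$ but never addresses the large-gradient case and, more importantly, never explains how the gradient stays bounded \emph{during} the escape window; without that, your error bound for $\|\err_t\|$ does not hold and the coupling argument breaks. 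A minor point: the extra term in your descent inequality should scale with $\gradLip^2\step^2\|\err_t\|^2$ (from gradient Lipschitzness), not with $\hesLip$.
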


In general, convergence to an $\eps$-SOSP is noticeably slower than convergence to an $\eps$-FOSP.
The reason for such behavior is that, in the analysis of second-order convergence, compression introduces an error similar to that of the stochastic noise.
When the stochastic gradient is Lipschitz (i.e. Assumption~\ref{ass:lipsg} holds), the number of iterations reduces by a factor of $d$.
Unfortunately, unlike the stochastic error, the compression is not Lipschitz even for deterministic gradients: e.g. consider a \topk compression applied to the vector where each coordinate is $1$ with small perturbation.
However, if the compressor is linear (Definition~\ref{def:linear_compr}), we show improved convergence rate:
the third term in the number of iterations decreases by the factor of $d$.

\begin{theorem}[Convergence to $\eps$-SOSP for linear compressor]
	\label{thm:sosp_randomk}
	Let $f$ satisfy Assumptions~\ref{ass:lip} and~\ref{ass:sg}, let $\C$ be a linear compressor.
    Let $\alpha = 1$ when Assumption~\ref{ass:lipsg} holds and $\alpha=d$ otherwise.
	Then for Algorithm~\ref{alg:arbitrary_compressor} with $\reseterr=false$ and $\step = \tilde O \left(\min \left(\frac {\eps^2} {\alpha}, \frac {\compr \eps} {\sqrtomc}, \frac {\compr^2 \sqrt{\eps}} {1 - \compr} \right)\right)$,
	after $T=\tilde O\left(\frac 1 {\eps^2 \step}\right) = \tilde O\left(\frac {\alpha} {\eps^4} + \frac {\sqrtomc} {\compr \eps^3} + \frac {1 - \compr} {\compr^2 \eps^2 \sqrt{\eps}}\right)$ iterations, at least half of visited points are $\eps$-SOSP\@.
\end{theorem}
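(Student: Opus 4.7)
My starting point is the virtual-iterate trick of~\citet{stich2018sparsified}: define $\tilde\vx_t = \vx_t - \step \err_t$ and observe by direct computation that
\[ \tilde\vx_{t+1} = \tilde\vx_t - \step\bigl(\nabla F(\vx_t, \sgdRand_t) + \ourNoise_t\bigr), \]
so $\tilde\vx_t$ evolves exactly like perturbed SGD, but with gradients queried at $\vx_t$ rather than $\tilde\vx_t$. The argument behind Theorem~\ref{thm:fosp} can be reused to show that, provided $\|\err_t\|$ stays controlled, $\mathbb E[f(\tilde\vx_{t+1}) - f(\tilde\vx_t)] \le -\Omega(\step\eps^2)$ whenever $\|\nabla f(\vx_t)\| > \eps$. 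The new task for an $\eps$-SOSP guarantee is to produce a matching amortized decrease at an $\eps$-FOSP whose Hessian has a sufficiently negative eigenvalue.

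\textbf{Controlling the error.} Because $\C(\cdot, \comprRand)$ is linear I can write $\C(\vv, \comprRand) = M_{\comprRand}\vv$, so
\[ \err_{t+1} = (I - M_{\comprRand_t})\bigl(\err_t + \nabla F(\vx_t, \sgdRand_t) + \ourNoise_t\bigr). \]
Taking expectation over $\comprRand_t$, applying the $\compr$-compressor inequality, and using Young's inequality with parameter $\gamma = \compr/(2(1-\compr))$ yields the geometric recursion
\[ \mathbb E\|\err_{t+1}\|^2 \le (1 - \compr/2)\,\mathbb E\|\err_t\|^2 + \tfrac{2(1-\compr)}{\compr}\,\mathbb E\bigl\|\nabla F(\vx_t, \sgdRand_t) + \ourNoise_t\bigr\|^2, \]
which telescopes to $\mathbb E\|\err_t\|^2 = O\bigl(\tfrac{1-\compr}{\compr^2}(\std^2 + r^2 d + \sup_{s\le t}\|\nabla f(\vx_s)\|^2)\bigr)$. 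The same kind of recursion will also govern the error \emph{difference} between two coupled runs, which is where linearity pays off.

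\textbf{Escaping saddles via linear coupling.} At a candidate iterate $\vx_{t_0}$ with $\lmin(\nabla^2 f(\vx_{t_0})) \le -\sqrt{\hesLip\eps}$, I follow the coupling template of~\citet{jin2019nonconvex}: run two copies of the algorithm from $\vx_{t_0}$ using identical $\sgdRand_s$ and $\comprRand_s$ but reflected injected noises $\ourNoise_s$ and $\ourNoise_s' = \ourNoise_s - 2\langle\ourNoise_s,\mathbf e_1\rangle\mathbf e_1$ along the minimum-eigenvalue direction $\mathbf e_1$. Setting $\vv_s = \vx_s^{(1)} - \vx_s^{(2)}$ and $\Delta_s = \err_s^{(1)} - \err_s^{(2)}$, linearity of $\C$ gives the crucial cancellation
\[ \tilde\vv_{s+1} := \vv_{s+1} - \step\Delta_{s+1} = \tilde\vv_s - \step\bigl(\nabla F(\vx_s^{(1)}, \sgdRand_s) - \nabla F(\vx_s^{(2)}, \sgdRand_s) + 2\langle\ourNoise_s,\mathbf e_1\rangle\mathbf e_1\bigr), \]
because $M_{\comprRand_s}$ is applied identically to both runs and therefore drops out of the virtual-iterate difference. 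Approximating the gradient difference by $\nabla^2 f(\vx_{t_0})\tilde\vv_s$ and absorbing the $\hesLip$-Lipschitz Hessian remainder inside a ball of radius $\escRad = \tilde\Theta(\sqrt{\eps/\hesLip})$ yields geometric amplification of $\langle\tilde\vv_s,\mathbf e_1\rangle$ at rate $1 + \step\sqrt{\hesLip\eps}$, driven by the Gaussian pulses $2\langle\ourNoise_s,\mathbf e_1\rangle$. After $\escIter = \tilde O(1/(\step\sqrt{\hesLip\eps}))$ iterations at least one of the coupled runs leaves the $\escRad$-ball with constant probability, and the quadratic approximation of $f$ inside this ball converts this into a function-value drop $\df = \tilde\Omega(\sqrt{\eps^3/\hesLip})$.

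\textbf{Assembly and main obstacle.} Combining the two cases in the style of~\citet{jin2019nonconvex}, every non-$\eps$-SOSP iterate yields an amortized expected decrease of $f(\tilde\vx_t)$ of $\Omega(\step\eps^2)$; since the total decrease is bounded by $\fmax$, the count of non-$\eps$-SOSP iterates is at most $T/2$ once $T \ge \tilde\Omega(\fmax/(\step\eps^2))$. Unwinding the three step-size constraints---$\step = O(\eps^2/\alpha)$ from the FOSP descent lemma, $\step = O(\compr\eps/\sqrtomc)$ from the error contribution to that same lemma, and $\step = O(\compr^2\sqrt\eps/(1-\compr))$ from the saddle-escape analysis---produces the stated iteration count. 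The main obstacle is making the coupling rigorous in the presence of error feedback: the $\step\Delta_s$ correction inside $\tilde\vv_s$ and the residual $\vv_s$-components orthogonal to $\mathbf e_1$ both perturb the geometric amplification, and bounding them by $\escRad$ over $\escIter$ iterations is precisely what forces the third step-size constraint. Without linearity of $\C$ the two runs' compressions would act differently on $\err_s^{(1)}$ and $\err_s^{(2)}$, the clean recursion for $\Delta_s$ would break, and one would be forced either to pay an extra factor of $d$ (as in Theorem~\ref{thm:sosp}) or to reset the error periodically---which explains why $\reseterr=\mathrm{false}$ suffices here.
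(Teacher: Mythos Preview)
Your proposal is correct and follows essentially the same route as the paper: virtual iterates $\tilde\vx_t=\vx_t-\step\err_t$, the coupling of~\citet{jin2019nonconvex} with reflected artificial noise, a geometric recursion for the error \emph{difference} $\Delta_s=\hat\err_s$ that exploits linearity, and Improve-or-Localize to convert escape into a function drop $\df=\tilde\Omega(\sqrt{\eps^3/\hesLip})$ over $\escIter=\tilde O(1/(\step\sqrt{\hesLip\eps}))$ steps. One clarification worth making precise: the virtual-iterate difference equation you display,
\[
\tilde\vv_{s+1}=\tilde\vv_s-\step\bigl(\nabla F(\vx_s^{(1)},\sgdRand_s)-\nabla F(\vx_s^{(2)},\sgdRand_s)+2\langle\ourNoise_s,\mathbf e_1\rangle\mathbf e_1\bigr),
\]
holds for \emph{any} compressor (it is just Proposition~\ref{prop:y_diff} applied to both runs and subtracted), so linearity is not what makes $M_{\comprRand_s}$ ``drop out'' there; linearity is needed exactly where you identify it later, namely to write $\hat\err_{s+1}=(I-M_{\comprRand_s})\bigl((\nabla F(\vx_s^{(1)},\sgdRand_s)-\nabla F(\vx_s^{(2)},\sgdRand_s))+\hat\ourNoise_s+\hat\err_s\bigr)$ and thereby run the $(1-\compr)$-contraction on the \emph{difference} rather than on $\|\err_s\|+\|\err_s'\|$---this is the paper's Lemma~\ref{lem:bound_error_sum_linear}, and it is precisely what yields the $d$-free third step-size constraint and allows $\reseterr=\mathrm{false}$.
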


\begin{corollary}
\label{cor:randomk_comm}
For \randomk compressor with $k = \frac {d \eps^{\nicefrac 34}} {\sqrt \alpha}$, the total number of iterations of Algorithm~\ref{alg:arbitrary_compressor} is $\tilde O(\frac \alpha {\eps^4})$ and the total communication per worker is $\tilde O\left(\frac {d \sqrt \alpha} {\eps^{3 + \nicefrac 14}}\right)$
\end{corollary}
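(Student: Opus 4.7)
The plan is to specialize Theorem~\ref{thm:sosp_randomk} to the \randomk compressor. First I would verify the two hypotheses needed: (i) \randomk is a $\nicefrac{k}{d}$-compressor, as recorded in Section~\ref{sec:related_work}; and (ii) once the set of $k$ sampled indices is fixed by $\comprRand$, the map $\vx \mapsto \C(\vx, \comprRand)$ is a coordinate projection and hence linear, so Definition~\ref{def:linear_compr} is satisfied. Theorem~\ref{thm:sosp_randomk} therefore applies with $\compr = \nicefrac{k}{d}$. Choosing $k = d\eps^{\nicefrac 34}/\sqrt{\alpha}$ yields $\compr = \eps^{\nicefrac 34}/\sqrt{\alpha}$, and in the regime of interest for an $\eps$-SOSP guarantee we have $\compr \le \nicefrac 12$, so both $1-\compr$ and $\sqrt{1-\compr}$ are $\Theta(1)$.

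Next I would bound each of the three terms appearing in the iteration count of Theorem~\ref{thm:sosp_randomk} under this choice of $\compr$. The first term is already $\alpha/\eps^4$. The second is $\sqrt{1-\compr}/(\compr\eps^3) = \tilde O\bigl(\sqrt{\alpha}/\eps^{\nicefrac{15}{4}}\bigr)$, which is dominated by $\alpha/\eps^4$ precisely when $\sqrt{\alpha} \ge \eps^{\nicefrac 14}$, a condition that holds for both $\alpha=1$ and $\alpha=d$ under the mild assumption $\eps \le 1$. The third term is $(1-\compr)/(\compr^2 \eps^{\nicefrac 52}) = \tilde O(\alpha/\eps^4)$, matching the first. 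Taking the maximum yields $T = \tilde O(\alpha/\eps^4)$, the claimed iteration bound.

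Finally, \randomk admits a distributed implementation using shared randomness, so each worker need only transmit the $k$ sampled coordinates per round, giving $\tilde O(k)$ bits of communication per iteration per worker. Multiplying by $T$ yields total per-worker communication
\[
\tilde O(k \cdot T) \;=\; \tilde O\!\left(\frac{d\eps^{\nicefrac 34}}{\sqrt{\alpha}} \cdot \frac{\alpha}{\eps^4}\right) \;=\; \tilde O\!\left(\frac{d\sqrt{\alpha}}{\eps^{3 + \nicefrac 14}}\right),
\]
matching the statement. Since both pieces reduce to substitution into Theorem~\ref{thm:sosp_randomk} plus elementary inequalities on $\eps$, there is no real technical obstacle here; the only bookkeeping detail worth flagging is the dominance check for the second term, which was handled above. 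The substantive work has already been absorbed into the proof of Theorem~\ref{thm:sosp_randomk}, and the corollary itself is a clean instantiation.
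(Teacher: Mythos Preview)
Your proposal is correct and follows essentially the same approach as the paper: invoke Theorem~\ref{thm:sosp_randomk} with $\compr = k/d$, use that \randomk is linear and requires $\tilde O(k)$ communication, and then substitute. The only cosmetic difference is that the paper writes the total communication as $\tilde O\bigl(\frac{\alpha k}{\eps^4} + \frac{d}{\eps^3} + \frac{d^2}{k \eps^{5/2}}\bigr)$ and \emph{derives} the stated $k$ by balancing the first and last terms, whereas you take the given $k$ and verify each term is dominated by $\alpha/\eps^4$; since the corollary already hands you $k$, your direct verification is equally valid.
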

\begin{proof}
    \randomk is a linear compressor requiring $O(k)$ communications.
    The total communication is $k$ times the number of iterations, i.e. $\tilde O\left(\frac {\alpha k} {\eps^4} + \frac {d} {\eps^3} + \frac {d^2} {k \eps^2 \sqrt \eps} \right)$.
    Balancing the first and the last term, we get $k^2 = \frac {d^2 \eps^{\nicefrac 32}} {\alpha}$.
    Substituting this value of $k$, we get the required result.
\end{proof}

Note that the total number of iterations matches the one for the uncompressed case (up to polylogarithmic factors), while the total communication decreases by a factor of $\nicefrac dk$:
\begin{corollary}
When Assumption~\ref{ass:lipsg} holds, the total communication for \randomk decreases by the factor of $\tilde \Theta(\eps^{-\nicefrac 34})$ compared with the unconstrained case.
Otherwise, the total communication decreases by the factor of $\tilde \Theta(\sqrt d \eps^{-\nicefrac 34})$.
\end{corollary}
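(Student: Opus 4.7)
The plan is purely arithmetic: I would simply divide the compressed total communication established in Corollary~\ref{cor:randomk_comm} by the corresponding total communication of uncompressed SGD, and split the result into the two cases $\alpha=1$ and $\alpha=d$. No new convergence analysis is required; everything needed has already been done in Theorem~\ref{thm:sosp_randomk} and the preceding corollary.

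Concretely, I would first recall that Corollary~\ref{cor:randomk_comm} establishes a per-worker compressed communication of $\tilde O\!\left(d\sqrt{\alpha}\,\eps^{-(3+\nicefrac14)}\right)$. Next, I would identify the uncompressed baseline: the introductory discussion and Theorem~\ref{thm:intro_randomk} explicitly take the iteration count of uncompressed SGD to be $\tilde O(\alpha/\eps^4)$ (the rate of \citet{jin2019nonconvex}), and since each uncompressed iteration transmits the full $d$-dimensional gradient, the baseline per-worker communication is $\tilde O(d\alpha/\eps^{4})$. The improvement factor is then
\[
    \frac{\tilde O(d\alpha/\eps^4)}{\tilde O(d\sqrt{\alpha}\,\eps^{-(3+\nicefrac14)})}
    \;=\; \tilde\Theta\!\left(\sqrt{\alpha}\,\eps^{-\nicefrac34}\right).
\]

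Finally, I would substitute the two values of $\alpha$: when Assumption~\ref{ass:lipsg} holds, $\alpha=1$ gives $\tilde\Theta(\eps^{-\nicefrac34})$, and otherwise $\alpha=d$ gives $\tilde\Theta(\sqrt{d}\,\eps^{-\nicefrac34})$, matching the two claims in the corollary. There is no substantive obstacle here; the only mild subtlety is making sure that the baseline against which we measure the improvement is consistently $\tilde O(\alpha/\eps^4)$ iterations (rather than, say, the accelerated SGD rate of \citet{Fang2019sharp} used elsewhere), but this is exactly the baseline used throughout the rest of the \randomk discussion, so the computation above is the intended comparison.
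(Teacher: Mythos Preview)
Your arithmetic is correct and matches the paper's (implicit) reasoning, which is the sentence immediately preceding the corollary: the iteration count matches the uncompressed $\tilde O(\alpha/\eps^4)$ rate while per-iteration communication drops by $d/k$, giving an improvement factor of $d/k = \sqrt{\alpha}\,\eps^{-3/4}$. Your caveat about the baseline is well placed---note that Table~\ref{tab:results} and the informal Theorem~\ref{thm:intro_randomk} actually compare the Lipschitz case against the $\tilde O(\eps^{-3.5})$ rate of \citet{Fang2019sharp} (yielding only $\eps^{-1/4}$ there), so the $\eps^{-3/4}$ figure in this particular corollary indeed relies on the \citet{jin2019nonconvex} baseline you selected.
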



\paragraph{Compressed SGD in Distributed Settings}

Below we consider different scenarios to illustrate how convergence depends on the properties of the compressor.
Recall that sketch-based \textsc{TopK} is a $\nicefrac kd$-compressor which requires $\tilde O(k)$ communication.
Selecting $\compr=\nicefrac kd$, with $k \ll d$, by Theorem~\ref{thm:sosp} we have
$\step = \tilde{O}\left(\min\left(\frac {\eps^2} {\alpha}, \frac {k \eps} {d}, \frac {k^2 \sqrt{\eps}} {d^3}\right)\right)$.
Therefore, the total number of iterations is $\tilde O\left(\frac {1} {\eps^4} + \frac{d}{k \eps^3} + \frac {d^3} {k^2 \eps^2 \sqrt{\eps}}\right)$
and the total communication is $\tilde O\left(\frac {k} {\eps^4} + \frac{d}{\eps^3} + \frac {d^3} {k \eps^2 \sqrt{\eps}}\right)$.

Note that the above reasoning considers a worst-case scenario. However, in practice it's often possible to achieve good compression at a low communication cost due to the fact that gradient coordinates have heavy-hitters, which are easy to recover using \topk. We formulate this beyond worst-case scenario as the following optional assumption:

\begin{assumption}[Optional]
\label{ass:good_sketch} There exists a constant $c < 1$ such that for all $t$, $\C(\nabla F(\vx_t, \sgdRand_t) + \ourNoise_t + \err_t)$ provides a $c$-compression and requires $\tilde O(1)$ bits of communication per worker. 
\end{assumption}

In other words, for all computed values, $\C$ provides a constant compression and requires a polylogarithmic amount of communication.
This assumption can be satisfied under various conditions. For example, some methods may take advantage of the situation when gradients between adjacent iterations are close~\citep{hanzely2018sega}.
In cases when certain coordinates are much more prominent in the gradient compared to others, \textsc{TopK} compressor will show good performance.

\begin{corollary}
\label{cor:table_comms}
Algorithm~\ref{alg:arbitrary_compressor} converges to $\eps$-SOSP in a number of settings, as shown in Table~\ref{tab:results}.

\begin{table*}[!t]
	\centering
	\caption{Convergence to $\eps$-SOSP with uncompressed SGD, with sketch-based \topk compressor, with \randomk compressor, and with a constant-compressor requiring constant communication (Assumption~\ref{ass:good_sketch}, beyond worst-case assumption).
	Each approach is considered in two settings: when Assumption~\ref{ass:lipsg} holds (i.e. the stochastic gradient is Lipschitz) and when it doesn't hold.
	For each approach we select an optimal compression factor based on our bounds. The results show that communication of SGD with \randomk compression outperforms that of the uncompressed SGD by $\tilde O(\eps^{-\nicefrac 14})$ when Assumption~\ref{ass:lipsg} holds and by $\tilde O(\sqrt d \eps^{-\nicefrac 34})$ otherwise. Based on our results, depending on $d$ and $\eps$, constant-memory compressor with constant communication may converge slower than \randomk, since such the compressor is not necessarily linear.
    }
	\label{tab:results}
	\begin{tabular}{cccccc}
		\hline
		\multicolumn{2}{c}{Setting}
		& $\compr$
        & Iterations & \multiline{Total comm.\\ per worker} & \multiline{Total comm.\\ improvement} \\
		\hline
		\hline
		\multirow{4}{*}[-1em]{\rotatebox[origin=c]{90}{Lipschitz $\nabla F$}}&
		\multiline{Uncompressed\\\citep{Fang2019sharp}} & $1$
			& $\tilde O\left(\frac 1 {\eps^{3.5}}\right)$
			& $\tilde O\left(\frac d {\eps^{3.5}}\right)$
			& $-$ \\
		\cline{2-6}
		&
		\multiline{Sketch-based\\ \topk} & \multiline{$\sqrt d \eps^{\nicefrac 34}$\\ ($\eps = o(d^{-\nicefrac 23})$)}
			& $\tilde O\left(\frac{1}{\eps^4}\right)$
			& $\tilde O\left(\frac{d \sqrt d }{\eps^{3 + \nicefrac 14}}\right)$
			& $\tilde \Theta\left(\frac {\eps^{-\nicefrac 14}} {\sqrt d}\right)$ \\
		\cline{2-6}
		&
		\multiline{\randomk} & $\eps^{\nicefrac 34}$
			& $\tilde O\left(\frac{1}{\eps^4}\right)$
			& $\tilde O\left(\frac{d }{\eps^{3 + \nicefrac 14}}\right)$
			& $\boldsymbol{\tilde \Theta\left(\eps^{-\nicefrac 14}\right)}$ \\
		\cline{2-6}
		&
		\multiline{Constant-memory\\$c$-compressor} & $c > 0$
			& $\tilde O\left(\frac 1 {\eps^4} + \frac d {\eps^2 \sqrt \eps} \right)$
			& $\tilde O\left(\frac 1 {\eps^4} + \frac d {\eps^2 \sqrt \eps}\right)$
			& $\tilde \Theta\left(\min(d, \frac 1 {\sqrt \eps}) \right)$ \\
		\hline
		\hline
		\multirow{4}{*}{\rotatebox[origin=c]{90}{non-Lipschitz $\nabla F$}}&
		\multiline{Uncompressed\\\citep{jin2019nonconvex}} & $1$
			& $\tilde O\left(\frac d {\eps^4}\right)$
			& $\tilde O\left(\frac {d^2} {\eps^4}\right)$
			& $-$ \\
		\cline{2-6}
		&
		\multiline{Sketch-based\\ \topk} & $\eps^{\nicefrac 34}$
			& $\tilde O\left(\frac{d}{\eps^4}\right)$
			& $\tilde O\left(\frac{d^2}{\eps^{3 + \nicefrac 14}}\right)$
			& $\tilde \Theta\left(\eps^{-\nicefrac 34}\right)$ \\
		\cline{2-6}
		&
		\multiline{\randomk} & $\frac {\eps^{\nicefrac 34}} {\sqrt d}$
			& $\tilde O\left(\frac{1}{\eps^4}\right)$
			& $\tilde O\left(\frac{d \sqrt d}{\eps^{3 + \nicefrac 14}}\right)$
			& $\boldsymbol{\tilde \Theta\left(\sqrt d \eps^{-\nicefrac 34}\right)}$ \\
		\cline{2-6}
		&
		\multiline{Constant-memory\\$c$-compressor} & $c > 0$
			& $\tilde O\left(\frac{d}{\eps^4}\right)$
			& $\tilde O\left(\frac{d}{\eps^4}\right)$
			& $\tilde \Theta\left(d \right)$\\
		\hline
	\end{tabular}
\end{table*}
\end{corollary}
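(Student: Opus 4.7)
The corollary is a catalogue of instantiations of the two preceding theorems. For each row of Table~\ref{tab:results}, we substitute the compressor's compression parameter $\compr$ and its per-iteration communication cost into the appropriate convergence result~-- Theorem~\ref{thm:sosp_randomk} for the linear compressor \randomk and Theorem~\ref{thm:sosp} for the general compressors (sketch-based \topk and the constant-memory compressor of Assumption~\ref{ass:good_sketch})~-- and then choose $\compr$ (equivalently the sparsity $k=\compr d$) to minimize total communication per worker, which equals iteration count times per-iteration communication. The uncompressed rows are cited verbatim from~\citet{Fang2019sharp} and~\citet{jin2019nonconvex} and serve only as a baseline for the improvement column.

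\paragraph{\randomk rows.} These follow from Corollary~\ref{cor:randomk_comm} specialized to $\alpha\in\{1,d\}$. Multiplying the iteration bound of Theorem~\ref{thm:sosp_randomk} by per-iteration cost $O(k)$ gives total communication $\tilde O(\alpha k/\eps^4 + d/\eps^3 + d^2/(k\eps^{5/2}))$; balancing the first and third terms yields $k = d\eps^{3/4}/\sqrt\alpha$ and total communication $\tilde O(d\sqrt\alpha\,\eps^{-13/4})$. Specializing $\alpha$ produces the two \randomk rows, and the middle term $d/\eps^3$ is absorbed under both choices.

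\paragraph{Sketch-based \topk rows.} Here $\C$ is a general $\compr$-compressor requiring $\tilde O(\compr d)$ communication, so Theorem~\ref{thm:sosp} applies. Multiplying through gives total communication $\tilde O(\alpha\compr d/\eps^4 + d/\eps^3 + d^2/(\compr\eps^{5/2}))$; balancing the first and third terms yields $\compr = \sqrt{d}\eps^{3/4}/\sqrt\alpha$. For $\alpha=1$ this is $\compr = \sqrt{d}\eps^{3/4}$, which lies in $(0,1]$ exactly when $\eps = o(d^{-2/3})$~-- the conditional noted in the table; for $\alpha=d$ it is $\compr = \eps^{3/4}$, which is $\le 1$ for $\eps = o(1)$. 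Substituting back reproduces the tabulated iteration and communication counts.

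\paragraph{Constant-memory row and main obstacle.} Under Assumption~\ref{ass:good_sketch}, $\compr$ is a constant and per-iteration cost is $\tilde O(1)$, so Theorem~\ref{thm:sosp} directly yields total communication $\tilde O(\alpha/\eps^4 + 1/\eps^3 + d/\eps^{5/2}) = \tilde O(\alpha/\eps^4 + d/\eps^{5/2})$; specializing $\alpha\in\{1,d\}$ gives the two rows. The only non-mechanical step throughout is confirming that the interior minimizer $\compr^\ast$ lies in $(0,1]$ so the compressor is well-defined, which is tight only in the Lipschitz-$\nabla F$ sketch-based \topk row (whence the side condition $\eps = o(d^{-2/3})$); elsewhere the constraint and the step-size conditions of Theorems~\ref{thm:sosp} and~\ref{thm:sosp_randomk} are satisfied automatically by elementary inequalities in $\eps$ and $d$, and one checks that the $\sqrtomc/(\compr\eps^3)$ term is never dominant at the chosen $\compr^\ast$.
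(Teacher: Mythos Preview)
Your proposal is correct and follows essentially the same route as the paper. The paper does not supply a formal proof of Corollary~\ref{cor:table_comms}; it only gives the sketch-based \topk computation in the paragraph preceding the corollary (substituting $\compr=k/d$ into Theorem~\ref{thm:sosp} and multiplying by the $\tilde O(k)$ per-iteration cost) and otherwise relies on Corollary~\ref{cor:randomk_comm} for the \randomk rows and a direct plug-in of Theorem~\ref{thm:sosp} for the Assumption~\ref{ass:good_sketch} rows. Your write-up simply makes these substitutions explicit and performs the same balancing of the first and third communication terms to pick $\compr^\ast$, including the observation that $\compr^\ast\le 1$ forces the side condition $\eps=o(d^{-2/3})$ in the Lipschitz sketch-based \topk row.
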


\subsection{Proof Sketch}
\label{sec:proof_sketch}

In this section, we outline the main techniques used to prove Theorems~\ref{thm:fosp} and~\ref{thm:sosp}.
A recent breakthrough line of work focused on convergence of first-order methods to $\eps$-SOSP~\cite{geHJY15,CarmonD16gradient,JinGNKJ17,tripuraneni2018stochastic,jin2019nonconvex} (JACM) has developed a comprehensive set of analytic techniques.
We start by outlining~\citet{jin2019nonconvex}, which is the sharpest known SGD analysis in the case when the stochastic gradient is not Lipschitz.

Let $\vx_0$ be an iterate such that $\lmin(\nabla^2 f(\vx_0)) < -\sqrtre$, and $\vv_1$ be the eigenvector corresponding to $\lmin$.
Consider sequences $\{\vx_t\}$ and $\{\vx'_t\}$ starting with $\vx_0$ which are referred to as \emph{coupling sequences}: their distributions match the distribution of compressed SGD iterates (i.e. both sequences can be produced by Algorithm~\ref{alg:arbitrary_compressor}), and they share the same randomness, with an exception that their artificial noise has the opposite sign in the direction $\vv_1$.
The main idea is that such artificial noise combined with SGD updates ensures that projection of $\vx_t - \vx'_t$ on $\vv_1$ increases exponentially, and therefore at least one of the sequences moves far from $\vx$.
After that, one can use an ``Improve or localize'' Lemma which states that, if we move far from the original point, then the objective decreases substantially.

If we have an access to a deterministic gradient oracle and the objective function is quadratic, then gradient descent behaves similarly to the power method, since in this case:
\[\vx_{t+1} = \vx_t - \step \nabla f(\vx_t) = \vx_t - \step \nabla^2 f(\vx_0) \vx_t = (I - \step \nabla^2 f(\vx_0)) \vx_t\]
Adding artificial noise guarantees that projection of $\vx_t -\vx_t'$ on direction $\vv_1$ is large, and the power method further amplifies this projection.

In general, the SGD behavior deviates from power method due to: 1) the difference between $f$ and its quadratic approximation and 2) stochastic noise.
\citet{jin2019nonconvex} show that the errors introduced by these deviations are dominated by the increase in direction $\vv_1$, and therefore SGD successfully escapes saddle points.

\subsubsection{Outline of our compressed SGD analysis.} 

The analysis above is not applicable to our algorithm due to gradient compression and error-feedback.
Moreover, in the case of an arbitrary compressor we change the algorithm even further by periodically setting $\err_t$ to $0$.



One of the major changes is that errors introduced by the compression lead to even greater deviation of SGD from the power method, and this deviation can potentially dominate other terms: if the compression error is accumulated from the beginning of the algorithm execution, then the compression error can be arbitrarily large.
Surprisingly, we show that, for a linear compressor, such adverse behavior doesn't happen.
Let $\err'_t$ be the compression error sequence corresponding to $\vx'_t$ such that $\err'_0 = \err_0$.
Then the deviation of SGD from the power method caused by compression can be quantified as
\[\mathcal E_t = \step^2 \nabla^2 f(\vx_0) \sum_{i=1}^{t-1}(I - \step \nabla^2 f(\vx_0))^{t-1-i} {\color{blue}(\err_i - \err'_i)}, \quad\text{(Proposition~\ref{prop:diff_decomposition})}\]
and therefore, we have to bound $\|\err_i - \err_i'\|$ for all $i$.
For $G_t = \err_t + \nabla F(\vx_t, \sgdRand_t) + \ourNoise_t$ (with $G_t'$ defined analogously):
\begin{align*}
    \Exp{\|\err_{t+1} - \err_{t+1}'\|^2}
    &= \Exp{\|(G_t - \C(G_t)) - (G_t' - \C(G_t'))\|^2} \\
    &= \Exp{\|(G_t - G_t') - \C(G_t - G_t')\|^2} \\
    &\le (1 - \compr) \Exp{\|G_t - G_t'\|^2} \\
    &= (1 - \compr) \Exp{\|(\err_t - \err_t') + (\nabla F(\vx_t, \sgdRand_t) - \nabla F(\vx'_t, \sgdRand_t)) + (\ourNoise_t - \ourNoise_t')\|^2}.
\end{align*}
Since $\err_{0} = \err'_{0}$, after telescoping, $\|\err_{t+1} - \err_{t+1}'\|$ can be bounded using $\|\nabla F(\vx_i, \sgdRand_i) - \nabla F(\vx'_i, \sgdRand_i)\|$ and $\|\ourNoise_i - \ourNoise'_i\|$ for $i \in [0: t]$ (Lemma~\ref{lem:bound_error_sum_linear}).
In other words, when escaping from a saddle point, $\mathcal E_t$ can bounded based on gradients and noises encountered during escaping. 
Therefore it is comparable to other terms and can be bounded with an appropriate choice of $\step$.

Unfortunately, for the arbitrary compressor case we don't have a good estimation on $\mathcal E_t$, since in general we don't have better bound on $\|\err_i - \err'_i\|$ than $\|\err_i\| + \|\err'_i\|$ (see proof of Lemma~\ref{lem:bound_error_sum}).
Lemma~\ref{lem:error_estimation} bounds the compression error $\err_t$ in terms of $\|\nabla f(\vx_0)\|, \ldots, \|\nabla f(\vx_t)\|$, analogously to the derivation above:
\[\Exp{\|\err_{t}\|^2} \le \frac {2 (1 - \compr)} {\compr} \sum_{i=0}^{t-1} \left(1 - \frac \compr 2\right)^{t-i} \Exp{{\color{blue}\|\nabla f(\vx_i)\|^2} + \fullStd^2},\]
but the bound depends on all gradients starting from the first iteration.
To solve this problem, we periodically set the compression error to $0$ (Line~\ref{line:check_err_to_zero} of Algorithm~\ref{alg:arbitrary_compressor}).
Let $t_0$ be an iteration such that $\err_{t_0}$ is set to $0$: then, when escaping from $\vx_{t_0}$, we can apply Lemma~\ref{lem:error_estimation} with $i$ starting from $t_0$.
This leads to major difference from the \cite{jin2019nonconvex} analysis: we need to consider large- and small-gradient cases separately.
When the gradient at $\vx_{t_0}$ is large (Lemma~\ref{lem:large_gradient}), we show that nearby gradients are also large, and the objective improves by the Compressed Descent Lemma~\ref{lem:descent_lemma}.
Otherwise, we can bound the error norm for the next few iterations (Lemma~\ref{lem:bound_error_sum}).


Finally, the analysis uses not only the sequence of iterates $\{\vx_t\}$, but also the corrected sequence $\{\tvx_t\}$ where $\tvx_t = \vx_t - \step \err_t$ (similarly, $\tvx'_t = \vx'_t - \step \err'_t$).
Intuitively, $\err_t$ accumulates the difference between the communicated and the original gradient, and therefore the goal of $\tvx_t$ is to offset the compression error.
Typically, $\vx_t$ is used as an argument of $\nabla f(\cdot)$, while $\tvx_t$ is used in distances and as an argument of $f(\cdot)$, which noticeably complicates the analysis.
In particular, if some property holds for $\vx_t$, it doesn't necessarily hold for $\tvx_t$ and vice versa: for example, since $\vx_t$ and $\tvx_t$ are not necessarily close, bound $\|\tvx_t - \tvx'_t\|$ doesn't in general imply bound on $\|\vx_t - \vx'_t\|$.
However, in our analysis, we show that we can bound $\|\vx_t - \vx'_t\|$, which is required to bound $\|\nabla f(\vx_t) - \nabla f(\vx'_t)\|$ in Lemma~\ref{lem:nonlocalize}.
\section{Experiments}

\begin{figure*}[p]
    \centering
    \begin{subfigure}[t]{0.99\textwidth}
        \includegraphics[width=\textwidth]{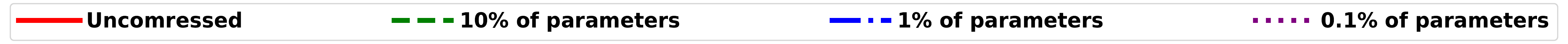}
    \end{subfigure}

    \begin{subfigure}[t]{0.32\textwidth}
        \includegraphics[width=\textwidth]{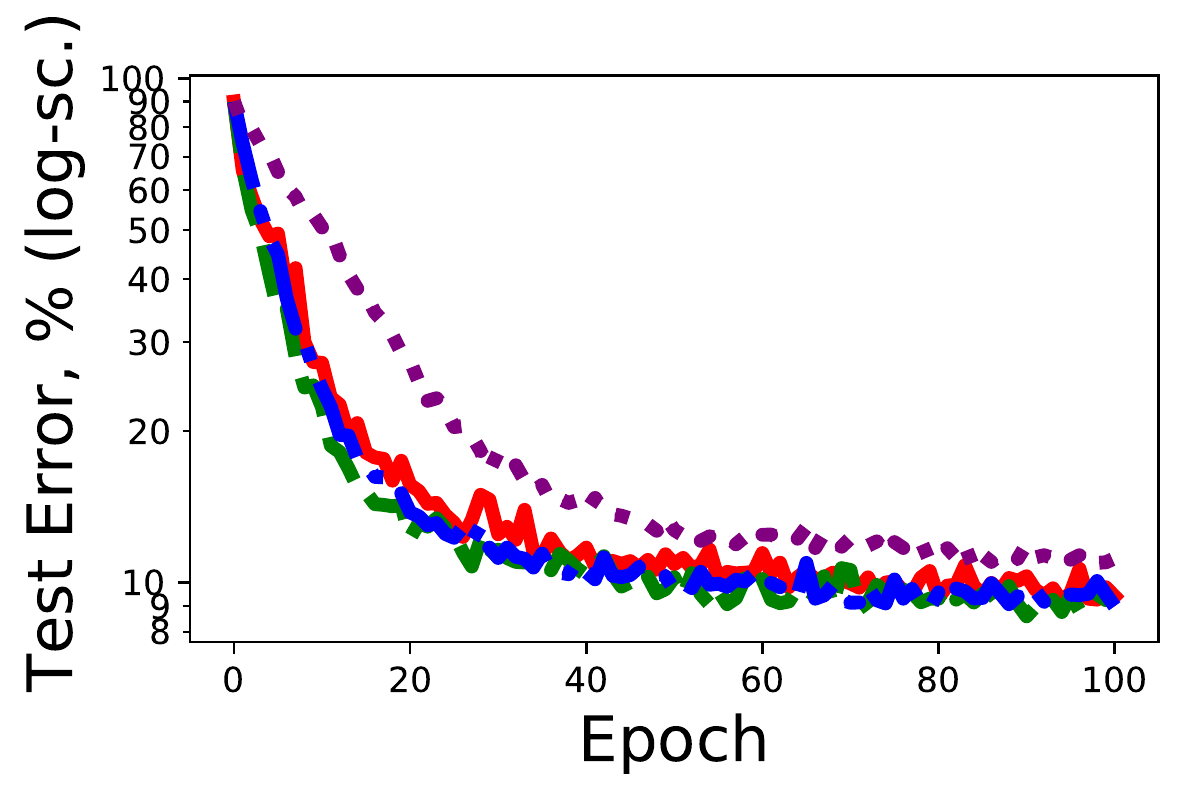}
        \caption{Test Error}
        \label{fig:test_acc}
    \end{subfigure}
    \begin{subfigure}[t]{0.32\textwidth}
        \includegraphics[width=\textwidth]{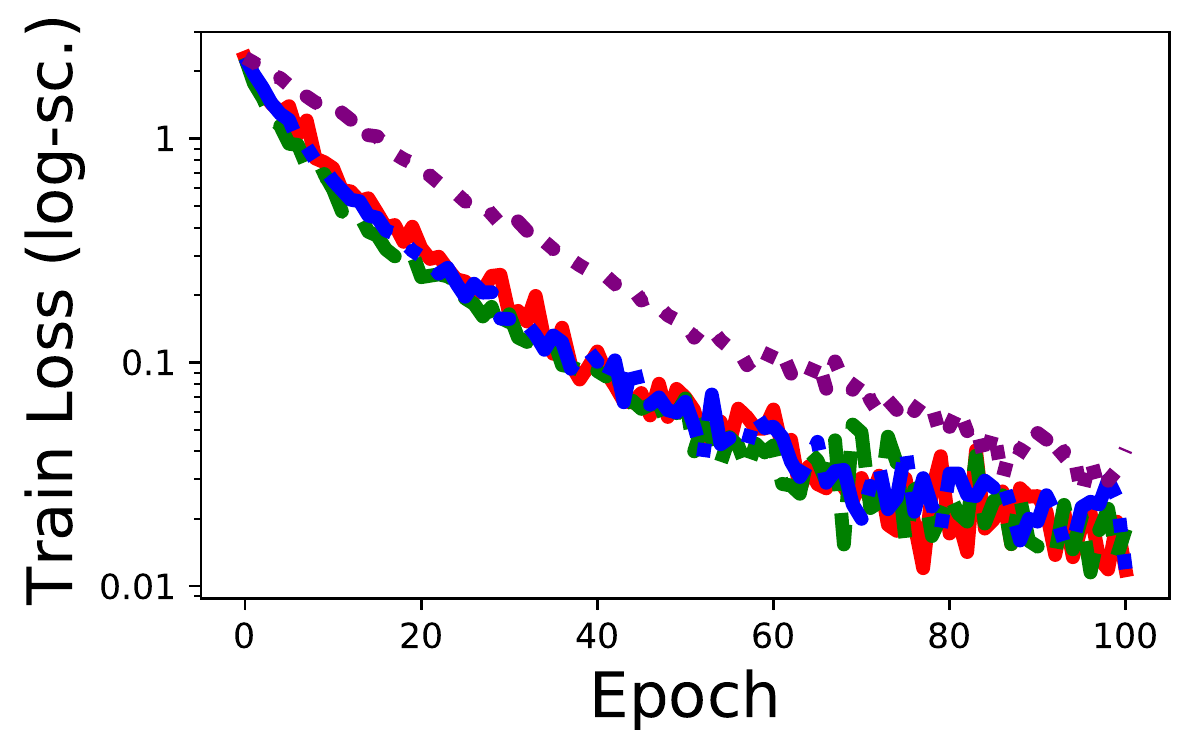}
        \caption{Train Loss}
        \label{fig:train_loss}
    \end{subfigure}
    \begin{subfigure}[t]{0.32\textwidth}
        \includegraphics[width=\textwidth]{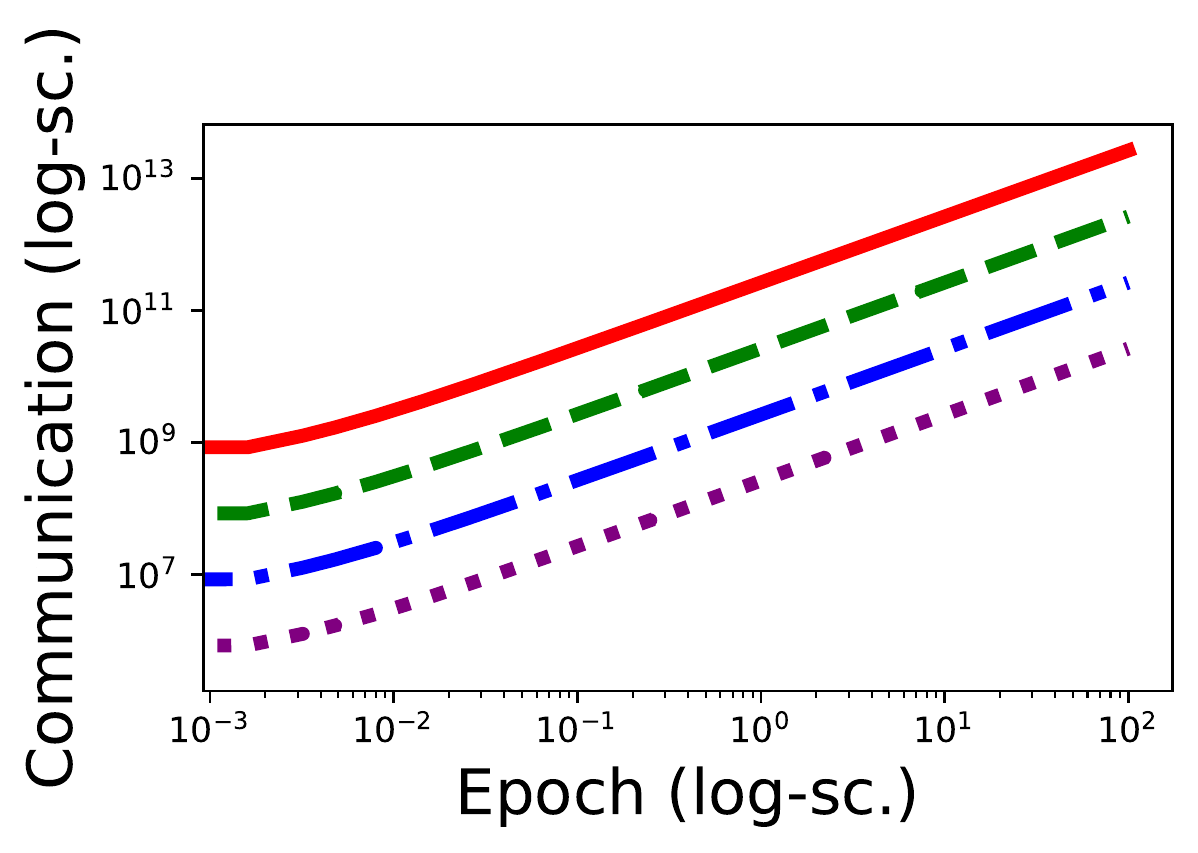}
        \caption{Total number of communicated parameters}
        \label{fig:communication}
    \end{subfigure}
    \caption{Convergence of distributed SGD ($\step=0.1$, batch size is $8$ per machine) with \randomk compressor when $100\%$ (full gradient), $1\%$, $0.1\%$ and $0.01\%$ of coordinates are used.
             ResNet34 model is trained on CIFAR-10 distributed across $10$ machines: each machine corresponds to a single class.
             SGD with $10\%$ and $1\%$ compression achieves performance similar to that of uncompressed SGD, while requiring significantly less communication}
    \label{fig:experiments}
\end{figure*}
\begin{figure*}[p]
    {\centering
    \begin{subfigure}[t]{0.6\textwidth}
        \includegraphics[width=\textwidth]{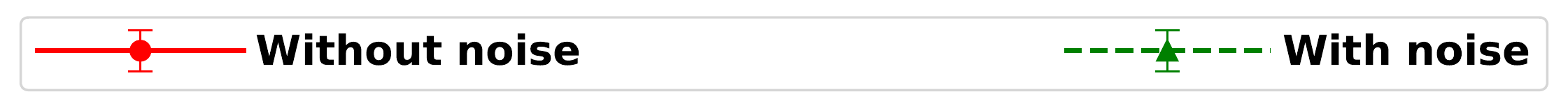}
    \end{subfigure}

    \begin{subfigure}[t]{0.32\textwidth}
        \includegraphics[width=\textwidth]{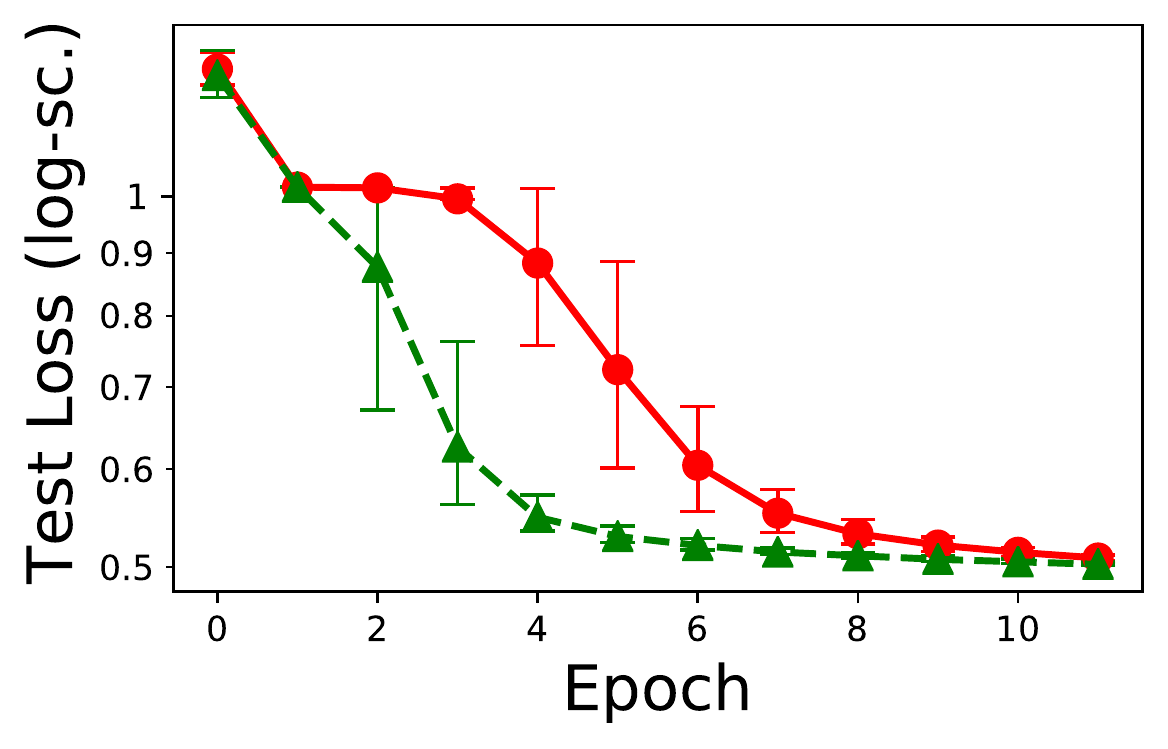}
        \caption{Uncompressed loss}
        \label{fig:uncompressed_loss}
    \end{subfigure}
    \begin{subfigure}[t]{0.32\textwidth}
        \includegraphics[width=\textwidth]{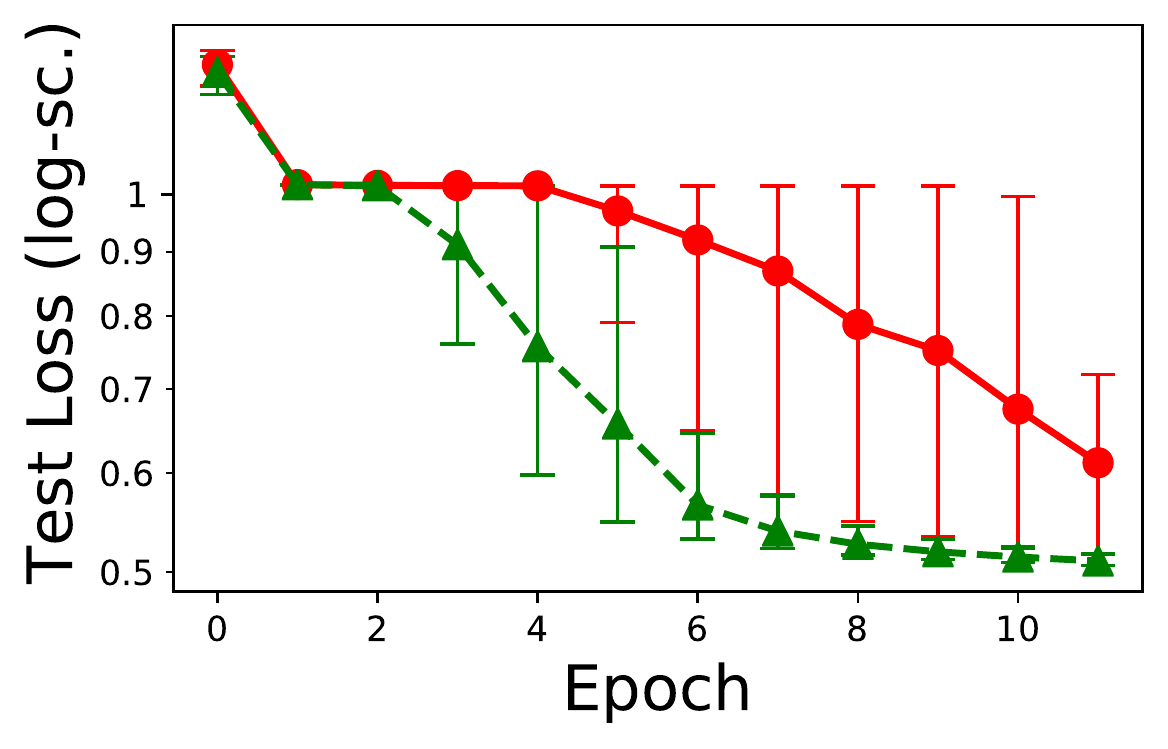}
        \caption{TopK loss}
    \end{subfigure}
    \begin{subfigure}[t]{0.32\textwidth}
        \includegraphics[width=\textwidth]{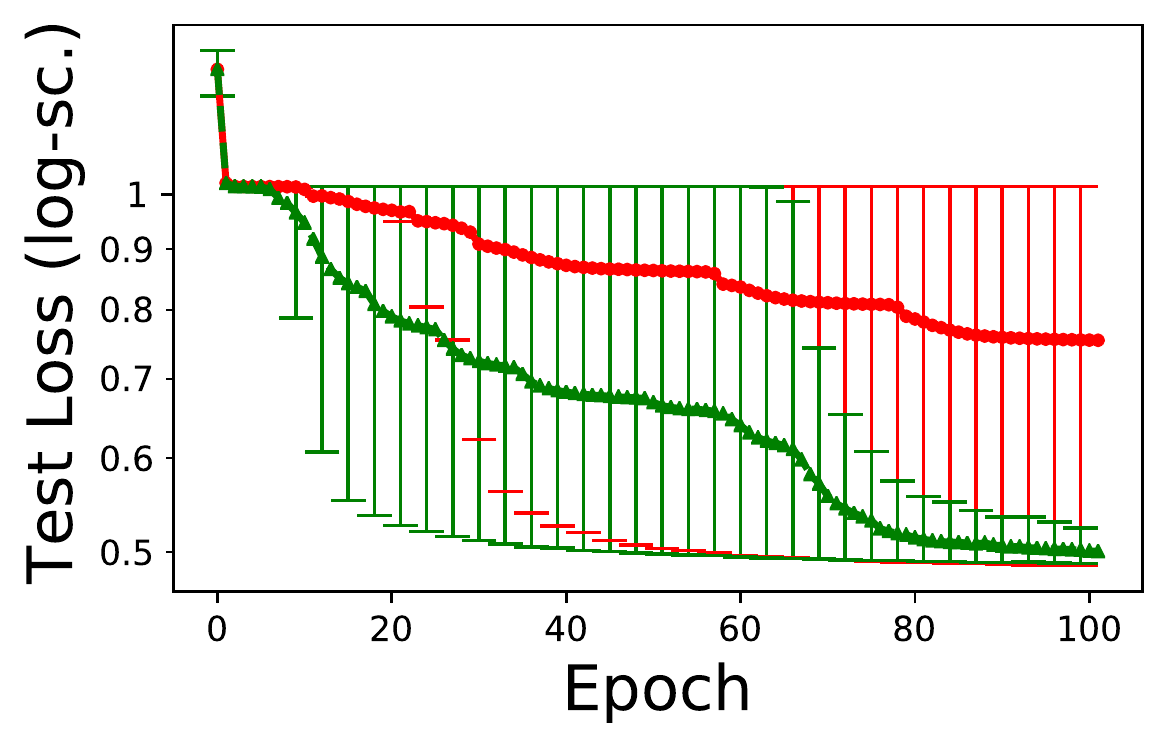}
        \caption{Random compressor loss}
    \end{subfigure}

    \begin{subfigure}[t]{0.32\textwidth}
        \includegraphics[width=\textwidth]{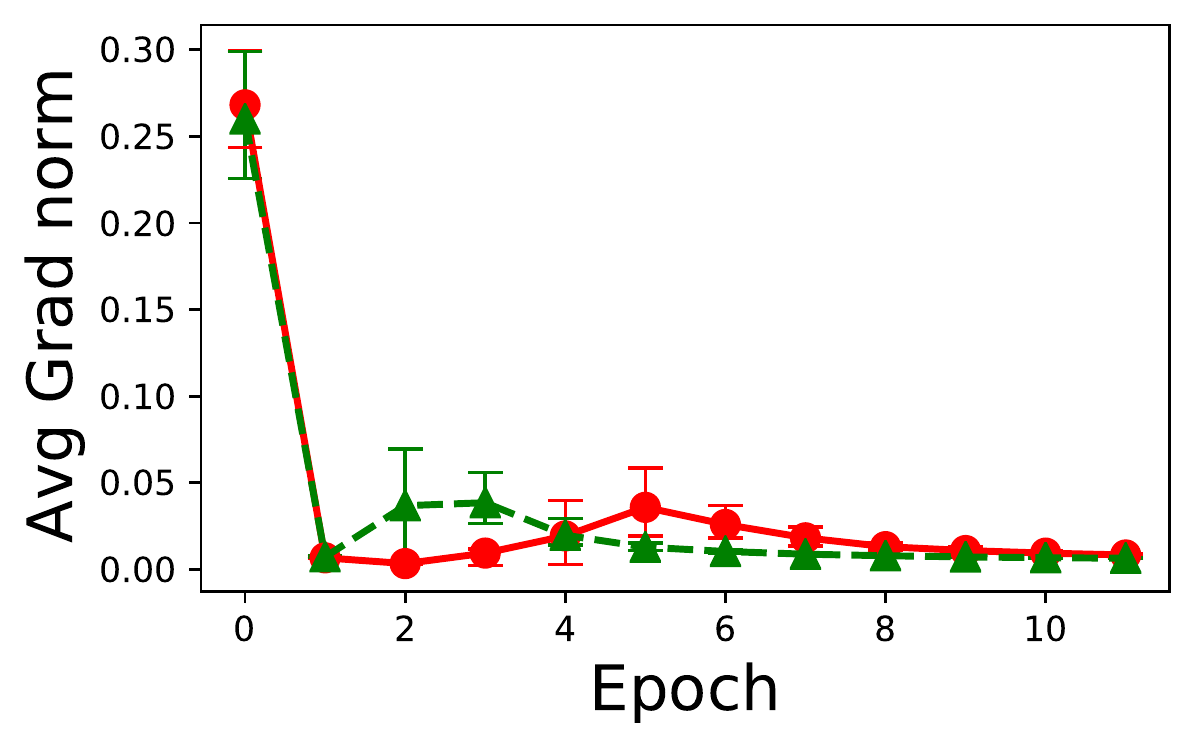}
        \caption{Uncompressed gradient norm}
    \end{subfigure}
    \begin{subfigure}[t]{0.32\textwidth}
        \includegraphics[width=\textwidth]{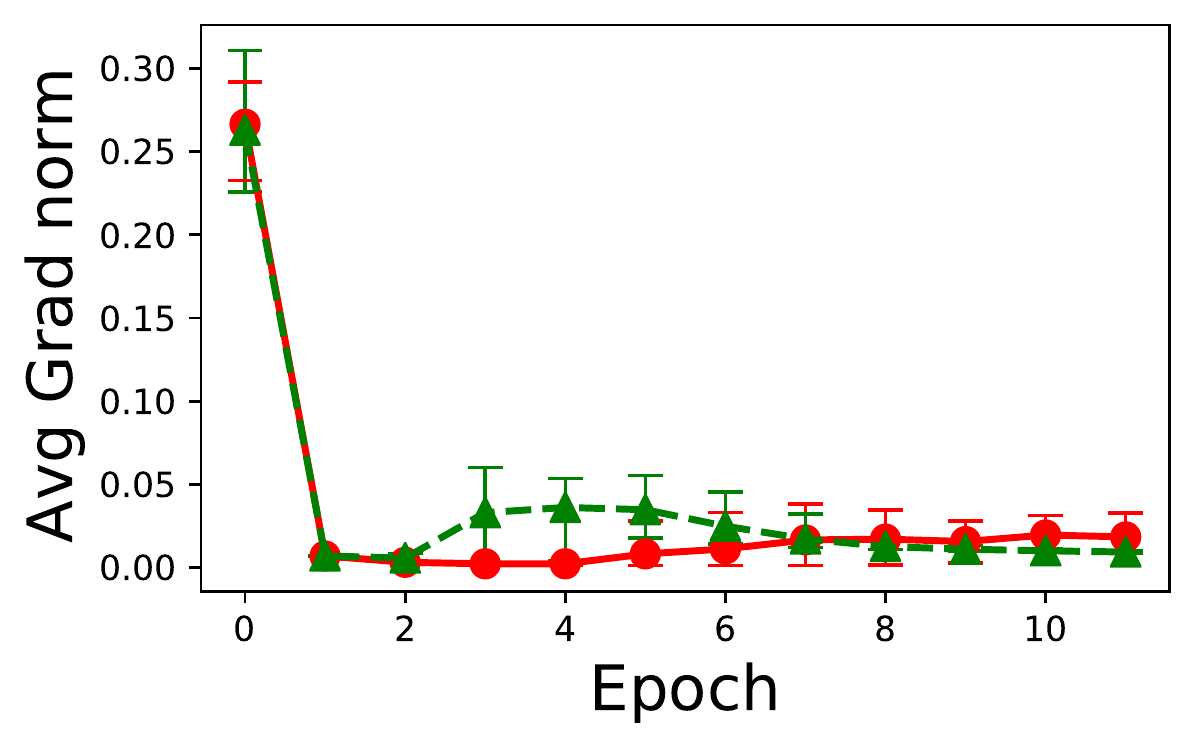}
        \caption{TopK gradient norm}
    \end{subfigure}
    \begin{subfigure}[t]{0.32\textwidth}
        \includegraphics[width=\textwidth]{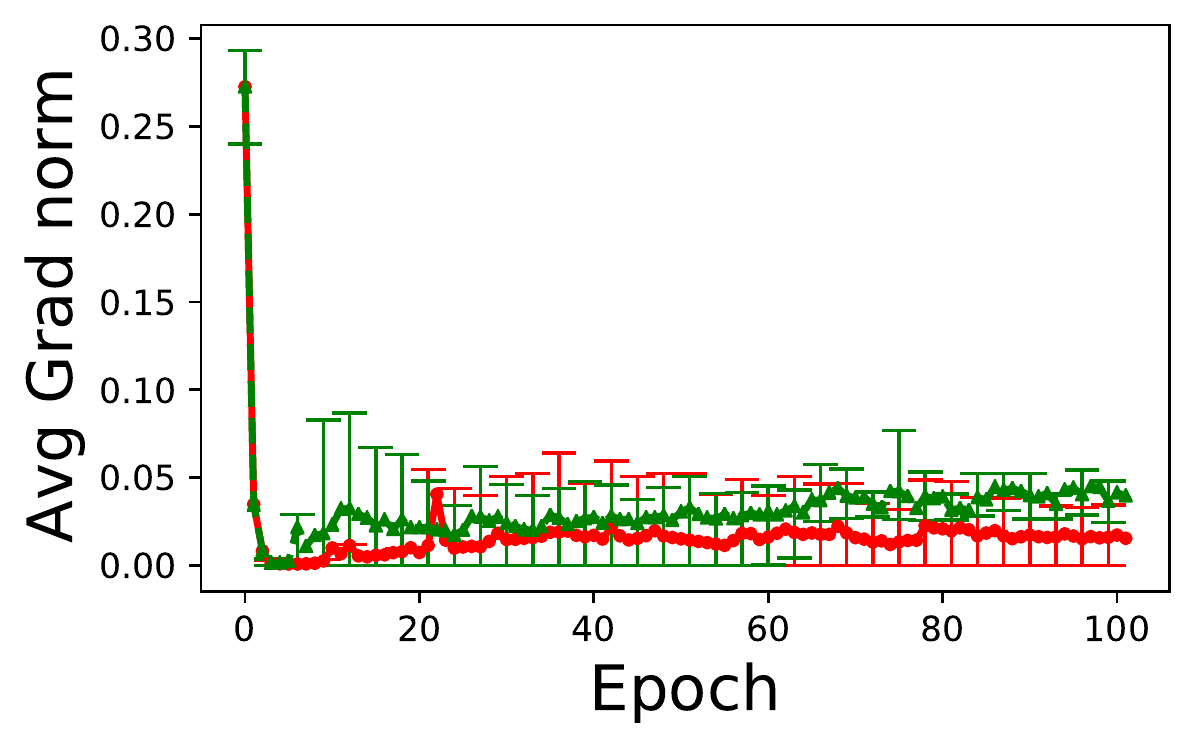}
        \caption{\randomk gradient norm}
    \end{subfigure}

    \caption[CAP]{
            Convergence of SGD ($\step=0.1$, batch size is $64$) without compression (left), with \topk($0.1\%$ of coordinates) compression (middle), and with \randomk($0.1\%$ of coordinates) compression (right) on MNIST autoencoder dataset without noise (red) and with Gaussian noise (green, $\sigma=0.01$ for each coordinate).
             Data points correspond to average over $10$ executions and error bars correspond to $10\%$- and $90\%$-quantiles.
             The bottom row shows the norms of the gradients averaged over the last $100$ iterations.
             The figure shows that SGD encounters and escapes saddle points for all compressors, and adding noise facilitates escaping from the saddle points${}^\dagger$.
            }
    \label{fig:saddle_experiments}}
    {\hrule\footnotesize ${}^\dagger$ For the sake of presentation, to ensure that gradient converges to $0$, we decrease the magnitude of the artificial noise at later iterations. With a fixed noise magnitude, as our theory predicts, gradient norm converges if a smaller step size is used, but this requires significantly more iterations, and the presentation is less clear. Note that this modification only affects the gradient convergence as the objective converges even with fixed noise and a large step size.}
\end{figure*}

In our experiments, we show that noisy Compressed SGD achieves convergence comparable with full SGD and successfully escapes saddle points.
We perform our first set of experiments on ResNet34 model trained using CIFAR-10 dataset with step size $0.1$.
We distribute the data across $10$ machines, such that each machine contains data from a single class.
We analyze convergence of compressed SGD with \randomk compressor when $100\%$, $10\%$, $1\%$ and $0.1\%$ random gradient coordinates are communicated.
Figure~\ref{fig:experiments} shows that SGD with \randomk with $10\%$ or $1\%$ of coordinates compression converges as fast as the full SGD, while requiring substantially smaller communication.

In our second set of experiments, we show that SGD indeed encounters saddle points and noise facilitates escaping from them.
We compare uncompressed SGD, SGD with \topk compressor ($0.1\%$ of coordinates), and SGD with \randomk compressor ($0.1\%$ of coordinates) on deep MNIST autoencoder.
The encoder is defined using $3$ convolutional layers with ReLU activation, with the following parameters (the decoder is symmetrical): (channels=16, kernel=3, stride=2, padding=1), (channels=32, kernel=3, stride=2, padding=1) and (channels=64, kernel=7, stride=1, padding=0).
For all settings, we compare their convergence rates with and without noise.
Figure~\ref{fig:saddle_experiments} shows that SGD does encounter saddle points: e.g. in Figure~\ref{fig:uncompressed_loss}, for SGD without noise, during epochs $1$-$3$, the gradient norm is close to $0$ and the objective value doesn't improve.
However, compressed SGD escapes from the saddle points, and noise significantly improves the escaping rate.

\section{Conclusion}

This paper shows the first result for convergence of compressed SGD to an $\eps$-SOSP, and it's possible that the convergence rate can be further improved.
In particular, it's unclear whether $\sqrt \eps$ in the last terms in Theorems~\ref{thm:sosp} and~\ref{thm:sosp_randomk} is required or it is an artifact of the analysis.
Moreover, when Assumption~\ref{ass:lipsg} holds, depending on $\eps$ and $d$, the convergence rate of compressed SGD with arbitrary compressor may be worse compared with uncompressed SGD, and it's not clear whether it's actually the case.

When Assumption~\ref{ass:lipsg} holds, the communication can probably be improved by the factor of $\eps^{-\nicefrac 14}$ using techniques from~\citet{Fang2019sharp}, which achieve $\tilde O(\eps^{-3.5})$ convergence rate under Assumption~\ref{ass:lipsg} outperforming $\tilde O(\eps^{-4})$ from~\cite{jin2019nonconvex} by the factor of $\eps^{-\nicefrac 12}$.
When balancing the terms in Theorems~\ref{thm:sosp} and~\ref{thm:sosp_randomk}, the communication improvement will be the square root of this value.
Similarly, using a variance reduction techniques (which achieve $\tilde O(\eps^{-3})$ convergence rate), one may expect $\eps^{-\nicefrac 12}$ improvement.

Finally, it's unclear whether linearity of the compressor is required for the bound from Theorem~\ref{thm:sosp_randomk}.
We suspect that the assumption can be relaxed: similarly to the stochastic gradient case, it may suffice for the compressor to be Lipschitz.
Proving this and showing (theoretically or empirically) that this property holds for the existing compressors is another interesting future direction.

\arxiv{\bibliographystyle{plainnat}}
\bibliography{references}

\begin{thebibliography}{44}
\providecommand{\natexlab}[1]{#1}
\providecommand{\url}[1]{\texttt{#1}}
\expandafter\ifx\csname urlstyle\endcsname\relax
  \providecommand{\doi}[1]{doi: #1}\else
  \providecommand{\doi}{doi: \begingroup \urlstyle{rm}\Url}\fi

\bibitem[Agarwal et~al.(2017)Agarwal, Allen-Zhu, Bullins, Hazan, and
  Ma]{agarwal2017finding}
Naman Agarwal, Zeyuan Allen-Zhu, Brian Bullins, Elad Hazan, and Tengyu Ma.
\newblock Finding approximate local minima faster than gradient descent.
\newblock In \emph{Proceedings of the 49th Annual ACM SIGACT Symposium on
  Theory of Computing}, pages 1195--1199. ACM, 2017.

\bibitem[Alistarh et~al.(2017)Alistarh, Grubic, Li, Tomioka, and
  Vojnovic]{alistarh2017qsgd}
Dan Alistarh, Demjan Grubic, Jerry Li, Ryota Tomioka, and Milan Vojnovic.
\newblock Qsgd: Communication-efficient sgd via gradient quantization and
  encoding.
\newblock In \emph{Advances in Neural Information Processing Systems}, pages
  1709--1720, 2017.

\bibitem[Alistarh et~al.(2018)Alistarh, Hoefler, Johansson, Khirirat,
  Konstantinov, and Renggli]{alistarh2018convergence}
Dan Alistarh, Torsten Hoefler, Mikael Johansson, Sarit Khirirat, Nikola
  Konstantinov, and C{\'e}dric Renggli.
\newblock The convergence of sparsified gradient methods.
\newblock \emph{arXiv preprint arXiv:1809.10505}, 2018.

\bibitem[Allen-Zhu(2018)]{allen2018natasha}
Zeyuan Allen-Zhu.
\newblock Natasha 2: Faster non-convex optimization than sgd.
\newblock In \emph{Advances in neural information processing systems}, pages
  2675--2686, 2018.

\bibitem[Allen-Zhu and Li(2018)]{allen2018neon2}
Zeyuan Allen-Zhu and Yuanzhi Li.
\newblock Neon2: Finding local minima via first-order oracles.
\newblock In \emph{Advances in Neural Information Processing Systems}, pages
  3716--3726, 2018.

\bibitem[Anandkumar and Ge(2016)]{anandkumarG16local}
Animashree Anandkumar and Rong Ge.
\newblock Efficient approaches for escaping higher order saddle points in
  non-convex optimization.
\newblock In Vitaly Feldman, Alexander Rakhlin, and Ohad Shamir, editors,
  \emph{29th Annual Conference on Learning Theory}, volume~49 of
  \emph{Proceedings of Machine Learning Research}, pages 81--102, Columbia
  University, New York, New York, USA, 23--26 Jun 2016. PMLR.
\newblock URL \url{http://proceedings.mlr.press/v49/anandkumar16.html}.

\bibitem[Bernstein et~al.(2018)Bernstein, Wang, Azizzadenesheli, and
  Anandkumar]{bernstein2018signsgd}
Jeremy Bernstein, Yu-Xiang Wang, Kamyar Azizzadenesheli, and Anima Anandkumar.
\newblock signsgd: Compressed optimisation for non-convex problems.
\newblock \emph{arXiv preprint arXiv:1802.04434}, 2018.

\bibitem[Bhojanapalli et~al.(2016)Bhojanapalli, Neyshabur, and
  Srebro]{bhojanapalliNS16matrix}
Srinadh Bhojanapalli, Behnam Neyshabur, and Nati Srebro.
\newblock Global optimality of local search for low rank matrix recovery.
\newblock In D.~D. Lee, M.~Sugiyama, U.~V. Luxburg, I.~Guyon, and R.~Garnett,
  editors, \emph{Advances in Neural Information Processing Systems 29}, pages
  3873--3881. Curran Associates, Inc., 2016.
\newblock URL
  \url{http://papers.nips.cc/paper/6271-global-optimality-of-local-search-for-low-rank-matrix-recovery.pdf}.

\bibitem[Boyd et~al.(2004)Boyd, Boyd, and Vandenberghe]{boyd2004convex}
Stephen Boyd, Stephen~P Boyd, and Lieven Vandenberghe.
\newblock \emph{Convex optimization}.
\newblock Cambridge university press, 2004.

\bibitem[Carmon and Duchi(2016)]{CarmonD16gradient}
Yair Carmon and John~C. Duchi.
\newblock Gradient descent efficiently finds the cubic-regularized non-convex
  newton step.
\newblock \emph{ArXiv}, abs/1612.00547, 2016.

\bibitem[Carmon et~al.(2016)Carmon, Duchi, Hinder, and
  Sidford]{carmon2016accelerated}
Yair Carmon, John~C Duchi, Oliver Hinder, and Aaron Sidford.
\newblock Accelerated methods for non-convex optimization.
\newblock \emph{arXiv preprint arXiv:1611.00756}, 2016.

\bibitem[Carmon et~al.(2017)Carmon, Hinder, Duchi, and
  Sidford]{carmon2017convex}
Yair Carmon, Oliver Hinder, John~C Duchi, and Aaron Sidford.
\newblock Convex until proven guilty: Dimension-free acceleration of gradient
  descent on non-convex functions.
\newblock \emph{arXiv preprint arXiv:1705.02766}, 2017.

\bibitem[Chilimbi et~al.(2014)Chilimbi, Suzue, Apacible, and
  Kalyanaraman]{chilimbi2014project}
Trishul Chilimbi, Yutaka Suzue, Johnson Apacible, and Karthik Kalyanaraman.
\newblock Project adam: Building an efficient and scalable deep learning
  training system.
\newblock In \emph{11th USENIX Symposium on Operating Systems Design and
  Implementation (OSDI 14)}, pages 571--582, 2014.

\bibitem[Choromanska et~al.(2015)Choromanska, Henaff, Mathieu, Arous, and
  LeCun]{choromanska2015loss}
Anna Choromanska, Mikael Henaff, Michael Mathieu, G{\'e}rard~Ben Arous, and
  Yann LeCun.
\newblock The loss surfaces of multilayer networks.
\newblock In \emph{Artificial intelligence and statistics}, pages 192--204.
  PMLR, 2015.

\bibitem[Curtis et~al.(2014)Curtis, Robinson, and Samadi]{curtis2014trust}
Frank~E Curtis, Daniel~P Robinson, and Mohammadreza Samadi.
\newblock A trust region algorithm with a worst-case iteration complexity of
  $o(\epsilon^ {-3/2} )$ for nonconvex optimization.
\newblock \emph{Mathematical Programming}, pages 1--32, 2014.

\bibitem[Dauphin et~al.(2014)Dauphin, Pascanu, Gulcehre, Cho, Ganguli, and
  Bengio]{dauphin2014identifying}
Yann~N Dauphin, Razvan Pascanu, Caglar Gulcehre, Kyunghyun Cho, Surya Ganguli,
  and Yoshua Bengio.
\newblock Identifying and attacking the saddle point problem in
  high-dimensional non-convex optimization.
\newblock \emph{Advances in Neural Information Processing Systems},
  27:\penalty0 2933--2941, 2014.

\bibitem[Dekel et~al.(2012)Dekel, Gilad-Bachrach, Shamir, and
  Xiao]{dekel2012optimal}
Ofer Dekel, Ran Gilad-Bachrach, Ohad Shamir, and Lin Xiao.
\newblock Optimal distributed online prediction using mini-batches.
\newblock \emph{Journal of Machine Learning Research}, 13\penalty0 (1), 2012.

\bibitem[Du et~al.(2017)Du, Jin, Lee, Jordan, Singh, and
  Poczos]{du2017gradient}
Simon~S Du, Chi Jin, Jason~D Lee, Michael~I Jordan, Aarti Singh, and Barnabas
  Poczos.
\newblock Gradient descent can take exponential time to escape saddle points.
\newblock In \emph{Advances in neural information processing systems}, pages
  1067--1077, 2017.

\bibitem[Fang et~al.(2018)Fang, Li, Lin, and Zhang]{fang2018spider}
Cong Fang, Chris~Junchi Li, Zhouchen Lin, and Tong Zhang.
\newblock Spider: Near-optimal non-convex optimization via stochastic
  path-integrated differential estimator.
\newblock In \emph{Advances in Neural Information Processing Systems}, pages
  687--697, 2018.

\bibitem[Fang et~al.(2019)Fang, Lin, and Zhang]{Fang2019sharp}
Cong Fang, Zhouchen Lin, and Tong Zhang.
\newblock Sharp analysis for nonconvex sgd escaping from saddle points.
\newblock \emph{arXiv preprint arXiv:1902.00247}, 2019.

\bibitem[Ge et~al.(2015)Ge, Huang, Jin, and Yuan]{geHJY15}
Rong Ge, Furong Huang, Chi Jin, and Yang Yuan.
\newblock Escaping from saddle points—online stochastic gradient for tensor
  decomposition.
\newblock In \emph{Conference on Learning Theory}, pages 797--842, 2015.

\bibitem[Ge et~al.(2016)Ge, Lee, and Ma]{geLM16matrix}
Rong Ge, Jason~D. Lee, and Tengyu Ma.
\newblock Matrix completion has no spurious local minimum.
\newblock In \emph{Proceedings of the 30th International Conference on Neural
  Information Processing Systems}, NIPS’16, page 2981–2989, Red Hook, NY,
  USA, 2016. Curran Associates Inc.
\newblock ISBN 9781510838819.

\bibitem[Hanzely et~al.(2018)Hanzely, Mishchenko, and
  Richt{\'a}rik]{hanzely2018sega}
Filip Hanzely, Konstantin Mishchenko, and Peter Richt{\'a}rik.
\newblock Sega: Variance reduction via gradient sketching.
\newblock In \emph{Advances in Neural Information Processing Systems}, pages
  2082--2093, 2018.

\bibitem[Ivkin et~al.(2019)Ivkin, Rothchild, Ullah, Stoica, Arora,
  et~al.]{ivkin2019communication}
Nikita Ivkin, Daniel Rothchild, Enayat Ullah, Ion Stoica, Raman Arora, et~al.
\newblock Communication-efficient distributed sgd with sketching.
\newblock In \emph{Advances in Neural Information Processing Systems}, pages
  13144--13154, 2019.

\bibitem[Jin et~al.(2017)Jin, Ge, Netrapalli, Kakade, and Jordan]{JinGNKJ17}
Chi Jin, Rong Ge, Praneeth Netrapalli, Sham~M. Kakade, and Michael~I. Jordan.
\newblock How to escape saddle points efficiently.
\newblock In \emph{Proceedings of the 34th International Conference on Machine
  Learning - Volume 70}, ICML'17, pages 1724--1732. JMLR.org, 2017.
\newblock URL \url{http://dl.acm.org/citation.cfm?id=3305381.3305559}.

\bibitem[Jin et~al.(2018)Jin, Netrapalli, and Jordan]{jin2018accelerated}
Chi Jin, Praneeth Netrapalli, and Michael~I Jordan.
\newblock Accelerated gradient descent escapes saddle points faster than
  gradient descent.
\newblock In \emph{Conference On Learning Theory}, pages 1042--1085. PMLR,
  2018.

\bibitem[Jin et~al.(2021)Jin, Netrapalli, Ge, Kakade, and
  Jordan]{jin2019nonconvex}
Chi Jin, Praneeth Netrapalli, Rong Ge, Sham~M Kakade, and Michael~I Jordan.
\newblock On nonconvex optimization for machine learning: Gradients,
  stochasticity, and saddle points.
\newblock \emph{Journal of the ACM (JACM)}, 68\penalty0 (2):\penalty0 1--29,
  2021.

\bibitem[Karimireddy et~al.(2019)Karimireddy, Rebjock, Stich, and
  Jaggi]{karimireddy2019error}
Sai~Praneeth Karimireddy, Quentin Rebjock, Sebastian~U Stich, and Martin Jaggi.
\newblock Error feedback fixes signsgd and other gradient compression schemes.
\newblock \emph{arXiv preprint arXiv:1901.09847}, 2019.

\bibitem[Lee et~al.(2016)Lee, Simchowitz, Jordan, and Recht]{lee2016gradient}
Jason~D Lee, Max Simchowitz, Michael~I Jordan, and Benjamin Recht.
\newblock Gradient descent only converges to minimizers.
\newblock In \emph{Conference on learning theory}, pages 1246--1257, 2016.

\bibitem[Li et~al.(2014)Li, Andersen, Smola, and Yu]{li2014communication}
Mu~Li, David~G Andersen, Alexander~J Smola, and Kai Yu.
\newblock Communication efficient distributed machine learning with the
  parameter server.
\newblock In \emph{NIPS}, volume~2, pages 1--4, 2014.

\bibitem[McMahan et~al.(2017)McMahan, Moore, Ramage, Hampson, and
  y~Arcas]{mcmahan2017communication}
Brendan McMahan, Eider Moore, Daniel Ramage, Seth Hampson, and Blaise~Aguera
  y~Arcas.
\newblock Communication-efficient learning of deep networks from decentralized
  data.
\newblock In \emph{Artificial Intelligence and Statistics}, pages 1273--1282.
  PMLR, 2017.

\bibitem[Nesterov(2000)]{nesterov00}
Yurii Nesterov.
\newblock Squared functional systems and optimization problems.
\newblock In \emph{High performance optimization}, pages 405--440. Springer,
  2000.

\bibitem[Nesterov and Polyak(2006)]{nesterovP06}
Yurii Nesterov and Boris~T Polyak.
\newblock Cubic regularization of newton method and its global performance.
\newblock \emph{Mathematical Programming}, 108\penalty0 (1):\penalty0 177--205,
  2006.

\bibitem[Pearlmutter(1994)]{Pearlmutter94product}
Barak~A. Pearlmutter.
\newblock Fast exact multiplication by the hessian.
\newblock \emph{Neural Comput.}, 6\penalty0 (1):\penalty0 147–160, January
  1994.
\newblock ISSN 0899-7667.
\newblock \doi{10.1162/neco.1994.6.1.147}.
\newblock URL \url{https://doi.org/10.1162/neco.1994.6.1.147}.

\bibitem[Reddi et~al.(2017)Reddi, Zaheer, Sra, Poczos, Bach, Salakhutdinov, and
  Smola]{reddi2017generic}
Sashank~J Reddi, Manzil Zaheer, Suvrit Sra, Barnabas Poczos, Francis Bach,
  Ruslan Salakhutdinov, and Alexander~J Smola.
\newblock A generic approach for escaping saddle points.
\newblock \emph{arXiv preprint arXiv:1709.01434}, 2017.

\bibitem[Schraudolph(2002)]{schraudolph2002product}
Nicol~N Schraudolph.
\newblock Fast curvature matrix-vector products for second-order gradient
  descent.
\newblock \emph{Neural computation}, 14\penalty0 (7):\penalty0 1723--1738,
  2002.

\bibitem[Stich(2018)]{stich2018local}
Sebastian~U Stich.
\newblock Local sgd converges fast and communicates little.
\newblock \emph{arXiv preprint arXiv:1805.09767}, 2018.

\bibitem[Stich et~al.(2018)Stich, Cordonnier, and Jaggi]{stich2018sparsified}
Sebastian~U Stich, Jean-Baptiste Cordonnier, and Martin Jaggi.
\newblock Sparsified sgd with memory.
\newblock In \emph{Advances in Neural Information Processing Systems}, pages
  4447--4458, 2018.

\bibitem[Strom(2015)]{strom2015scalable}
Nikko Strom.
\newblock Scalable distributed dnn training using commodity gpu cloud
  computing.
\newblock In \emph{Sixteenth Annual Conference of the International Speech
  Communication Association}, 2015.

\bibitem[{Sun} et~al.(2016){Sun}, {Qu}, and {Wright}]{sunQW16phase}
J.~{Sun}, Q.~{Qu}, and J.~{Wright}.
\newblock A geometric analysis of phase retrieval.
\newblock In \emph{2016 IEEE International Symposium on Information Theory
  (ISIT)}, pages 2379--2383, 2016.

\bibitem[Tripuraneni et~al.(2018)Tripuraneni, Stern, Jin, Regier, and
  Jordan]{tripuraneni2018stochastic}
Nilesh Tripuraneni, Mitchell Stern, Chi Jin, Jeffrey Regier, and Michael~I
  Jordan.
\newblock Stochastic cubic regularization for fast nonconvex optimization.
\newblock In \emph{Advances in Neural Information Processing Systems}, pages
  2904--2913, 2018.

\bibitem[Xu et~al.(2018)Xu, Jin, and Yang]{xu2018first}
Yi~Xu, Rong Jin, and Tianbao Yang.
\newblock First-order stochastic algorithms for escaping from saddle points in
  almost linear time.
\newblock In \emph{Advances in Neural Information Processing Systems}, pages
  5530--5540, 2018.

\bibitem[Zhou and Gu(2019)]{zhou2019stochastic}
Dongruo Zhou and Quanquan Gu.
\newblock Stochastic recursive variance-reduced cubic regularization methods.
\newblock \emph{arXiv preprint arXiv:1901.11518}, 2019.

\bibitem[Zhou et~al.(2018)Zhou, Xu, and Gu]{zhou2018finding}
Dongruo Zhou, Pan Xu, and Quanquan Gu.
\newblock Finding local minima via stochastic nested variance reduction.
\newblock \emph{arXiv preprint arXiv:1806.08782}, 2018.

\end{thebibliography}
\newpage
\clearpage
\onecolumn
\appendix
\section{Convergence to \texorpdfstring{$\eps$}{eps}-FOSP}
\label{sec:fosp}

In this section we prove Theorem~\ref{thm:fosp}, showing that Algorithm~\ref{alg:arbitrary_compressor} converges to an approximate first-order stationary point.
Results and proofs are inspired by~\citet{karimireddy2019error}, with the key difference in that we show how to avoid using the bounded gradient assumption: $\Exp{\|\nabla F\|^2} \le G^2$ and handle the case of $\compr$-compressors with $\compr \ll 1$.
Furthermore, Compressed Descent Lemma (Lemma~\ref{lem:descent_lemma}) is a foundation for showing a second-order convergence.

\begin{definition}[Noise and compression parameters]
\label{def:noises}
We use the following notation:
\begin{compactitem}
    \item $\sgdNoise_t = \nabla F(\vx_t, \sgdRand_t) - \nabla f(\vx_t)$ is stochastic gradient noise. This noise has variance $\std^2$
    \item $\ourNoise_t$ is artificial Gaussian noise added at every iteration. This noise has variance $r^2$
    \item $\fullNoise_t = \sgdNoise_t + \ourNoise_t$ is the total noise. This noise has variance $\fullStd^2 = \std^2 + r^2$.
    \item We assume that gradients are compressed using a $\compr$-compressor $C$.
\end{compactitem}
\end{definition}

For the sake of the analysis, similarly to~\citet{karimireddy2019error}, we introduce an auxiliary sequence of corrected iterates $\{\tvx_t\}$, which remove the impact of the compression error.
\begin{definition}[Corrected iterates]
\label{def:corrected_iterates}
The sequence of \emph{corrected iterates} $\{\tvx_t\}$ is defined as
$$\tvx_t =\vx_t - \step \err_t$$ 
\end{definition}

\begin{proposition}
\label{prop:y_diff}
For the sequence $\{\tvx_t\}$, we have $\tvx_{t+1} - \tvx_t = - \step (\nabla f(\vx_t) + \fullNoise_t)$
\end{proposition}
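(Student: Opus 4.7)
The plan is a direct algebraic computation using the update equations in Algorithm~\ref{alg:arbitrary_compressor} together with Definitions~\ref{def:noises} and~\ref{def:corrected_iterates}. First I would write
\[
\tvx_{t+1} - \tvx_t \;=\; (\vx_{t+1} - \vx_t) \;-\; \step(\err_{t+1} - \err_t),
\]
then substitute $\vx_{t+1} - \vx_t = -\step \g_t$ from Line~\ref{line:update} and $\err_{t+1} - \err_t = \nabla F(\vx_t, \sgdRand_t) + \ourNoise_t - \g_t$ from Line~\ref{line:error}. The compressed gradient $\g_t$ cancels, leaving $-\step(\nabla F(\vx_t, \sgdRand_t) + \ourNoise_t)$. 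Finally, using $\nabla F(\vx_t, \sgdRand_t) = \nabla f(\vx_t) + \sgdNoise_t$ together with $\fullNoise_t = \sgdNoise_t + \ourNoise_t$ from Definition~\ref{def:noises} yields the claimed identity.

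The one subtle point I would flag explicitly is the conditional reset in Lines~\ref{line:check_err_to_zero}--\ref{line:set_err_to_zero}, which (when $\reseterr=true$) can replace $\vx_t$ by $\vx_t - \step \err_t$ and set $\err_t$ to $0^d$ \emph{before} the gradient step of the current iteration. I would check directly that $\tvx_t = \vx_t - \step \err_t$ is invariant under this substitution: the new value is $(\vx_t - \step \err_t) - \step \cdot 0 = \vx_t - \step \err_t$, which equals the old $\tvx_t$. Hence it is harmless to apply the main identity after this reset, and the conclusion holds at every iteration regardless of $\reseterr$.

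There is no genuine obstacle here; the proposition is a one-line manipulation whose content is conceptual rather than technical. Its role in the sequel is to show that on the corrected iterates the algorithm behaves exactly like vanilla SGD on $f$ with effective per-step noise $\fullNoise_t$, which is what will let standard descent-style arguments (such as the Compressed Descent Lemma referenced later) be applied to $f(\tvx_{t+1}) - f(\tvx_t)$ rather than to $f(\vx_{t+1}) - f(\vx_t)$.
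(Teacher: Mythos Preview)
Your proposal is correct and follows essentially the same direct algebraic route as the paper's proof; both rely on the cancellation of $\g_t$ between the update rule for $\vx_{t+1}$ and the error-feedback rule for $\err_{t+1}$. Your explicit check that $\tvx_t$ is invariant under the optional reset in Lines~\ref{line:check_err_to_zero}--\ref{line:set_err_to_zero} is a welcome addition that the paper's own proof omits.
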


\begin{proof}
Recall that $\err_{t+1} = \nabla f(\vx_t) + \fullNoise_t + \err_t - \g_t$ and $\g_t = \C(\nabla f(\vx_t) + \fullNoise_t + \err_t, \sgdRand_t)$ and thus
$$ \C(\nabla f(\vx_t) + \fullNoise_t + \err_t, \comprRand_t) = \nabla f(\vx_t) + \fullNoise_t + \err_t - \err_{t + 1}.$$

Substituting this into equation for $\tvx_{t+1}$:
\begin{align*}
    \tvx_{t+1}
    &= \vx_{t+1} - \step \err_{t+1} \\
    &= \vx_{t} - \step \C(\nabla f(\vx_t) + \fullNoise_t + \err_t, \comprRand_t) - \step \err_{t+1}
        && \text{(Since $\vx_{t + 1} = \vx_t - \step \C(\nabla f(\vx_t) + \fullNoise_t + \err_t, \comprRand_t) $)}\\
    &= \vx_t - \step (\nabla f(\vx_t) + \fullNoise_t + \err_t - \err_{t+1}) - \step \err_{t+1}
        \\
    &= \vx_t - \step (\nabla f(\vx_t) + \fullNoise_t + \err_t)  \\
    &= \vx_t - \step \err_t - \step (\nabla f(\vx_t) + \fullNoise_t)  \\
    &= \tvx_t - \step (\nabla f(\vx_t) + \fullNoise_t)
\end{align*}
\end{proof}

\subsection{Compression Error Bound}

Recall that the compression error terms $\err_t$ in Algorithm~\ref{alg:arbitrary_compressor} represent the difference between the computed gradient and the compressed gradient.
Similarly to how stochastic noise increases the number of iterations compared with deterministic gradient descent, compression errors also increase the number of iterations, and therefore it's important to bound $\|\err_t\|$.

\begin{lemma}[Compression Error Bound]
\label{lem:error_estimation}
Let $\vx_t, \err_t$ be defined as in Algorithm~\ref{alg:arbitrary_compressor} and let $\fullStd^2$ be as in Definition~\ref{def:noises}. Then under Assumptions~\ref{ass:lip} and~\ref{ass:sg}, for any $t$ we have
\begin{align*}
\Exp{\|\err_{t}\|^2} \le \frac {2 (1 - \compr)} {\compr} \sum_{i=0}^{t-1} \left(1 - \frac \compr 2\right)^{t-i} \Exp{\|\nabla f(\vx_i)\|^2 + \fullStd^2},
\end{align*}
\end{lemma}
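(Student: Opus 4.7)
The plan is to set up a one-step recursion bounding $\Exp{\|\err_{t+1}\|^2}$ in terms of $\Exp{\|\err_t\|^2}$ and $\Exp{\|\nabla f(\vx_t)\|^2 + \fullStd^2}$, then unroll it using $\err_0 = 0$. The starting point is the error update from Algorithm~\ref{alg:arbitrary_compressor}: writing $V_t := \err_t + \nabla f(\vx_t) + \fullNoise_t$, the update reads $\err_{t+1} = V_t - \C(V_t, \comprRand_t)$. Conditioning on everything prior to sampling $\comprRand_t$ and invoking the definition of a $\compr$-compressor yields $\ExpArg{\comprRand_t}{\|\err_{t+1}\|^2 \mid V_t} \le (1 - \compr)\|V_t\|^2$.

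Next I would expand $\|V_t\|^2 = \|\err_t + (\nabla f(\vx_t) + \fullNoise_t)\|^2$ via Young's inequality $\|a+b\|^2 \le (1+\gamma)\|a\|^2 + (1+1/\gamma)\|b\|^2$ with $\gamma$ calibrated so that $(1-\compr)(1+\gamma) \le 1 - \compr/2$. Taking $\gamma = \compr/(2(1-\compr))$ achieves this and simultaneously yields $(1-\compr)(1+1/\gamma) \le 2(1-\compr)/\compr$. Combining with the compressor bound gives the conditional inequality
\[
\Exp{\|\err_{t+1}\|^2 \mid \mathcal F_t} \le \left(1 - \frac{\compr}{2}\right)\|\err_t\|^2 + \frac{2(1-\compr)}{\compr}\|\nabla f(\vx_t) + \fullNoise_t\|^2,
\]
where $\mathcal F_t$ is the $\sigma$-algebra carrying all randomness through iteration $t-1$, so that $\err_t$ and $\vx_t$ are $\mathcal F_t$-measurable while $\sgdRand_t, \ourNoise_t, \comprRand_t$ are freshly sampled.

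Taking total expectations and using that $\fullNoise_t = \sgdNoise_t + \ourNoise_t$ is zero-mean and independent of $\vx_t$ (so the cross term vanishes and $\Exp{\|\fullNoise_t\|^2} = \fullStd^2$), I obtain the scalar recursion
\[
a_{t+1} \le \left(1-\frac{\compr}{2}\right) a_t + \frac{2(1-\compr)}{\compr}\bigl(\Exp{\|\nabla f(\vx_t)\|^2} + \fullStd^2\bigr), \qquad a_t := \Exp{\|\err_t\|^2}.
\]
Unrolling this geometric recursion from $a_0 = 0$ produces precisely the weighted sum claimed in the lemma.

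\textbf{Main obstacle.} The proof is essentially bookkeeping, and the only real delicacy is the choice of Young's parameter: we need the self-coupling factor $(1-\compr)(1+\gamma)$ to contract to $1-\compr/2$ (so that the geometric series converges at rate $\compr/2$) while simultaneously keeping the multiplier on the noise-plus-gradient term at $2(1-\compr)/\compr$, and one must verify that both inequalities hold simultaneously for the chosen $\gamma$. Everything else, including the conditional independence arguments, is standard for compressed-SGD analyses.
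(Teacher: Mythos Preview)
Your approach is identical to the paper's: apply the compressor contraction to $\err_{t+1}=V_t-\C(V_t,\comprRand_t)$, split $\|V_t\|^2$ via Young's inequality with parameter $\gamma=\compr/(2(1-\compr))$, and unroll. One small arithmetic point: your estimate $(1-\compr)(1+1/\gamma)\le 2(1-\compr)/\compr$ is not tight---the exact value is $(1-\compr)(2-\compr)/\compr=(1-\compr/2)\cdot\frac{2(1-\compr)}{\compr}$. With your looser coefficient the recursion unrolls to exponent $t-1-i$ rather than $t-i$, so what you actually obtain is $(1-\compr/2)^{-1}\le 2$ times the stated bound, not ``precisely the weighted sum claimed.'' Keeping the exact constant (equivalently, writing the one-step recursion as $a_{t+1}\le(1-\compr/2)\bigl[a_t+\frac{2(1-\compr)}{\compr}c_t\bigr]$, which is how the paper's telescoping effectively proceeds) recovers the lemma verbatim.
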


In particular, by considering a uniform bound on $\Exp{\|\nabla f(\vx_i)\|^2}$ and taking the sum of the geometric series, we get a result similar to \citet[Lemma 3]{karimireddy2019error}:
\begin{align*}
\Exp{\|\err_{t}\|^2} \le \frac {4 (1 - \compr)} {\compr^2} \left(\max_{i=0}^{t-1} \Exp{\|\nabla f(\vx_i)\|^2} + \fullStd^2\right)
\end{align*}
\begin{proof}
The proof is similar to the one of~\citet[Lemma~3]{karimireddy2019error}.
The main difference is that we don't rely on the bounded gradient assumption.

By definition of $\err_{t+1}$:
\begin{align*}
\Exp{\|\err_{t+1}\|^2}
&=\Exp{\|\err_{t} + \nabla f(\vx_t) + \fullNoise_t - \C(\err_t + \nabla f(\vx_t) + \fullNoise_t, \sgdRand_t)\|^2} \\
&\le (1 - \compr) \Exp{\|\err_{t} + \nabla f(\vx_t) + \fullNoise_t\|^2} \\
\end{align*}
By using inequality $\|a + b\|^2 \le (1 + \nu) \|a\|^2 + (1 + \frac 1\nu) \|b\|^2$ for any $\nu$:

\begin{align*}
\Exp{\|\err_{t+1}\|^2}
&\le (1 - \compr) ((1+\nu)\Exp{\|\err_{t}\|^2} + (1 +\frac 1\nu)\Exp{\|\nabla f(\vx_t) + \fullNoise_t\|^2}) \\
&\le \sum_{i=0}^t  (1 - \compr)^{t-i+1} (1+\nu)^{t-i} (1 +\frac 1\nu)\Exp{\|\nabla f(\vx_i) + \fullNoise_i\|^2} &&\text{(Telescoping)} \\
&\le \frac 1\nu \sum_{i=0}^t  \left((1 - \compr) (1+\nu)\right)^{t-i+1} \Exp{\|\nabla f(\vx_i) + \fullNoise_i\|^2} \\
\end{align*}

By selecting $\nu= \frac {\compr} {2 (1 - \compr)}$, we have $(1 - \compr) (1 + \nu) = 1 - \frac \compr 2$.
Therefore:
\begin{align*}
\Exp{\|\err_{t+1}\|^2}
&\le \frac {2 (1 - \compr)} {\compr} \sum_{i=0}^t  \left(1 - \frac \compr 2 \right)^{t-i+1} \Exp{\|\nabla f(\vx_i) + \fullNoise_i\|^2}\\
&= \frac {2 (1 - \compr)} {\compr} \sum_{i=0}^t  \left(1 - \frac \compr 2 \right)^{t-i+1} \Exp{\|\nabla f(\vx_i)\|^2 + \fullStd^2} && (\Exp{\fullStd \mid \vx_i} = 0)
\end{align*}
\end{proof}

For the sum of $\|\err_t\|^2$, we have the following, simpler expression:
\begin{corollary}
\label{cor:error_sum}
Under assumptions of Lemma~\ref{lem:error_estimation}, we have
\begin{align*}
\sum_{\tau=0}^t \Exp{\|\err_{\tau}\|^2} \le \frac {4 (1 - \compr)} {\compr^2} \sum_{\tau=0}^t (\Exp{\|\nabla f(\vx_\tau)\|^2} + \fullStd^2)
\end{align*}
\end{corollary}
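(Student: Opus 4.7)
The plan is to derive this bound directly from Lemma~\ref{lem:error_estimation} by summing its conclusion over $\tau$ and interchanging the order of summation. Starting from the per-iterate bound
\[\Exp{\|\err_{\tau}\|^2} \le \frac{2(1-\compr)}{\compr} \sum_{i=0}^{\tau-1} \left(1-\tfrac{\compr}{2}\right)^{\tau-i} \Exp{\|\nabla f(\vx_i)\|^2 + \fullStd^2},\]
I would sum over $\tau = 0, \ldots, t$ and swap the two summations, grouping by the index $i$ instead of $\tau$. This regroups the contribution of each gradient norm $\Exp{\|\nabla f(\vx_i)\|^2 + \fullStd^2}$ with the geometric tail $\sum_{\tau=i+1}^{t} (1 - \compr/2)^{\tau - i}$.

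Next I would bound the inner geometric sum by its infinite version, which gives
\[\sum_{\tau=i+1}^{t} \left(1-\tfrac{\compr}{2}\right)^{\tau-i} \le \sum_{k=1}^{\infty} \left(1-\tfrac{\compr}{2}\right)^{k} = \frac{1 - \compr/2}{\compr/2} \le \frac{2}{\compr}.\]
Multiplying this by the $\frac{2(1-\compr)}{\compr}$ prefactor yields the desired $\frac{4(1-\compr)}{\compr^2}$ constant in front of $\sum_{i=0}^{t-1} \Exp{\|\nabla f(\vx_i)\|^2 + \fullStd^2}$. Finally, extending the summation range from $i = 0, \ldots, t-1$ to $\tau = 0, \ldots, t$ is valid since all summands are nonnegative, giving the stated inequality.

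There is no real obstacle here: the statement is a straightforward "sum-of-convolution" consequence of Lemma~\ref{lem:error_estimation}, and the only minor care needed is in verifying the geometric-series bound yields exactly the constant $4(1-\compr)/\compr^2$ rather than something slightly larger, and in justifying the inclusion of the extra endpoint terms when passing from $\sum_{i=0}^{t-1}$ to $\sum_{\tau=0}^{t}$ (both are fine by nonnegativity).
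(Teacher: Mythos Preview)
The proposal is correct and follows essentially the same approach as the paper: sum the bound from Lemma~\ref{lem:error_estimation} over $\tau$, interchange the order of summation, and bound the resulting geometric tail by $2/\compr$. Your treatment of the geometric sum and the extension of the index range is slightly more explicit than the paper's, but the argument is identical in substance.
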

\begin{proof}
\begin{align*}
    \sum_{\tau=0}^t \Exp{\|\err_{\tau}\|^2}
    &\le \frac{2(1 - \compr)}{\compr} \sum_{\tau=0}^t \sum_{i=0}^{\tau}  \left(1 - \frac \compr 2\right)^{\tau-i+1} \Exp{\|\nabla f(\vx_i)\|^2 + \fullStd^2} \\
    &\le \frac{2(1 - \compr)}{\compr} \sum_{i=0}^t \left( \Exp{\|\nabla f(\vx_i)\|^2 + \fullStd^2} \sum_{\tau=i}^{t}  \left(1 - \frac \compr 2\right)^{\tau-i+1} \right)
\end{align*}    
Bounding $\sum_\tau \left(1 - \frac \compr 2\right)^{\tau-i+1}$ with the sum of the geometric series $\frac 2 \compr$, we have:
\begin{align*}
    \sum_{\tau=0}^t \Exp{\|\err_{\tau}\|^2}
    \le \frac {4 (1 - \compr)} {\compr^2} \sum_{i=0}^t \Exp{\|\nabla f(\vx_i)\|^2 + \fullStd^2}
\end{align*}    
\end{proof}

\subsection{Compressed Descent Lemma}

The following descent lemma is the key tool in the analysis as it allows us to bound gradient norms across multiple iterations.
\begin{lemma}[Compressed Descent Lemma]
\label{lem:descent_lemma}
Let $f$ satisfy Assumptions~\ref{ass:lip} and~\ref{ass:sg} and $\fullStd^2$ be as in Definition~\ref{def:noises}. For $\step < \frac 1 {4 \gradLip} \min(\frac \compr \sqrtomc, 1)$, for any $T$ we have:
\begin{align*}
\sum_{\tau=0}^{T-1} \Exp{\|\nabla f(\vx_\tau)\|^2}
&\le \frac {4 (f(\tvx_0) - \Exp{f(\tvx_T)})} {\step}
     + \step T \fullStd^2 \left(2 \gradLip + \frac {8 \gradLip^2 \step (1 - \compr)} {\compr^2} \right)
\end{align*}
\end{lemma}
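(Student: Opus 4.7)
The plan is to apply the standard smoothness-based descent argument to the corrected iterate sequence $\{\tvx_t\}$ rather than to $\{\vx_t\}$, exploiting the clean identity $\tvx_{t+1} - \tvx_t = -\step(\nabla f(\vx_t) + \fullNoise_t)$ from Proposition~\ref{prop:y_diff}. By $L$-smoothness of $f$,
\[
f(\tvx_{t+1}) \le f(\tvx_t) - \step \langle \nabla f(\tvx_t),\, \nabla f(\vx_t) + \fullNoise_t\rangle + \frac{L \step^2}{2}\|\nabla f(\vx_t) + \fullNoise_t\|^2.
\]
Taking expectation conditional on everything up to iteration $t$ kills the inner products involving $\fullNoise_t$ (since $\Exp{\fullNoise_t}=0$) and leaves a variance term $L\step^2 \fullStd^2/2$ from the square.

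The first technical step is to replace $\langle \nabla f(\tvx_t), \nabla f(\vx_t)\rangle$ by $\|\nabla f(\vx_t)\|^2$ at the price of a controllable error. Using the polarization identity $\langle a,b\rangle = \tfrac{1}{2}\|a\|^2 + \tfrac{1}{2}\|b\|^2 - \tfrac{1}{2}\|a-b\|^2$ together with $L$-smoothness gives
\[
-\step\langle \nabla f(\tvx_t),\nabla f(\vx_t)\rangle \le -\tfrac{\step}{2}\|\nabla f(\vx_t)\|^2 + \tfrac{\step}{2}\|\nabla f(\tvx_t) - \nabla f(\vx_t)\|^2 \le -\tfrac{\step}{2}\|\nabla f(\vx_t)\|^2 + \tfrac{L^2 \step^3}{2}\|\err_t\|^2,
\]
where we used $\|\tvx_t-\vx_t\| = \step\|\err_t\|$ (Definition~\ref{def:corrected_iterates}). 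Combining with the smoothness inequality and rearranging yields a per-step bound of the form
\[
\Bigl(\tfrac{\step}{2} - \tfrac{L\step^2}{2}\Bigr) \Exp{\|\nabla f(\vx_t)\|^2} \le \Exp{f(\tvx_t) - f(\tvx_{t+1})} + \tfrac{L^2\step^3}{2}\Exp{\|\err_t\|^2} + \tfrac{L\step^2}{2}\fullStd^2.
\]

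Next I telescope over $\tau = 0,\dots,T-1$ and invoke the accumulated compression-error bound from Corollary~\ref{cor:error_sum}, $\sum_\tau \Exp{\|\err_\tau\|^2} \le \tfrac{4(1-\compr)}{\compr^2}\sum_\tau (\Exp{\|\nabla f(\vx_\tau)\|^2} + \fullStd^2)$, to transform the $\|\err_\tau\|^2$ term into a further $\sum_\tau \|\nabla f(\vx_\tau)\|^2$ contribution plus an extra variance term. The key obstacle is that this new contribution has the same form as the left-hand side, so I must \emph{absorb} it into the LHS. The step-size condition $\step \le \frac{1}{4L}\min(\frac{\compr}{\sqrtomc},1)$ is precisely what is needed: the condition $\step \le \frac{1}{4L}$ keeps $\frac{\step}{2} - \frac{L\step^2}{2}$ bounded below by a constant fraction of $\step$, while $\step \le \frac{\compr}{4L\sqrtomc}$ ensures $\frac{L^2\step^3(1-\compr)}{\compr^2}$ is dominated by $\step/8$ (up to constants), so a net coefficient of order $\step$ survives on the LHS.

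Dividing through by this coefficient and collecting constants produces the stated bound with leading term $4(f(\tvx_0)-\Exp{f(\tvx_T)})/\step$ and a residual $\step T \fullStd^2 (2L + 8L^2\step(1-\compr)/\compr^2)$ tracking the stochastic noise and the compression-amplified noise respectively. The main difficulty is bookkeeping the constants after the absorption step; the conceptual part — descent on $\tvx_t$ plus the error-feedback cancellation via Corollary~\ref{cor:error_sum} — is straightforward once Proposition~\ref{prop:y_diff} is in place.
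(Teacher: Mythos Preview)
Your proposal is correct and follows essentially the same route as the paper's own proof: apply the smoothness descent inequality to the corrected sequence $\{\tvx_t\}$ via Proposition~\ref{prop:y_diff}, decouple $\langle \nabla f(\tvx_t),\nabla f(\vx_t)\rangle$ (the paper writes it as $-\step\|\nabla f(\vx_t)\|^2 - \step\langle \nabla f(\tvx_t)-\nabla f(\vx_t),\nabla f(\vx_t)\rangle$ and applies $|\langle a,b\rangle|\le \tfrac12\|a\|^2+\tfrac12\|b\|^2$, which is algebraically equivalent to your polarization step), bound $\|\nabla f(\tvx_t)-\nabla f(\vx_t)\|^2$ by $L^2\step^2\|\err_t\|^2$, telescope, invoke Corollary~\ref{cor:error_sum}, and absorb using the step-size condition. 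The constants and the absorption arithmetic line up exactly with the paper's derivation.
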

Using this lemma, we'll later show that for sufficiently large $T$, multiple visited points have small gradients (note that by dividing the left-hand side by $T$ we obtain an average squared gradient norm), making them $\eps$-FOSP.
On the right-hand side the first term is bounded by $\nicefrac {4 \fmax} \step$, while the other two terms can be bounded by selecting a sufficiently small $\step$. The second term arises from stochastic gradient noise, while the last term stems from the compression error.

\begin{proof}
The proof is similar to the one of~\citet[Theorem~II]{karimireddy2019error}.
By the folklore descent lemma:
\begin{align*}
&\ExpArg{\sgdRand_t,\comprRand_t}{f(\tvx_{t+1}) \mid \vx_t, \err_t} \\
&\le f(\tvx_t) + \langle \nabla f(\tvx_t), \ExpArg{\sgdRand_t,\comprRand_t}{\tvx_{t+1} - \tvx_t \mid \vx_t, \err_t} \rangle
    + \frac L2 \ExpArg{\sgdRand_t,\comprRand_t}{\|\tvx_{t+1} - \tvx_t\|^2 \mid \vx_t, \err_t} \\
&= f(\tvx_t) - \step \ExpArg{\sgdRand_t,\comprRand_t}{\langle \nabla f(\tvx_t), \nabla f(\vx_t) + \fullNoise_t \rangle \mid \vx_t, \err_t}
    + \frac {\gradLip \step^2}2 \ExpArg{\sgdRand_t,\comprRand_t}{\|\nabla f(\vx_t) + \fullNoise_t\|^2 \mid \vx_t, \err_t}
    && (Prop.~\ref{prop:y_diff})\\
&\le f(\tvx_t) - \step \|\nabla f(\vx_t)\|^2
    - \step \langle \nabla f(\tvx_t) - \nabla f(\vx_t), \nabla f(\vx_t) \rangle
    + \frac {\gradLip \step^2}2 \|\nabla f(\vx_t)\|^2 + \frac {\gradLip \step^2 \fullStd^2} 2
    && (\Exp{\fullNoise_t} = 0)\\
&\le f(\tvx_t) - \step \left(1 - \frac {\gradLip \step}2\right) \|\nabla f(\vx_t)\|^2 + \frac {\gradLip \step^2 \fullStd^2}2 - \step \langle \nabla f(\tvx_t) - \nabla f(\vx_t), \nabla f(\vx_t) \rangle \\
\end{align*}

Using inequality $|\langle a, b \rangle| \le \frac {\|a\|^2} 2 + \frac {\|b\|^2} 2$ and smoothness, we have:
\begin{align*}
&\ExpArg{\sgdRand_t,\comprRand_t}{f(\tvx_{t+1}) \mid \vx_t, \err_t}\\
&\le f(\tvx_t) - \step \left(1 - \frac {\gradLip \step}2 \right) \|\nabla f(\vx_t)\|^2
    + \frac {\gradLip \step^2 \fullStd^2}2 + \frac \step 2 \|\nabla f(\tvx_t) - \nabla f(\vx_t)\|^2 + \frac \step 2 \|\nabla f(\vx_t)\|^2 \\
&\le f(\tvx_t) - \step \left(\frac 12 - \frac {\gradLip \step}2 \right) \|\nabla f(\vx_t)\|^2
    + \frac {\gradLip \step^2 \fullStd^2}2 + \frac {\step \gradLip^2} 2 \|\tvx_t -\vx_t\|^2
    && (\text{$\gradLip$-smoothness})\\
&\le f(\tvx_t) - \step \left(\frac 12 - \frac {\gradLip \step}2 \right) \|\nabla f(\vx_t)\|^2
    + \frac {\gradLip \step^2 \fullStd^2}2 + \frac {\step^3 \gradLip^2} 2 \|\err_t\|^2
    && (\text{Def.~\ref{def:corrected_iterates} of $\tvx_t$})
\end{align*}

Using telescoping and taking the expectation, we bound $f(\tvx_{t+1})$:
\begin{align*}
\Exp{f(\tvx_{t+1})}
\le f(\tvx_0) - \step \left(\frac 12 - \frac {\gradLip \step}2 \right) \sum_{\tau=0}^t \Exp{\|\nabla f(\vx_\tau)\|^2}
    + \frac {\gradLip \fullStd^2 \step^2 (t+1)}2
    + \step^3 \gradLip^2 \sum_{\tau=0}^t \Exp{\|\err_\tau\|^2}
\end{align*}


Bounding $\sum_\tau \|\err_\tau\|^2$ by Corollary~\ref{cor:error_sum}, we have:
\begin{align*}
&\Exp{f(\tvx_{t})}\\
&\le f(\tvx_0) - \step \left(\frac 12 - \frac {\gradLip \step}2 \right) \sum_{\tau=0}^{t-1} \Exp{\|\nabla f(\vx_\tau)\|^2}
        + \frac {\gradLip \fullStd^2 \step^2 t}2
        + \frac {2 \step^3 \gradLip^2 (1 - \compr)} {\compr^2} \sum_{i=0}^{t-1} \Exp{\|\nabla f(\vx_i)\|^2 + \fullStd^2} \\
&\le f(\tvx_0) - \step \left(\frac 12 - \frac {\gradLip \step}2 - \frac {2 \step^2 \gradLip^2 (1 - \compr)} {\compr^2} \right) \sum_{\tau=0}^{t-1} \Exp{\|\nabla f(\vx_\tau)\|^2}
        + \frac {\gradLip \fullStd^2 \step^2 t}2
        + \frac {2 \step^3 \gradLip^2 \fullStd^2 (1 - \compr) t} {\compr^2}
\end{align*}

Using that $\step < \frac 1 {4 \gradLip} \min\left(\frac \compr \sqrtomc, 1\right)$,
we bound the coefficient before $\sum_{\tau=0}^t \Exp{\|\nabla f(\vx_\tau)\|^2}$ with $\frac \step 4$:
\begin{align*}
\Exp{f(\tvx_t)}
&\le f(\tvx_0) - \frac \step 4 \sum_{\tau=0}^{t-1} \Exp{\|\nabla f(\vx_\tau)\|^2}
        + \step^2 \fullStd^2 t \left(\frac \gradLip 2
        + \frac {2 \gradLip^2 \step (1 - \compr)} {\compr^2}\right)
\end{align*}
After regrouping the terms, we get the final result:
\begin{align*}
\sum_{\tau=0}^{t-1} \Exp{\|\nabla f(\vx_\tau)\|^2}
&\le \frac {4 (f(\tvx_0) - \Exp{f(\tvx_t)})} {\step}
        + \step \fullStd^2 t \left(2\gradLip + \frac {8 \gradLip^2 \step (1 - \compr)} {\compr^2}\right)
\end{align*}
\end{proof}

\subsection{Convergence to \texorpdfstring{$\eps$}{eps}-FOSP}

\begin{theorem}[Convergence to $\eps$-FOSP]
Let $f$ satisfy Assumptions~\ref{ass:lip} and~\ref{ass:sg}.
Then for $\step=\tilde O\left(\min \left(\eps^2, \frac \compr \sqrtomc \eps \right)\right)$,
after $T=\tilde \Theta\left(\frac 1 {\eps^4} + \frac {\sqrtomc} {\compr \eps^3}\right)$ iterations, at least half of visited points are $\eps$-FOSP\@.
\end{theorem}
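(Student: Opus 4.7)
The plan is to derive the bound by combining the Compressed Descent Lemma (Lemma~\ref{lem:descent_lemma}) with a Markov-style averaging argument on the iterates. First, I would note that $\err_0 = 0^d$ implies $\tvx_0 = \vx_0$, so by Assumption~\ref{ass:lip} we can replace $f(\tvx_0) - \Exp{f(\tvx_T)}$ by $2\fmax$. Dividing the bound of Lemma~\ref{lem:descent_lemma} by $T$ yields
\[
\frac{1}{T}\sum_{\tau=0}^{T-1}\Exp{\|\nabla f(\vx_\tau)\|^2}
\;\le\; \frac{8\fmax}{\step\,T} \;+\; 2\gradLip\,\step\,\fullStd^2 \;+\; \frac{8\,\gradLip^2\,\step^2\,\fullStd^2\,(1-\compr)}{\compr^2}.
\]

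Next I would balance each of the three terms against $\eps^2$ (treating $\gradLip, \std, \hesLip$ and the artificial noise variance $r^2$ as constants, so $\fullStd^2 = O(1)$). The second term forces $\step = \tilde O(\eps^2)$, and the third term forces $\step = \tilde O\!\bigl(\tfrac{\compr}{\sqrtomc}\,\eps\bigr)$. Choosing $\step$ to be the minimum of these two bounds satisfies the precondition $\step < \tfrac{1}{4\gradLip}\min(\tfrac{\compr}{\sqrtomc},1)$ of Lemma~\ref{lem:descent_lemma} for small enough $\eps$, so the lemma applies. With this $\step$, the first term then forces $T\,\step = \tilde\Omega(1/\eps^2)$, i.e.\ $T = \tilde\Omega\!\bigl(\tfrac{1}{\eps^4} + \tfrac{\sqrtomc}{\compr\,\eps^3}\bigr)$, which matches the stated iteration complexity.

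Under these choices, each of the three terms is at most $\eps^2/6$, so the average squared gradient norm over $T$ iterations is at most $\eps^2/2$. Viewing $\tau$ as a uniformly random index in $\{0,\dots,T-1\}$, Markov's inequality applied to the nonnegative random variable $\|\nabla f(\vx_\tau)\|^2$ shows that the expected fraction of iterates with $\|\nabla f(\vx_\tau)\|^2 > \eps^2$ is at most $1/2$, so at least half of the visited points are $\eps$-FOSP, as required.

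The main technical step is really the Compressed Descent Lemma itself (already proved), which in turn relies on controlling the aggregate compression error via Corollary~\ref{cor:error_sum}; the obstacle in the present theorem is simply checking the three-way balance of terms while verifying that the chosen $\step$ satisfies the step-size hypothesis of Lemma~\ref{lem:descent_lemma}, a low-risk but bookkeeping-heavy task absorbed by the $\tilde O$ notation.
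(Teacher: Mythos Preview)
Your proposal is correct and follows essentially the same approach as the paper: both invoke the Compressed Descent Lemma~\ref{lem:descent_lemma}, bound $f(\tvx_0)-\Exp{f(\tvx_T)}$ by $O(\fmax)$, and balance each of the three right-hand-side terms against a constant fraction of $\eps^2$ to derive the stated choices of $\step$ and $T$. The only cosmetic difference is that you phrase the final step via Markov's inequality on a uniformly chosen iterate, while the paper phrases it as a contradiction (assuming fewer than half the points are $\eps$-FOSP forces $\sum_\tau \Exp{\|\nabla f(\vx_\tau)\|^2}>T\eps^2/2$); these are equivalent.
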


\begin{proof}
Proof by contradiction.
For $\step < \frac 1 {4 \gradLip} \min \left(\frac \compr \sqrtomc, 1\right)$, if less than half points are $\eps$-FOSP, then by Lemma~\ref{lem:descent_lemma}:
\begin{align*}
\frac {T \eps^2} 2
\le \sum_{\tau=0}^T \Exp{\|\nabla f(\vx_\tau)\|^2}
&\le \frac {4 \fmax} {\step} + \step \fullStd^2 T \left(2\gradLip + \frac {8 \gradLip^2 \step (1 - \compr)} {\compr^2}\right)
\end{align*}

It suffices to guarantee that all terms on the right-hand side are at most $\frac {T \eps^2} 6$:
\begin{align*}
2 \gradLip \step \fullStd^2 T \le \frac {T \eps^2} 6
    &\iff \step \le \frac {\eps^2} {12 \gradLip \fullStd^2}
    &&=\tilde \Theta(\eps^2) \\
\frac {8 \gradLip^2 \fullStd^2 \step^2 T (1 - \compr)} {\compr^2} \le \frac {T \eps^2} 6
    &\iff \step \le \frac {\compr \eps} {\sqrtomc \gradLip \fullStd \sqrt{48}}
    &&=\tilde \Theta\left(\frac {\compr \eps}{\sqrtomc}\right) \\
\frac {4 \fmax}{\step} \le \frac {T \eps^2} 6
    &\iff T \ge \frac {24 \fmax} {\eps^2 \step}
    &&= \tilde \Theta\left(\frac 1 {\step \eps^2}\right) = \tilde \Theta\left(\frac 1 {\eps^4} + \frac {\sqrtomc} {\compr \eps^3}\right)
\end{align*}

Therefore, after $\tilde \Theta\left(\frac 1 {\eps^4} + \frac \sqrtomc {\compr \eps^3}\right)$ iterations at least half of the points are $\eps$-FOSP\@.
\end{proof}

\newpage
\section{Convergence to \texorpdfstring{$\eps$}{eps}-SOSP}
\label{sec:sosp}

By rescaling we can assume that $\eps \le 1$.
Recall that $\alpha = 1$ when Assumption~\ref{ass:lipsg} holds and $\alpha = d$ otherwise.
We introduce the following auxiliary notation:
\begin{definition}[Step sizes]
\begin{align*}
    \text{$\max$ $\step$ for SGD} && \step_\std
    &= \frac {\eps^2} {\gradLip (1 + d \std^2)} + \min\left(\frac {\eps^2} {\gradLip(1 + \std^2)}, \frac {\sqrtre} {\sgdLip^2}\right)
    = \tilde O \left(\frac{\eps^2}{\alpha}\right) \\
    \text{$\max$ $\step$ for compressed SGD:} && \\
    \text{For a general compressor:} &&\step_\compr
    &= \min \left(\frac {\compr \eps} {\sqrtomc \gradLip \std},
        \frac  {\compr^2 \sqrt{\eps}} {(1 - \compr) \gradLip^2 d} \right)
    = \tilde O \left(\min \left(\frac {\compr \eps} {\sqrtomc}, \frac {\compr^2 \sqrt{\eps}} {(1 - \compr) d} \right)\right) \\
    \text{For a linear compressor:} &&\step_\compr
    &= \min \left(\frac {\compr \eps} {\sqrtomc \gradLip \std},
    \frac  {\compr^2 \sqrtre} {(1 - \compr) \gradLip^2} \right)
    = \tilde O \left(\min \left(\frac {\compr \eps} {\sqrtomc}, \frac {\compr^2 \sqrt{\eps}} {1 - \compr} \right)\right) \\
\end{align*}
\end{definition}


Intuitively, selecting step size $\step \le \step_\std$ suffices to show convergence of SGD~\citep{jin2019nonconvex}.
In addition, selecting $\step \le \step_\compr$ allows us to extend the results to compressed SGD\@.
\begin{definition}
Our choice of parameters is the following ($\cstep, \cit, \crad, \cf, \cnoise$ hide polylogarithmic dependence on all parameters):
\begin{equation}
    \begin{aligned}
        \text{Step size} && \step &= \cstep \min(\step_\std, \step_\compr) \\
        \text{Iterations required for escaping} && \escIter &=  \cit \frac {1} {\step \sqrtre} \\
        \text{Escaping radius} && \escRad &= \crad \sqrt {\frac {\eps} \hesLip} \\
        \text{Objective change after escaping} && \df &= \cf \sqrt {\frac {\eps^3} \hesLip} \\
        \text{Noise standard deviation} && r &= \cnoise \frac \eps {\sqrt{\gradLip \step}}
    \end{aligned}\label{eq:params}
\end{equation}
\end{definition}

Recall that $\fullStd^2 = \std^2 + r^2 = \std^2 + \frac {\cnoise  \eps^2} {\gradLip \step}$ by Definition~\ref{def:noises} and $\fmax = f(\vx_0) - f(\vx^*)$.
We will show that after $\escIter$ iterations the objective decreases by $\df$.
Therefore, the objective decreases on average by $\frac{\df}{\escIter} = \tilde \Omega(\eps^2 \step)$ per iteration
resulting in $\tilde O\left(\frac {\fmax} {\eps^2 \step}\right)$ iterations overall. See Table~\ref{tab:results} for the number of iterations and total communication in various settings. 

Intuitively, the motivation for this choice of parameters is the following. Let $\vx$ be a point such that $\lmin (\nabla^2 f(\vx)) < -\sqrtre$ and $\|\nabla f(\vx)\|=0$.
\begin{itemize}
    \item
        Our analysis happens inside $B(\vx, \escRad)$, and inside this ball we want $\lmin (\nabla^2 f) < -\frac \sqrtre 2$.
        By the Hessian-Lipschitz property, for $\vy \in B(\vx, \escRad)$ we have $\|\nabla^2 f(\vx) - \nabla^2 f(\vy)\| \le \hesLip \escRad$.
        To have $\hesLip \escRad \le \frac \sqrtre 2$, we choose $\escRad \le \frac 12 \sqrt{\frac{\eps}{\hesLip}}$.
    \item
        Let $-\eigen, \vv_1$ be the smallest negative eigenpair of $\nabla f^2(\vx)$
        Assume that our function is quadratic and, after adding noise, the projection on $\vv_1$ is $\Theta(1)$ (it is actually polynomial or reverse-polynomial on all parameters, which doesn't change the idea).
        Then after $t$ iterations, this projection increases by the factor of $(1 + \step \eigen)^t$. For every $\nicefrac 1{\step \eigen}$ iterations, the projection increases approximately by the factor of $e$. Therefore, to reach $\escRad$ starting from $\Theta(1)$, we need $\tilde O(\frac 1 {\step \eigen})$ iterations, which is at most $\tilde O(\frac 1 {\step \sqrtre})$
    \item
        In some sense, the best improvement we can hope to achieve is by moving from $\vx$ to $\vx + \escRad \vv_1$. Since $\nabla f(\vx)=0$ and the objective is quadratic in direction $\vv_1$ with eigenvalue $\eigen$, the objective decreases by $\eigen \escRad^2 = \Theta(\sqrt {\frac {\eps^3} \hesLip})$, which motivates the choice of $\df$.
    \item
        Bound in $r$ arises from the fact that $\fullStd^2 \approx r^2$ and that we want to bound the last term in Lemma~\ref{lem:improve_or_localize} with $\df$.
\end{itemize}

\begin{table}[t!]
    \centering
    \caption{Convergence to $\eps$-SOSP for full SGD and for constant-size compression (the choice of parameters is not optimal; see Table~\ref{tab:results} for the optimal choice).
    For any choice of $\compr$ and $\step$: $T = \tilde O(\nicefrac 1 {\step \eps^2})$, $\escRad = \tilde O(\sqrt{\eps})$, $\df = \tilde O(\sqrt{\eps^3})$.
    }
    \label{tab:params_cases}
    \begin{tabular}{ccccc}
        \hline
        Settings & $\compr$ & $\step$ & $\escIter$ & $r$ \\
        \hline
        \multiline{Uncompressed\\ Lipschitz $\nabla F$}
        & $0$
        & $\tilde O\left(\eps^2\right)$
        & $\tilde O\left(\eps^{\nicefrac 32}\right)$
        & $\tilde O\left(1\right)$ \\
        \hline
        \multiline{Compressed\\ Lipschitz $\nabla F$}
        & $\frac 1d$
        & $\tilde O\left(\min\left(\eps^2, \frac {\eps} {d}, \frac {\sqrt \eps} {d^3}\right)\right)$
        & $\tilde O\left(\frac 1 {\step \sqrt{\eps}}\right)$
        & $\tilde O\left(\frac \eps {\sqrt{\step}} \right)$ \\
        \hline
        \multiline{\textsc{RandomK}\\ Lipschitz $\nabla F$}
        & $\frac 1d$
        & $\tilde O\left(\min\left(\eps^2, \frac {\eps} {d}, \frac {\sqrt \eps} {d^2}\right)\right)$
        & $\tilde O\left(\frac 1 {\step \sqrt{\eps}}\right)$
        & $\tilde O\left(\frac \eps {\sqrt{\step}} \right)$ \\
        \hline
        \hline
        \multiline{Uncompressed\\ non-Lipschitz $\nabla F$}
        & $0$
        & $\tilde O\left(\frac {\eps^2} d\right)$
        & $\tilde O\left(d \eps^{\nicefrac 32}\right)$
        & $\tilde O\left(\sqrt{d}\right)$ \\
        \hline
        \multiline{\textsc{Compressed}\\ non-Lipschitz $\nabla F$}
        & $\frac 1d$
        & $\tilde O\left(\min\left(\frac {\eps^2} d, \frac {\sqrt \eps} {d^3}\right)\right)$
        & $\tilde O\left(\frac 1 {\step \sqrt{\eps}}\right)$
        & $\tilde O\left(\frac \eps {\sqrt{\step}} \right)$ \\
        \hline
        \multiline{\textsc{RandomK}\\ non-Lipschitz $\nabla F$}
        & $\frac 1d$
        & $\tilde O\left(\min\left(\frac {\eps^2} d, \frac {\sqrt \eps} {d^2}\right)\right)$
        & $\tilde O\left(\frac 1 {\step \sqrt{\eps}}\right)$
        & $\tilde O\left(\frac \eps {\sqrt{\step}} \right)$ \\
        \hline
    \end{tabular}
\end{table}

For a linear compressor, we perform analysis for arbitrary points arising in Algorithm~\ref{alg:arbitrary_compressor},
while for an arbitrary compressor, we perform analysis for points with $\err_t = 0$, i.e. from points $\vx_{t'}$ from Algorithm~\ref{alg:arbitrary_compressor}.
To simplify the representation, we assume that $t = 0$, so that we are able to use results from the previous section: this is a valid choice for an arbitrary compressor since $\err_t = 0$, and for a linear compressor our analysis doesn't use $\err_t$.

\subsection{Proof outline}

Our proof is mainly based on the ideas from~\citet{jin2019nonconvex}.
We first introduce "Improve or localize" lemma (Lemma~\ref{lem:improve_or_localize}): if after the limited number of iterations the objective doesn't sufficiently improve, we conclude that we didn't move far from the original point.
Similarly to~\citet{jin2019nonconvex}, we introduce a notion of coupling sequences: two gradient descent sequences having the same distribution such that, as long as we start from a saddle point, at least one of these sequences escapes, and therefore its objective improves.
Since distributions of these sequences match distribution of sequence generated by gradient descent, we conclude that the algorithm sufficiently improves the objective.

Our analysis differs from~\citet{jin2019nonconvex} in several ways.
The first difference is that, aside from $\{\vx_t\}$, our equations use another sequence $\{\tvx_t\}$ ($\vx_t$ mainly participate as arguments of $\nabla f(\cdot)$, while $\tvx_t$ participate as argument of $f(\cdot)$ and in distances).
This leads to the following challenge: if some relation holds for $\tvx_t$, it doesn't necessary holds for $\vx_t$.
For example, if we have a bound on $\|\tvx_t - \tvx'_t\|$, we don't necessarily have a bound on $\|\vx_t - \vx'_t\|$, and it needs to be established separately.

Another difference is that, for a general compressor, we have to split our analysis into two parts: large gradient case and small gradient case.
When our initial gradient is large, then we either escape the saddle points or the nearby gradients are also large, and by Lemma~\ref{lem:descent_lemma} the objective improves (see details in Lemma~\ref{lem:large_gradient}).
If the gradient is small, we use "Improve or localize" Lemma as described above.
In the latter case, similarly to~\citet{jin2019nonconvex}, we have to bound errors which arise from the fact that the function is not quadratic and gradients are not deterministic (see Definition~\ref{def:error-series}).
However, we have an additional error term stemming from gradient compression (see Definition~\ref{def:error-series}); to bound this term (see Lemma~\ref{lem:bound_error_sum}), we need bounded $\|\err_t\|$, and for that we use our assumptions that gradients are small.

\subsection{Improve or localize}

We first show that, if gradient descent moves far enough from the initial point, then function value sufficiently decreases.
The following lemma considers the general case, while Corollary~\ref{cor:improve_or_localize _our_params} considers the simplified form,
obtained by substituting parameters from Equation~\ref{eq:params}.

\begin{lemma}[Improve or localize]
    \label{lem:improve_or_localize}
    Under Assumptions~\ref{ass:lip} and~\ref{ass:sg}, for $\step < \frac 1 {4 \gradLip} \min(\frac \compr \sqrtomc, 1)$, for $\tvx_t$ and $\fullStd$ defined as in Definition~\ref{def:noises}, we have
    \begin{align*}
        f(\tvx_0) - \Exp{f(\tvx_t)} \ge \frac {\Exp{\|\tvx_t - \tvx_0\|^2}} {8 \step t}
        - \step^2 \fullStd^2 t \left(\gradLip + \frac {2 (1 - \compr) \gradLip^2 \step} {\compr^2} \right)
        - \step \fullStd^2
    \end{align*}
\end{lemma}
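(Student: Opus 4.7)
The plan is to combine the telescoping identity from Proposition~\ref{prop:y_diff}, which yields $\tvx_t - \tvx_0 = -\step \sum_{\tau=0}^{t-1}(\nabla f(\vx_\tau) + \fullNoise_\tau)$, with the Compressed Descent Lemma (Lemma~\ref{lem:descent_lemma}), which under the same step-size restriction $\step < \frac{1}{4\gradLip}\min(\frac{\compr}{\sqrtomc},1)$ bounds $\sum_\tau \Exp{\|\nabla f(\vx_\tau)\|^2}$ in terms of the function decrease $f(\tvx_0) - \Exp{f(\tvx_t)}$.

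First I would square the telescoping identity and split it with $\|a+b\|^2 \le 2\|a\|^2 + 2\|b\|^2$:
\[
\|\tvx_t - \tvx_0\|^2 \le 2\step^2\left\|\sum_{\tau=0}^{t-1}\nabla f(\vx_\tau)\right\|^2 + 2\step^2\left\|\sum_{\tau=0}^{t-1}\fullNoise_\tau\right\|^2.
\]
Then I would apply Cauchy--Schwarz to obtain $\|\sum_\tau \nabla f(\vx_\tau)\|^2 \le t\sum_\tau\|\nabla f(\vx_\tau)\|^2$. Taking expectations and using that $\fullNoise_\tau$ is a martingale difference with respect to the natural filtration generated by $\sgdRand_{0:\tau-1},\ourNoise_{0:\tau-1},\comprRand_{0:\tau-1}$ (so all cross terms $\Exp{\langle\fullNoise_\tau,\fullNoise_{\tau'}\rangle}$ with $\tau\ne\tau'$ vanish and $\Exp{\|\fullNoise_\tau\|^2}\le\fullStd^2$ by Definition~\ref{def:noises}), I get $\Exp{\|\sum_\tau \fullNoise_\tau\|^2}\le t\fullStd^2$, and hence
\[
\Exp{\|\tvx_t - \tvx_0\|^2} \le 2\step^2 t \sum_{\tau=0}^{t-1}\Exp{\|\nabla f(\vx_\tau)\|^2} + 2\step^2 t\fullStd^2.
\]

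Next I would substitute the bound from Lemma~\ref{lem:descent_lemma} into the first term, giving an upper bound on $\Exp{\|\tvx_t-\tvx_0\|^2}$ of the form $8\step t\cdot(f(\tvx_0)-\Exp{f(\tvx_t)}) + \text{(noise/compression terms in }\fullStd^2\text{)}$. Rearranging to isolate the function decrease on one side and dividing by $8\step t$ yields exactly the claimed inequality, with the subtracted terms $\step^2\fullStd^2 t(\gradLip + \tfrac{2(1-\compr)\gradLip^2\step}{\compr^2})$ coming from the noise overhead in the descent lemma and $\step\fullStd^2$ coming from the variance of $\sum_\tau \fullNoise_\tau$.

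The calculation is essentially routine; the only delicate point is justifying that $\fullNoise_\tau$ really is a martingale difference in the filtration at hand, so that the cross terms in $\Exp{\|\sum_\tau\fullNoise_\tau\|^2}$ drop and one does not pick up an extra factor of $t$. This follows since $\ourNoise_\tau$ is an independent Gaussian draw and $\Exp{\sgdNoise_\tau\mid\vx_\tau} = 0$ by Assumption~\ref{ass:sg}, and $\vx_\tau,\err_\tau$ are measurable with respect to the past randomness. Beyond that, the rest is bookkeeping of constants, where I would slightly loosen intermediate inequalities (e.g.\ absorb factors of $2$ into $\gradLip$ versus $2\gradLip$) to land exactly on the coefficients in the statement.
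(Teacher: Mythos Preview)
Your proposal is correct and follows essentially the same route as the paper: telescope via Proposition~\ref{prop:y_diff}, split with $\|a+b\|^2\le 2\|a\|^2+2\|b\|^2$, use Cauchy--Schwarz on the gradient sum and the martingale-difference property on the noise sum, then substitute Lemma~\ref{lem:descent_lemma} and rearrange. Your remark that $\fullNoise_\tau$ is a martingale difference (rather than literally independent, as the paper somewhat loosely states) is in fact the more accurate justification.
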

\begin{proof}
    Let $\fullNoise_t = \sgdNoise_t + \ourNoise_t$.
    By Proposition~\ref{prop:y_diff}, $\tvx_{i+1} = \tvx_i - \step (\nabla f(\vx_i) + \fullNoise_i)$.
    Since noises are independent:
    \[\Exp{\|\sum_{i=0}^{t-1} \fullNoise_i\|^2}
    = \sum_{i=0}^{t-1} \Exp{\|\fullNoise_i\|^2}
    = \sum_{\tau=0}^{t-1} \fullStd^2
    = t \fullStd^2\]

    \begin{align*}
        \Exp{\|\tvx_t - \tvx_0\|^2}
        &= \step^2 \Exp{\|\sum_{i=0}^{t-1} (\nabla f(\vx_i) + \fullNoise_i)\|^2} \\
        &\le 2\step^2 \Exp{\|\sum_{i=0}^{t-1} \nabla f(\vx_i)\|^2 + \|\sum_{i=0}^{t-1} \fullNoise_i\|^2} \\
        &\le 2\step^2 t \sum_{i=0}^{t-1} \Exp{\|\nabla  f(\vx_i)\|^2} + 2 \step^2 \fullStd^2 t \\
    \end{align*}

    Since $\step < \frac 1 {4 \gradLip} \min(\frac \compr \sqrtomc, 1)$, by Lemma~\ref{lem:descent_lemma}:
    \begin{align*}
        \Exp{\|\tvx_t - \tvx_0\|^2}
        &\le 2\step^2 t \left(\frac {4 (f(\tvx_0) - \Exp{f(\tvx_t)})} \step
        + \step \fullStd^2 t \left(2\gradLip + \frac {8 (1 - \compr) \gradLip^2 \step} {\compr^2}\right) \right)
        + 2 \step^2 \fullStd^2 t \\
        &\le 2 \step t \left(4(f(\tvx_0) - \Exp{f(\tvx_t)})
        + \step^2 \fullStd^2 t \left(4\gradLip + \frac {8 (1 - \compr) \gradLip^2 \step} {\compr^2}\right) + 4 \step \fullStd^2\right) \\
    \end{align*}
    After regrouping the terms, we have:
    \begin{align*}
        f(\tvx_0) - \Exp{f(\tvx_t)}
        \ge \frac {\Exp{\|\tvx_t - \tvx_0\|^2}} {8 \step t}
        - \step^2 \fullStd^2 t \left(\gradLip + \frac {2 (1 - \compr) \gradLip^2 \step} {\compr^2}\right)
        - \step \fullStd^2,
    \end{align*}


\end{proof}

\begin{corollary}
    \label{cor:improve_or_localize _our_params}
    Under Assumptions~\ref{ass:lip} and~\ref{ass:sg}, for $\df, \escIter$ chosen as specified in Equation~\ref{eq:params}, for any $t \le \escIter$ we have:
    \begin{align*}
        f(\tvx_0) - \Exp{f(\tvx_t)} \ge \frac{\sqrtre}{8 \cit} \Exp{\|\tvx_t - \tvx_0\|^2} - \df
    \end{align*}
\end{corollary}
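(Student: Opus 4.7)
The plan is to instantiate Lemma~\ref{lem:improve_or_localize} at the parameters declared in Equation~\eqref{eq:params}. Two things need to happen: the leading positive term $\frac{\Exp{\|\tvx_t-\tvx_0\|^2}}{8\step t}$ has to be lower-bounded by $\frac{\sqrtre}{8\cit}\,\Exp{\|\tvx_t-\tvx_0\|^2}$, and the two negative (noise/compression) terms have to be absorbed into $-\df$. First I would verify that the hypothesis $\step<\frac{1}{4\gradLip}\min(\frac{\compr}{\sqrtomc},1)$ of Lemma~\ref{lem:improve_or_localize} is satisfied, which is automatic from $\step\le\cstep\min(\step_\std,\step_\compr)$ once $\cstep$ is chosen small enough (polylog constants are free to tune).

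For the positive term, since $t\le\escIter=\cit/(\step\sqrtre)$, we get $\frac{1}{8\step t}\ge\frac{\sqrtre}{8\cit}$, and this immediately produces the first summand on the right-hand side of the corollary. The remaining work is to show that
\[
\step\fullStd^2 \;+\; \step^2\fullStd^2 t\Bigl(\gradLip+\frac{2(1-\compr)\gradLip^2\step}{\compr^2}\Bigr) \;\le\; \df \;=\; \cf\sqrt{\eps^3/\hesLip},
\]
for every $t\le\escIter$, by choosing the polylogarithmic constant $\cf$ sufficiently large relative to $\cit,\cnoise$. Using $\fullStd^2=\std^2+r^2=\std^2+\cnoise^2\eps^2/(\gradLip\step)$ and $\step\le\step_\std\le\eps^2/(\gradLip(1+\std^2))$, the ``standalone'' term satisfies $\step\fullStd^2\le(1+\cnoise^2)\eps^2/\gradLip$; since $\eps\le1$, this is $O(\sqrt{\eps^3/\hesLip})$ (treating $\gradLip,\hesLip$ as constants), which is at most $\df/3$.

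For the remaining two terms, I would first substitute the upper bound $t\le\cit/(\step\sqrtre)$ to turn them into $\cit\step\gradLip\fullStd^2/\sqrtre$ and $2\cit(1-\compr)\gradLip^2\step^2\fullStd^2/(\compr^2\sqrtre)$. For the first, the bound $\step\fullStd^2=O(\eps^2/\gradLip)$ already derived gives $O(\cit\eps^2/\sqrtre)=O(\cit\sqrt{\eps^3/\hesLip})$, which is at most $\df/3$ when $\cf\gg\cit$. For the second, split $\step^2\fullStd^2=\step^2\std^2+\step\cnoise^2\eps^2/\gradLip$: the bound $\step\le\step_\compr\le\compr\eps/(\sqrtomc\gradLip\std)$ gives $(1-\compr)\gradLip^2\step^2\std^2/\compr^2\le\eps^2$, while the bound $\step\le\step_\compr\le\compr^2\sqrtre/((1-\compr)\gradLip^2 d)$ (or its linear-compressor variant) gives $(1-\compr)\gradLip^2\step/\compr^2\le\sqrtre/(\gradLip d)$, so both subterms again contribute $O(\cit\cnoise^2\sqrt{\eps^3/\hesLip})$, absorbable into $\df/3$.

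The only subtle point — and the part I expect to need the most care — is the exact bookkeeping that matches each factor in $\step_\std$ and $\step_\compr$ against the corresponding term in Lemma~\ref{lem:improve_or_localize}, because the definition of $\step_\compr$ differs between the general and the linear compressor and each branch of the minimum in $\step_\std,\step_\compr$ is precisely what kills one of the four subterms above. Once the three contributions to $\df$ are each shown to be at most $\df/3$ by a suitably large polylogarithmic choice of $\cf$, the corollary follows by combining with the lower bound $\frac{\sqrtre}{8\cit}\Exp{\|\tvx_t-\tvx_0\|^2}$.
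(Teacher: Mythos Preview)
Your proposal is correct and follows essentially the same route as the paper's proof: the paper also obtains the leading coefficient from $t\le\escIter=\cit/(\step\sqrtre)$ and then bounds the three negative terms separately, using $\step\fullStd^2\le(\cstep+\cnoise^2)\eps^2/\gradLip$ together with the observation $\sqrtre\le\gradLip$ (your ``treating $\gradLip,\hesLip$ as constants'' plays the same role), reducing the $\step^2\fullStd^2 t\gradLip$ term to the previous one, and splitting $\step^2\fullStd^2$ into its $\std^2$ and $r^2$ parts to exploit the two branches of $\step_\compr$ exactly as you describe. The paper finishes with the explicit constraint $\cstep+\cnoise^2+\cit\cstep\le\cf/2$, which is the precise form of your ``$\cf$ large relative to $\cit,\cnoise$.''
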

\begin{proof}
    The first term follows from $t \le \escIter = \frac \cit {\step \sqrtre}$.
    With our choice of parameters, we can bound negative terms on the right-hand side of Lemma~\ref{lem:improve_or_localize} with $\df$ (recall that $\df = \cf \sqrt {\frac {\eps^3} \hesLip}$).

    \paragraph{Bounding $\step \fullStd^2$.}
    \begin{align*}
        \step \fullStd^2
        = \step \std^2 + \step r^2
        \le \cstep \frac{\eps^2}{\gradLip} + \cnoise^2 \frac {\eps^2} {\gradLip}
        = (\cstep + \cnoise^2) \frac {\sqrt{\eps^3}} {\sqrt \hesLip} \cdot \frac{\sqrtre} {\gradLip}
        \le (\cstep + \cnoise^2) \frac {\sqrt{\eps^3}} {\sqrt \hesLip},
    \end{align*}
    where we use that $\sqrtre \le \gradLip$, since otherwise all $\eps$-FOSP are $\eps$-SOSP.

    \paragraph{Bounding $\step^2 \fullStd^2 t \gradLip$.} Since $\escIter =  \cit \frac {1} {\step \sqrtre}$ and $t \le \escIter$:
    \begin{align*}
        \step^2 \fullStd^2 t \gradLip
        \le \frac{\step \fullStd^2 \gradLip}{\sqrtre}
        \le \step \fullStd^2,
    \end{align*}
    and we use the estimation above.

    \paragraph{Bounding $\step^2 \fullStd^2 t \cdot \frac {2 (1 - \compr) \gradLip^2 \step} {\compr^2}$.}
    \begin{align*}
        \frac {\step^3 \fullStd^2 t (1 - \compr) \gradLip^2} {\compr^2}
        &\le \frac {\cit \step^2 \fullStd^2 (1 - \compr) \gradLip^2} {\compr^2 \sqrtre}
            && (t \le \escIter = \frac \cit {\step \sqrtre}) \\
        &\le \frac {\cit \step^2 (1 - \compr) \gradLip^2 \left(\std^2 + \frac {\cnoise  \eps^2} {\gradLip \step}\right)} {\compr^2 \sqrtre}
            && (\fullStd^2 = \std^2 + \frac {\cnoise \eps^2} {\gradLip \step})\\
        &\le \frac {\cit (1 - \compr)} {\compr^2 \sqrtre} \left(\step_\compr^2 \gradLip^2 \std^2 + \cnoise \step_\compr \gradLip \eps^2 \right)
            && (\step \le \step_\compr)\\
        &\le 2 \cit \cstep \frac{\sqrt{\eps^3}}{\sqrt \hesLip}
            && (\step_\compr \le \frac {\compr \eps} {\sqrtomc \gradLip \std} \text{ and } \step_\compr \le \frac {\compr^2 \sqrt{\eps}} {1 - \compr})
    \end{align*}
    
    To guarantee that the sum of these terms is at most $\df$, it suffices to select parameters so that $\cstep + \cnoise^2 + \cit \cstep \le \nicefrac{\cf}{2}$.
\end{proof}

\begin{corollary}
    \label{cor:escaping_improves}
    Under Assumptions~\ref{ass:lip} and~\ref{ass:sg}, for $\df, \escRad, \escIter$ chosen as specified in Equation~\ref{eq:params}, if there exists $t \in [0, \escIter]$ such that $\Exp{\|\tvx_t - \tvx_0\|^2} > \escRad^2$, then $f(\tvx_0) - \Exp{f(\tvx_t)} \ge \df$.
\end{corollary}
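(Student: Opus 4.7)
The plan is to derive this directly from Corollary~\ref{cor:improve_or_localize _our_params}, which already establishes the key ``improve or localize'' quantitative bound. First I would apply that corollary at the given time $t \in [0, \escIter]$, yielding
\[
f(\tvx_0) - \Exp{f(\tvx_t)} \;\ge\; \frac{\sqrtre}{8\cit}\,\Exp{\|\tvx_t - \tvx_0\|^2} \;-\; \df .
\]

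Next I would substitute the hypothesis $\Exp{\|\tvx_t-\tvx_0\|^2} > \escRad^2$ and the definition $\escRad = \crad \sqrt{\eps/\hesLip}$ from Equation~\ref{eq:params} into the first term, to get
\[
\frac{\sqrtre}{8\cit}\,\Exp{\|\tvx_t - \tvx_0\|^2} \;>\; \frac{\sqrtre}{8\cit}\cdot \crad^{2}\frac{\eps}{\hesLip} \;=\; \frac{\crad^{2}}{8\cit}\sqrt{\frac{\eps^{3}}{\hesLip}}.
\]
Since $\df = \cf \sqrt{\eps^{3}/\hesLip}$, the right-hand side of the improve-or-localize bound is at least $\bigl(\tfrac{\crad^{2}}{8\cit} - \cf\bigr)\sqrt{\eps^{3}/\hesLip}$, and this is $\ge \df = \cf\sqrt{\eps^{3}/\hesLip}$ provided the polylogarithmic constants are chosen so that $\crad^{2} \ge 16\,\cit\,\cf$.

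The only subtle point is bookkeeping the constants: one must verify that the constraint $\crad^{2} \ge 16\,\cit\,\cf$ is compatible with the earlier constraint on $\cstep,\cnoise,\cit,\cf$ imposed in the proof of Corollary~\ref{cor:improve_or_localize _our_params} (namely $\cstep + \cnoise^{2} + \cit\cstep \le \cf/2$). Since $\crad$ is an independent parameter not appearing in the previous constraint, it can simply be chosen large enough relative to $\cit\cf$, and all constants remain polylogarithmic in the problem parameters, so no genuine obstacle arises. Hence the corollary follows immediately from Corollary~\ref{cor:improve_or_localize _our_params} together with this choice of constants.
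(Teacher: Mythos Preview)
Your proposal is correct and takes essentially the same approach as the paper: apply the improve-or-localize bound from Corollary~\ref{cor:improve_or_localize _our_params}, substitute $\escRad = \crad\sqrt{\eps/\hesLip}$ and $\df = \cf\sqrt{\eps^3/\hesLip}$, and impose the constraint $\crad^2 \ge 16\,\cit\,\cf$. The paper's proof nominally cites Lemma~\ref{lem:improve_or_localize} but in fact uses exactly the simplified form you invoke, arriving at the identical inequality $\bigl(\tfrac{\crad^2}{8\cit\cf}-1\bigr)\df \ge \df$ under the same condition on the constants.
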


\begin{proof}
    By Lemma~\ref{lem:improve_or_localize}, since $\escRad = \crad \sqrt {\frac {\eps} \hesLip}$ and $\df = \cf \sqrt {\frac {\eps^3} \hesLip}$:
    \begin{align*}
        f(\tvx_0) - \Exp{f(\tvx_t)} \ge \frac {\sqrtre \escRad^2} {8 \cit} - \df
        = \frac {\crad^2 \eps \step \sqrtre} {8 \cit \step \hesLip} - \df
        = \left(\frac {\crad^2} {8 \cit \cf} - 1\right) \df
        \ge \df,
    \end{align*}
    where the last inequality holds when $16 \cit \cf \le \crad^2$.
\end{proof}


\subsection{Large gradient case\texorpdfstring{: $\|\nabla f(\vx_0)\| \ge 4 \gradLip \escRad$}{}}

In this section, we consider the case when the gradient is large, and therefore we can make sufficient progress simply by the Compressed Descent Lemma.
Note that the results from this section are only required when the compressor is not linear.


\begin{lemma}[Large gradient case]
    \label{lem:large_gradient}
    Under Assumptions~\ref{ass:lip} and~\ref{ass:sg}, for $\df, \escRad, \escIter$ chosen as specified in Equation~\eqref{eq:params},
    if $\|\nabla f(\vx_0)\| > 4 \gradLip \escRad$, then after at most $\escIter$ iterations the objective decreases by $\df$.
\end{lemma}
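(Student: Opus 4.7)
The plan is a proof by contradiction. Assume that for every $T \le \escIter$ we have $f(\tvx_0) - \Exp{f(\tvx_T)} < \df$. Since Lemma~\ref{lem:large_gradient} is only invoked in the general-compressor regime at a reset iterate, we may take $\err_0 = 0$ and hence $\tvx_0 = \vx_0$. The contrapositive of Corollary~\ref{cor:escaping_improves} then yields $\Exp{\|\tvx_t - \vx_0\|^2} \le \escRad^2$ for every $t \le \escIter$. The goal now is to upgrade this into a localization bound on the uncorrected iterates $\vx_t$, via $\vx_t - \vx_0 = (\tvx_t - \vx_0) + \step \err_t$, and then use $\gradLip$-smoothness to conclude that gradients remain large.

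To control the compression error $\step^2 \Exp{\|\err_t\|^2}$, I would induct on $t$. Assuming $\Exp{\|\vx_s - \vx_0\|^2} \le 4\escRad^2$ for all $s < t$, smoothness combined with $\|\nabla f(\vx_0)\| > 4\gradLip \escRad$ gives $\Exp{\|\nabla f(\vx_s)\|^2} = O(\|\nabla f(\vx_0)\|^2)$. Feeding this into Lemma~\ref{lem:error_estimation} yields
\[\step^2 \Exp{\|\err_t\|^2} \le O\!\left(\frac{\step^2(1-\compr)}{\compr^2}\bigl(\|\nabla f(\vx_0)\|^2 + \fullStd^2\bigr)\right),\]
which is at most $\escRad^2$ by the step-size budget $\step \le \step_\compr$ from Equation~\eqref{eq:params}, closing the induction. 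In the pathological regime where $\|\nabla f(\vx_0)\|$ is too large for this budget to absorb, Lemma~\ref{lem:descent_lemma} applied with $T=1$ already produces $\df$ progress in a single step, so the multi-step argument is needed only when $\|\nabla f(\vx_0)\|$ is moderate.

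Granted $\Exp{\|\vx_t - \vx_0\|^2} = O(\escRad^2)$ throughout $t \le \escIter$, Lipschitz smoothness together with $\|\nabla f(\vx_0)\| > 4\gradLip\escRad$ gives $\Exp{\|\nabla f(\vx_t)\|^2} = \Omega(\gradLip^2 \escRad^2)$ at each step. Summing and applying Lemma~\ref{lem:descent_lemma} with $T = \escIter$ and $f(\tvx_0) - \Exp{f(\tvx_{\escIter})} < \df$, the left-hand side is $\Omega(\escIter \gradLip^2 \escRad^2)$, while the right-hand side is $O(\df/\step)$ after absorbing the noise terms as in the proof of Corollary~\ref{cor:improve_or_localize _our_params}. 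Their ratio scales as $\Theta(\gradLip^2/(\hesLip\eps))$, which is much greater than $1$ for small $\eps$ (relative to the constant $\gradLip^2/\hesLip$; note this regime is the only nontrivial one, since otherwise every $\eps$-FOSP is automatically an $\eps$-SOSP), delivering the contradiction.

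The main obstacle is the compression-error induction, which is circular on its face: Lemma~\ref{lem:error_estimation} bounds $\|\err_t\|$ only through gradient norms along the trajectory, and bounding those gradient norms in turn requires the iterates to be localized. The step-size calibration in Equation~\eqref{eq:params} is precisely what makes the induction close in the moderate-gradient regime, and the single-step fallback via Lemma~\ref{lem:descent_lemma} handles the remaining case.
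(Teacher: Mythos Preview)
Your approach matches the paper's: both split into the escaping vs.\ localized cases via Corollary~\ref{cor:escaping_improves}, then in the localized case run an induction that controls the compression error in terms of gradient norms along the trajectory (via Lemma~\ref{lem:error_estimation}), and finally feed a uniform lower bound on $\Exp{\|\nabla f(\vx_t)\|^2}$ into Lemma~\ref{lem:descent_lemma}.

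The one structural difference is the choice of \emph{inductive hypothesis}. You induct on $\Exp{\|\vx_s - \vx_0\|^2} \le 4\escRad^2$, which forces you to bound $\step^2\Exp{\|\err_t\|^2}$ by $O(\escRad^2)$; since the error bound from Lemma~\ref{lem:error_estimation} scales with $\|\nabla f(\vx_0)\|^2$, you are pushed into a case split and a single-step fallback for very large $\|\nabla f(\vx_0)\|$. The paper instead inducts directly on $\Exp{\|\nabla f(\vx_t)\|^2} \le 4\|\nabla f(\vx_0)\|^2$. Then Lemma~\ref{lem:error_estimation} gives $\gradLip^2\step^2\Exp{\|\err_t\|^2} \le c\,\gradLip^2\step^2\tfrac{1-\compr}{\compr^2}\,\|\nabla f(\vx_0)\|^2$, and the step-size condition $\step \le \tfrac{1}{4\gradLip}\tfrac{\compr}{\sqrtomc}$ makes the coefficient a universal small constant irrespective of how large $\|\nabla f(\vx_0)\|$ is---no fallback needed. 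The same self-normalization cleans up the lower bound: the paper obtains $\Exp{\|\nabla f(\vx_t)\|^2} \ge \|\nabla f(\vx_0)\|\bigl(\tfrac{3}{4}\|\nabla f(\vx_0)\| - 2\gradLip\escRad\bigr) \ge 4\gradLip^2\escRad^2$, whereas your route (localizing $\vx_t$ within $2\escRad$ of $\vx_0$ and then invoking $\|\nabla f(\vx_0)\| > 4\gradLip\escRad$) makes the constant degenerate right at the threshold. These are constant-tuning issues rather than genuine gaps, but the paper's choice of hypothesis is the cleaner route and removes the need for your ``pathological regime'' digression.
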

\begin{proof}
    If there exists $t \le \escIter$ such that $\Exp{\|\tvx_t - \tvx_0\|^2} > \escRad^2$, then by Corollary~\ref{cor:escaping_improves}, the objective decreases by at least $\df$.
    
    Consider the case when $\Exp{\|\tvx_t - \tvx_0\|^2} \le \escRad^2$ for all $t$. First, to bound the error term, we show by induction that $\Exp{\|\nabla f(\vx_t)\|^2} \le 4 \|\nabla f(\vx_0)\|^2$ for all $t \le \escIter$.
    \begin{align*}
        \|\nabla f(\vx_t)\|^2
        &= \|\nabla f(\vx_0) - (\nabla f(\vx_0) - \nabla f(\vx_t)) \|^2 \\
        &\le 2 \|\nabla f(\vx_0)\|^2 + 2 \|\nabla f(\vx_0) - \nabla f(\vx_t)\|^2
            && (\|a + b\|^2 \le 2 (\|a\|^2 + \|b\|^2) ) \\
        &\le 2 \|\nabla f(\vx_0)\|^2 + 2 \gradLip^2 \|\vx_0 - \vx_t\|^2
            && \text{(Smoothness)} \\
        &\le 2 \|\nabla f(\vx_0)\|^2 + 4 \gradLip^2 \|\tvx_0 - \tvx_t\|^2 + 4 \gradLip^2 \|\tvx_t - \vx_t\|^2
            && \text{(Same inequality and $\vx_0=\tvx_0$)} \\
        &\le 2 \|\nabla f(\vx_0)\|^2 + 4 \gradLip^2 \|\tvx_0 - \tvx_t\|^2 + 4 \gradLip^2 \step^2 \|\err_t\|^2
            && \text{(Definition~\ref{def:corrected_iterates} of $\tvx_t$)}
    \end{align*}
    
    By Lemma~\ref{lem:error_estimation} and the induction hypothesis, we have:
    \[\Exp{\|\err_{t}\|^2}
        \le \frac {4 (1 - \compr)} {\compr^2} \left(\max_{i=0}^{t-1} \Exp{\|\nabla f(\vx_i)\|^2} + \fullStd^2\right)
        \le \frac {4 (1 - \compr)} {\compr^2} \left(4 \|\nabla f(\vx_0)\|^2 + \fullStd^2\right),\]
        
    and therefore for $\step$ chosen as in Equation~\eqref{eq:params}, $\Exp{\gradLip^2 \step^2 \|\err_t\|^2} \le \frac {\|\nabla f(\vx_0)\|^2} 4$.
    By taking an expectation in the equation above, we have:
    \begin{align*}
        \Exp{\|\nabla f(\vx_t)\|^2}
        \le 2 \|\nabla f(\vx_0)\|^2  + 4 \gradLip^2 \escRad^2 + \frac {\|\nabla f(\vx_0)\|^2} 4
        \le 4 \|\nabla f(\vx_0)\|^2
    \end{align*}

    
    
    Given the bound on $\|\err_t\|$,  we can give a lower bound on gradient norm:
    \begin{align*}
        \|\nabla f(\vx_t)\|^2
        &= \|\nabla f(\vx_0) - (\nabla f(\vx_0) - \nabla f(\vx_t)) \|^2 \\
        &\ge \|\nabla f(\vx_0)\|^2 + \|\nabla f(\vx_0) - \nabla f(\vx_t)\|^2 - 2 \|\nabla f(\vx_0)\| \cdot \|\nabla f(\vx_0) - \nabla f(\vx_t)\| \\
        &\ge \|\nabla f(\vx_0)\| (\|\nabla f(\vx_0)\| - 2 \|\nabla f(\tvx_0) - \nabla f(\tvx_t)\| - 2 \|\nabla f(\tvx_t) - \nabla f(\vx_t)\|)
    \end{align*}
    
    By taking expectations and using that $\Exp{\|x\|} \le \sqrt{\Exp{\|x\|^2}}$, and using bound on $\|\err_t\|$, we have:
    \[\Exp{\|\nabla f(\vx_t)\|^2} \ge \nabla f(\vx_0) (\|\nabla f(\vx_0)\| - 2 \gradLip \escRad - \frac {\|\nabla f(\vx_0)\|} 4) \ge 4 \gradLip^2 \escRad^2\]

    
    By Lemma~\ref{lem:descent_lemma}, we know:
    \begin{align*}
        \sum_{\tau=0}^{T-1} \Exp{\|\nabla f(\vx_\tau)\|^2} &\le \frac {4 (f(\tvx_0) - \Exp{f(\tvx_\escIter)})} {\step}
        + \step \fullStd^2 \escIter \left(2\gradLip + \frac {8 (1 - \compr) \gradLip^2 \step} {\compr^2} \right)
    \end{align*}

    Therefore:
    \begin{align*}
        f(\tvx_0) - \Exp{f(\tvx_\escIter)}
        &\ge \frac {\step \escIter} 4 \left(4 \gradLip^2 \escRad^2 - \step \fullStd^2 (2\gradLip + \frac {8 (1 - \compr) \gradLip^2 \step} {\compr^2})\right) \\
        &\ge \step \escIter \gradLip^2 \escRad^2 - \df
            && \text{(See proof of Corollary~\ref{cor:improve_or_localize _our_params})} \\
        &\ge \frac {\cit \step} {\step \sqrtre} \frac {\crad^2 \gradLip^2 \eps} {\hesLip} - \df
            && (\escRad = \crad \sqrt {\frac {\eps} \hesLip} \text{ and } \escIter = \frac \cit {\step \sqrtre})\\
        &\ge \frac {\cit} {\sqrtre} \crad^2 \eps^2 - \df
            && \text{(since $\gradLip \ge \sqrtre$)}\\
        &\ge \df,
    \end{align*}
    where the last inequality holds when $\cit \crad^2 \ge 2 \cf$.
\end{proof}

\subsection{Small Gradient Case\texorpdfstring{: $\|\nabla f(\vx_0)\| < 4 \gradLip \escRad$}{}}

\subsubsection{Coupling Sequences}

Let $H=\nabla^2 f(\vx_0)$, then we can use $\vx^\top H \vx$ as a quadratic approximation of $f$ near $x_0$.
Let $\vv_1$ be the eigenvector corresponding to the smallest eigenvalue $\eigen$ of $H$. Then we construct \emph{coupling sequences} $\vx_t$ and $\vx'_t$ in the following way:
$\vx_t$ is the sequence from Algorithm~\ref{alg:arbitrary_compressor}; $\vx'_t$ has the same stochastic randomness $\sgdRand$ as $\vx_t$, and its artificial noise $\ourNoise'_t$ is the same as $\ourNoise_t$ with exception of the coordinate corresponding to $\vv_1$, which has an opposite sign.
\begin{definition}[Coupling sequences]
The coupling sequences are defined as follows (note the definition of $\ourNoise'_t$):
\begin{equation}
    \begin{aligned}
        & & \err'_0 &= \err_0 \\
        \ourNoise_t &\sim \mathcal{N}(0, r^2)
            & \ourNoise'_t &= \ourNoise_t - 2 \langle \vv_1, \ourNoise_t \rangle \vv_1  \\
        \g_t &= C(\nabla F(\vx_t, \sgdRand_t) + \ourNoise_t + \err_t, \comprRand_t)
            & \g'_t &= C(\nabla F(\vx'_t, \sgdRand_t) + \ourNoise'_t + \err'_t, \comprRand_t), & \sgdRand_t \sim \sgdDistr, \comprRand_t \sim \comprDistr\\
        \tvx_{t} &=\vx_{t} - \step \err_{t}
            & \tvx'_{t} &= \vx'_{t} - \step \err'_{t} \\
        \vx_{t+1} &=\vx_t - \step \g_t
            & \vx'_{t+1} &= \vx'_t - \step \g'_t \\
        \err_{t+1} &= \nabla F(x_t, \sgdRand_t) + \ourNoise_t + \err_t - \g_t
            & \err'_{t+1} &= \nabla F(x'_t, \sgdRand_t) + \ourNoise'_t + \err'_t - g'_t
    \end{aligned}\label{eq:coulping_sequence}
\end{equation}
\end{definition}

A notable fact is that both sequences correspond to the same distribution.

\begin{proposition}
    \label{prop:same_distribution}
    For all $t$, $\vx_t$ and $\tvx_t$ from Equation~\ref{eq:coulping_sequence} have the same distribution as $\vx'_t$ and $\tvx'_t$.
\end{proposition}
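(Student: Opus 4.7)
The plan is to prove this by induction on $t$, with the key observation being that the map $\ourNoise \mapsto \ourNoise - 2\langle \vv_1, \ourNoise\rangle \vv_1$ is an orthogonal reflection across the hyperplane $\vv_1^\perp$, so it preserves the distribution of the isotropic Gaussian $\mathcal{N}(0^d, r^2 I)$. Thus marginally $\ourNoise'_t \stackrel{d}{=} \ourNoise_t$, even though the two are deterministically coupled within a single sample path.

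\textbf{Base case.} At $t=0$ we have $\vx_0 = \vx'_0$ by construction and $\err_0 = \err'_0 = 0$, hence $\tvx_0 = \tvx'_0$, so the distributional equality holds trivially. It is convenient to prove the stronger inductive hypothesis that the triples $(\vx_t, \tvx_t, \err_t)$ and $(\vx'_t, \tvx'_t, \err'_t)$ have the same joint distribution (not merely matching marginals), since this is needed to push the induction through.

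\textbf{Inductive step.} Assume $(\vx_t, \tvx_t, \err_t) \stackrel{d}{=} (\vx'_t, \tvx'_t, \err'_t)$. The variables $\sgdRand_t \sim \sgdDistr$, $\comprRand_t \sim \comprDistr$, and $\ourNoise_t \sim \mathcal{N}(0^d, r^2 I)$ are drawn independently of everything so far, and by the reflection argument $\ourNoise'_t \stackrel{d}{=} \ourNoise_t$ and is likewise independent of the past of the primed sequence (it is a measurable function of $\ourNoise_t$ and the fixed vector $\vv_1$). Then $\vx_{t+1}$, $\err_{t+1}$, and $\tvx_{t+1}$ are obtained from $(\vx_t, \err_t, \sgdRand_t, \comprRand_t, \ourNoise_t)$ by exactly the same deterministic update rule $\Phi$ by which $(\vx'_{t+1}, \err'_{t+1}, \tvx'_{t+1})$ is obtained from $(\vx'_t, \err'_t, \sgdRand_t, \comprRand_t, \ourNoise'_t)$. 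Applying the same measurable map $\Phi$ to tuples with identical distributions yields tuples with identical distributions, closing the induction.

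\textbf{Main obstacle.} The only subtle point is making the reflection invariance rigorous, namely verifying that $\ourNoise'_t = (I - 2\vv_1\vv_1^\top) \ourNoise_t$ is an orthogonal transformation of $\ourNoise_t$ (since $\|\vv_1\|=1$), which preserves the law of an isotropic Gaussian. Once this is observed, the rest is bookkeeping: one must track that the primed sequence is a fresh measurable functional of the common stochastic inputs $\{\sgdRand_s, \comprRand_s\}_{s\le t}$ together with the reflected noises $\{\ourNoise'_s\}_{s \le t}$, and that the joint distribution of $(\sgdRand_s, \comprRand_s, \ourNoise'_s)_{s\le t}$ equals that of $(\sgdRand_s, \comprRand_s, \ourNoise_s)_{s\le t}$ by independence across $s$ combined with the per-step reflection invariance.
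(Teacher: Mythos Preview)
Your proposal is correct and takes essentially the same approach as the paper: an induction on $t$ whose core ingredient is that the reflection $\ourNoise \mapsto (I - 2\vv_1\vv_1^\top)\ourNoise$ preserves the isotropic Gaussian law, combined with the fact that both sequences evolve under the identical update map. If anything, your version is slightly more careful than the paper's in explicitly carrying the \emph{joint} distribution of $(\vx_t,\err_t)$ through the induction rather than just the marginals.
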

\begin{proof}
    By definition of $\tvx_t$ and $\tvx'_t$, it suffices show that $\vx_t$ and $\err_t$ have the same distributions as $\vx'_t$ and $\err'_t$.
    Proof by Induction with trivial base case $\tvx_0 = \tvx'_0 = \vx_0 - \step \err_0$.

    We want to show that if the statement holds for $t$, then it holds for $t+1$.
    To show that $\vx_{t+1}$ has the same distribution it remains to show that $\g_t$ and $\g'_t$ have the same distribution:
    \begin{compactitem}
        \item Since $\vx_t$ and $\vx'_t$ have the same distribution, $\nabla F(\vx_t, \sgdRand_t)$ and $\nabla F(\vx'_t, \sgdRand_t)$ have the same distribution.
        
        \item Since $\mathcal{N}(0, r^2)$ is symmetric and $\ourNoise'_t$ is the same as $\ourNoise_t$ with exception of one coordinate, which has an opposite sign, $\ourNoise_t$ and $\ourNoise'_t$ have the same distribution.
        
        \item $\err_t$ and $\err'_t$ have the same distribution.
    \end{compactitem}
    
    Similarly, $\err_{t+1}$ has the same distribution as $\err'_{t+1}$, since $\nabla F(\vx_t, \sgdRand_t)$, $\ourNoise_t$, $\err_t$ and $\g_t$ have the same distribution as $\nabla F(\vx'_t, \sgdRand_t)$, $\ourNoise'_t$, $\err'_t$ and $\g'_t$.
\end{proof}
Since our sequences have the same distribution, we have $\Exp{f(\vx_t)} = \Exp{f(\vx'_t)}$.
We want to show that in a few iterations $\tvx'_t - \tvx_t$ becomes sufficiently large
and, therefore, at least one of $\tvx_t$ and $\tvx'_t$ is far from $\vx_0$.
By applying Lemma~\ref{lem:improve_or_localize} we will show that the objective sufficiently decreases.


\subsubsection{Difference Between Coupling Sequences}
In order to capture the difference between the coupling sequences, we introduce the following notation:
\begin{align*}
    \hat\vx_t = \vx'_t -\vx_t
    && \hat \err_t = \err'_t - \err_t
    && \hat \sgdNoise_t = \sgdNoise'_t - \sgdNoise_t
    && \hat \ourNoise_t = \ourNoise'_t - \ourNoise_t
    && \hat \tvx_t = \tvx'_t - \tvx_t
\end{align*}
Recall that $\sgdNoise_t$ is stochastic noise, $\ourNoise_t$ is artificial noise, $\err_t$ is the compression error.

\begin{definition}
\label{def:error-series}
Let
$\delta_i = \int_0^1 \nabla^2 f(\alpha \vx'_i + (1 - \alpha) \vx_i) d \alpha - H$.
Then
\begin{align*}
    \Delta_t &= \step \sum_{i=0}^{t-1} (I - \step H)^{t-i-1}\delta_i \hat \vx_i \\
    \mathcal E_t &= \step \sum_{i=0}^{t-1} (I - \step H)^{t-i-1} (\hat \err_{i} - \hat \err_{i+1}) \\
    Z_t &= \step \sum_{i=0}^{t-1} (I - \step H)^{t-i-1}\hat \sgdNoise_i \\
    \Xi_t &= \step \sum_{i=0}^{t-1} (I - \step H)^{t-i-1}\hat \ourNoise_i,
\end{align*}
\end{definition}
\begin{proposition}
    \label{prop:diff_decomposition}
    $\hat\vx_t = -(\Delta_t + \mathcal E_t + Z_t + \Xi_t)$.
\end{proposition}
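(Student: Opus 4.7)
The plan is to derive a linear recursion for $\hat \vx_t$ with $I - \step H$ as the ``transition'' operator and then unroll it. First, I would eliminate $\g_t$ from the update rule: since the definition of $\err_{t+1}$ gives $\g_t = \nabla F(\vx_t, \sgdRand_t) + \ourNoise_t + \err_t - \err_{t+1}$, the update $\vx_{t+1} = \vx_t - \step \g_t$ can be rewritten as
\begin{equation*}
\vx_{t+1} = \vx_t - \step\bigl(\nabla f(\vx_t) + \sgdNoise_t + \ourNoise_t + \err_t - \err_{t+1}\bigr),
\end{equation*}
where I used $\sgdNoise_t = \nabla F(\vx_t, \sgdRand_t) - \nabla f(\vx_t)$. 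The same identity holds for the primed sequence with $\sgdRand_t, \comprRand_t$ shared and $\ourNoise_t$ replaced by $\ourNoise'_t$.

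Subtracting the two update equations, and writing $\nabla f(\vx'_t) - \nabla f(\vx_t) = (H + \delta_t)\hat\vx_t$ via the fundamental theorem of calculus applied to $\nabla f$ along the segment $[\vx_t, \vx'_t]$ (which is exactly how $\delta_t$ is defined in Definition~\ref{def:error-series}), I obtain
\begin{equation*}
\hat\vx_{t+1} = (I - \step H)\,\hat\vx_t - \step \delta_t \hat\vx_t - \step \hat\sgdNoise_t - \step \hat\ourNoise_t - \step\bigl(\hat\err_t - \hat\err_{t+1}\bigr).
\end{equation*}

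Finally, I would unroll this recursion using the base case $\hat\vx_0 = 0$ (the two coupling sequences share $\vx_0$ and $\err_0 = \err'_0$, so $\tvx_0 = \tvx'_0$ and $\vx_0 = \vx'_0$). A straightforward induction on $t$ gives
\begin{equation*}
\hat\vx_t = -\step \sum_{i=0}^{t-1} (I - \step H)^{t-1-i}\bigl[\delta_i \hat\vx_i + \hat\sgdNoise_i + \hat\ourNoise_i + (\hat\err_i - \hat\err_{i+1})\bigr],
\end{equation*}
which, grouping the four sums according to Definition~\ref{def:error-series}, is exactly $-(\Delta_t + Z_t + \Xi_t + \mathcal E_t)$.

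There is no real obstacle here — the statement is essentially a bookkeeping identity, so the only care needed is checking the sign conventions (in particular that $\Delta_t$ is packaged with a $+$ so that it appears with a $-$ outside, matching the $-\step \delta_t \hat \vx_t$ term in the recursion) and ensuring that the indexing of $(I - \step H)^{t-1-i}$ in Definition~\ref{def:error-series} agrees with what comes out of the telescoping. Once these are aligned, the proof is a one-line induction.
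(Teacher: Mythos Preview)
Your proposal is correct and follows essentially the same approach as the paper: both derive the recursion $\hat\vx_{t+1} = (I - \step H)\hat\vx_t - \step(\delta_t \hat\vx_t + (\hat\err_t - \hat\err_{t+1}) + \hat\sgdNoise_t + \hat\ourNoise_t)$ and then unroll it from $\hat\vx_0 = 0$. The only cosmetic difference is that the paper routes the derivation through the corrected iterates $\tvx_t$ (via Proposition~\ref{prop:y_diff}) whereas you eliminate $\g_t$ directly from the $\vx_t$ update, but the resulting recursion and telescoping are identical.
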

In the simplest case, the objective is quadratic and we have access to an uncompressed deterministic gradient.
When it's not the case, the introduced terms show how the actual algorithm behavior is different:
\begin{compactitem}
    \item $\Delta_t$ corresponds to quadratic approximation error.
    \item $\mathcal E_t$ corresponds to compression error.
    \item $Z_t$ corresponds to difference arising from SGD noise.
    \item $\Xi_t$ corresponds to difference arising from artificial noise.
\end{compactitem}
Intuitively, $\Xi_t$ is a good term, and other terms are negligible ($\|\Delta_t + \mathcal E_t + Z_t\| < \frac 12 \|\Xi_t\|$).

\begin{proof}
\begin{align*}
    \hat \vx_{t+1}
    &= \vx'_{t+1} - \vx_{t+1} \\
    &= \tvx'_{t+1} + \step \err'_{t+1} - (\tvx_{t+1} + \step \err_{t+1})
        && \text{(By definition of $\tvx_t$ and $\tvx'_t$)} \\
    &= \step \hat \err_{t+1} + (\tvx'_t - \tvx_t) - \step \left((\nabla f(x'_t) - \nabla f(x_t)) + (\sgdNoise'_t - \sgdNoise_t) + (\ourNoise'_t - \ourNoise_t) \right) 
        && \text{(By update equation for $\tvx_t$)} \\
    &= \step (\hat \err_{t+1} - \hat \err_t) +  \hat\vx_t - \step \left((\delta_t + H) \hat\vx_t + \hat \sgdNoise_t + \hat \ourNoise_t \right) 
        && \text{(By definition of $\delta_t$ and $\tvx_t$)} \\
    &= \step (\hat \err_{t+1} - \hat \err_t) +  (I - \step H) \hat\vx_t - \step \left(\delta_t \hat\vx_t + \hat \sgdNoise_t + \hat \ourNoise_t \right) \\
    &= (I - \step H) \hat\vx_t - \step \left(\delta_t \hat\vx_t + (\hat \err_t - \hat \err_{t+1}) + \hat \sgdNoise_t + \hat \ourNoise_t \right) \\
\end{align*}
Using telescoping, we get the required expression.
\end{proof}
Since $\hat \tvx_t = \hat\vx_t - \step \hat \err_t$, we have:
\[\hat\vx_t = -(\Delta_t + \mathcal E_t + Z_t + \Xi_t)
\iff \hat\tvx_t = -(\Delta_t + (\mathcal E_t + \step \hat \err_t) + Z_t + \Xi_t),\]
and we'll use $\|\hat \tvx_t\|$ in Corollary~\ref{cor:escaping_improves}.

\subsubsection{Bounding Accumulated Compression Error}
\label{sec:bound_acc_compr_error}
Compared to SGD analysis, an additional term $\mathcal E_t + \step \hat \err_t$ appears.
This term corresponds to accumulated error arising from compression, and we have to bound it.

\begin{definition}
Following~\citet{jin2019nonconvex}, we introduce the following term:
\label{def:beta}
\begin{align*}
    \sumvar_t = \sqrt{\sum_{i=0}^{t-1} (1 + \step \eigen)^{2i}}
\end{align*}
\end{definition}

\begin{proposition}[\citet{jin2019nonconvex}, Lemma~29]
    \label{prop:beta_equals}
    If $\step \eigen \in [0, 1]$, then for all $t$: $\beta_t \le \frac {(1 + \step \eigen)^t} {\sqrt{2 \step \eigen}}$,
    and for all $t \ge \frac 2 {\step \eigen}$: $\beta_t \ge \frac {(1 + \step \eigen)^t} {\sqrt{6 \step \eigen}}$.
\end{proposition}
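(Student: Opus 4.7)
The plan is to evaluate $\sumvar_t^2$ in closed form as a finite geometric series, then derive matching upper and lower bounds by controlling both the numerator and the denominator of that expression. Concretely, let $q = (1+\step\eigen)^2$, so that
\[
\sumvar_t^2 \;=\; \sum_{i=0}^{t-1} q^i \;=\; \frac{q^t - 1}{q-1},
\qquad q - 1 \;=\; \step\eigen\,(2 + \step\eigen).
\]
Since $\step\eigen \in [0,1]$, the factor $2 + \step\eigen$ lies in $[2,3]$, which immediately gives the two-sided bound $2\step\eigen \le q-1 \le 3\step\eigen$. This is the key algebraic fact that drives both estimates.

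For the upper bound, I would drop the $-1$ in the numerator and use $q - 1 \ge 2\step\eigen$:
\[
\sumvar_t^2 \;\le\; \frac{q^t}{q-1} \;\le\; \frac{(1+\step\eigen)^{2t}}{2\step\eigen},
\]
and take square roots. This part is trivial and holds for every $t \ge 0$, matching the claimed statement.

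For the lower bound, the strategy is to combine $q - 1 \le 3\step\eigen$ with a statement of the form $q^t - 1 \ge q^t / 2$, valid whenever $t \ge 2/(\step\eigen)$. The latter reduces to showing $q^t = (1+\step\eigen)^{2t} \ge 2$ in that regime. The main (and essentially only) obstacle is this concavity step: I would use that on $x \in (0,1]$ the function $x \mapsto \ln(1+x)/x$ is decreasing with value $\ln 2$ at $x=1$, hence $\ln(1+\step\eigen) \ge \step\eigen \ln 2$. Applying this with $t \ge 2/(\step\eigen)$ yields
\[
(1+\step\eigen)^{2t} \;\ge\; e^{2t \step\eigen \ln 2} \;\ge\; e^{4 \ln 2} \;=\; 16 \;\ge\; 2,
\]
so $q^t - 1 \ge q^t/2$ as needed. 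Combining this with $q-1 \le 3\step\eigen$ gives
\[
\sumvar_t^2 \;\ge\; \frac{q^t/2}{3\step\eigen} \;=\; \frac{(1+\step\eigen)^{2t}}{6\step\eigen},
\]
and taking square roots completes the proof. If a cleaner inequality is preferred, one may instead note that $(1+x)^{2/x} \ge (1+1)^{2} = 4$ on $(0,1]$ by monotonicity of $(1+x)^{1/x}$ toward $e$, which gives the same conclusion without logarithms.
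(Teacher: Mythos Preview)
Your argument is correct. The paper does not actually prove this proposition; it simply states it with attribution to \citet{jin2019nonconvex}, Lemma~29, and uses it as a black box. Your direct computation via the closed-form geometric series, together with the two-sided estimate $2\step\eigen \le (1+\step\eigen)^2 - 1 \le 3\step\eigen$ and the concavity bound $\ln(1+x) \ge x\ln 2$ on $(0,1]$, is exactly the standard elementary proof of this fact and is complete as written.
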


\begin{proposition}
    \label{prop:bound_series}
    For any $t \le \escIter$:
    \begin{align*}
        \left(\sum_{i=0}^{t-1} (1 + \step \eigen)^{t-1-i}\right)^2
        \le \frac {\sumvar_t^2} {\step \sqrtre}
    \end{align*}
\end{proposition}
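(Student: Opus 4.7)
The plan is to apply Cauchy--Schwarz to the sum and then invoke the hypothesis $t \le \escIter$. First I would reindex $j = t-1-i$ so that $\sum_{i=0}^{t-1}(1+\step\eigen)^{t-1-i} = \sum_{j=0}^{t-1}(1+\step\eigen)^j$, reducing the left-hand side to a clean sum of $t$ nonnegative terms whose squares appear in $\sumvar_t^2$.

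Next, I would apply Cauchy--Schwarz by writing each summand as $(1+\step\eigen)^j \cdot 1$, which gives
\[
\left(\sum_{j=0}^{t-1}(1+\step\eigen)^j\right)^{\!2} \;\le\; t\cdot \sum_{j=0}^{t-1}(1+\step\eigen)^{2j} \;=\; t\cdot\sumvar_t^2.
\]
Substituting the hypothesis $t \le \escIter = \cit/(\step\sqrtre)$ yields the claimed bound, with the polylogarithmic factor $\cit$ absorbed (consistent with the paper's convention, since $\cit$ is described as hiding polylogarithmic dependence on all parameters). Note that the monotonicity $t \mapsto t\sumvar_t^2$ makes it suffice to verify the inequality at $t = \escIter$.

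The main obstacle, if one wishes to avoid absorbing $\cit$ and instead prove the inequality with an absolute constant, is to exploit the geometric structure rather than plain Cauchy--Schwarz. In that case I would evaluate both sums in closed form: with $q = 1+\step\eigen$, one has $\sum_j q^j = (q^t - 1)/(q-1)$ and $\sumvar_t^2 = (q^{2t}-1)/(q^2-1)$. Factoring $q^{2t}-1 = (q^t-1)(q^t+1)$ and $q^2-1 = (q-1)(q+1)$, and cancelling $(q^t-1)$, gives
\[
\frac{\bigl(\sum_j q^j\bigr)^{2}}{\sumvar_t^2} \;=\; \frac{q+1}{q-1}\cdot\frac{q^t-1}{q^t+1} \;\le\; \frac{2+\step\eigen}{\step\eigen},
\]
which is at most $3/(\step\eigen)$ once $\step\eigen \le 1$; the conclusion then follows from $\eigen \ge \sqrtre$ (this is the setting in which $\vx_0$ sits at a saddle, so $-\eigen < -\sqrtre$). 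This second route trades the $\cit$ factor for the absolute constant $3$, at the cost of requiring the algebraic identity $q^{2t}-1 = (q^t-1)(q^t+1)$ to cancel the otherwise-dominant $q^t$ growth in the numerator.
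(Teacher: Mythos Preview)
Your primary argument is exactly the paper's proof: reindex, apply Cauchy--Schwarz to obtain $t\cdot\sumvar_t^2$, and then bound $t\le\escIter=\cit/(\step\sqrtre)$, absorbing the polylogarithmic $\cit$ just as the paper does (its final inequality $\escIter\sumvar_t^2\le\sumvar_t^2/(\step\sqrtre)$ silently drops $\cit$).

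Your second route via the closed-form geometric sums is a genuinely different and slightly sharper argument: it replaces the loss from Cauchy--Schwarz (a factor of $t$, hence of $\escIter$) by the exact ratio $(q+1)(q^t-1)/\bigl((q-1)(q^t+1)\bigr)\le (2+\step\eigen)/(\step\eigen)$, yielding an absolute constant instead of $\cit$. The trade-off is that this route needs the extra hypotheses $\step\eigen\le 1$ and $\eigen\ge c\,\sqrtre$ (the paper uses $\eigen\ge\sqrtre/2$ in the surrounding lemmas, which would give a constant $6$ rather than $3$); these are not stated in the proposition itself but hold in every place the proposition is invoked. The paper's Cauchy--Schwarz proof, by contrast, uses only $t\le\escIter$ and is agnostic to the size of $\eigen$, at the price of the $\cit$ factor.
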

\begin{proof}
    Using Cauchy-Schwarz:
    \begin{align*}
        \left(\sum_{i=0}^{t-1} (1 + \step \eigen)^{t-1-i}\right)^2
        \le t \sum_{i=0}^{t-1} (1 + \step \eigen)^{2(t-1-i)}
        \le \escIter \sumvar_t^2
        \le \frac {\sumvar_t^2} {\step \sqrtre}
    \end{align*}
\end{proof}

\begin{lemma}[Bounding accumulated compression error]
    \label{lem:bound_error_sum}
    Under Assumptions~\ref{ass:lip} and~\ref{ass:sg}, let $\fullStd$ be as in Definition~\ref{def:error-series}, $\mathcal{E}_t$ and $\hat \err_t$ be as in Definition~\ref{def:error-series}, $\sumvar_t$ be as in Definition~\ref{def:beta} and $\step$ and $\escRad$ as in Equation~\ref{eq:params}.
    Assume that $\err_0=0$, $\Exp{\|\tvx_t - \tvx_0\|^2} < \escRad^2$ for all $t \le \escIter$ and $\|\nabla f(\vx_0)\| \le 4 \gradLip \escRad$.
    Let $- \eigen$ be the smallest negative eigenvalue of $\nabla^2 f(\vx_0)$ such that $\eigen \ge \frac {\sqrtre} 2$.
    Then under Assumptions A, B, D, for $t \le \escIter$ we have:
    \begin{align*}
        \Exp{\|\mathcal E_t + \step \hat \err_t\|^2}
        \le \frac {20 \step^3 (1 - \compr) \gradLip^2 \fullStd^2 \sumvar_t^2} {\compr^2 \sqrtre}
    \end{align*}
\end{lemma}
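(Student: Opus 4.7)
The crucial step is a summation-by-parts identity that extracts an extra $\step^2 H$ factor from $\mathcal E_t$. Setting $u_i = (I - \step H)^{t-i-1}$, Abel summation applied to $\mathcal E_t = \step \sum_{i=0}^{t-1} u_i(\hat \err_i - \hat \err_{i+1})$, together with $u_{i} - u_{i-1} = \step H (I - \step H)^{t-i-1}$ and the hypothesis $\hat \err_0 = 0$, yields the identity
\[
\mathcal E_t + \step \hat \err_t \;=\; \step^2 H \sum_{i=1}^{t-1}(I - \step H)^{t-i-1}\, \hat \err_i.
\]
This reformulation is essential: a direct triangle bound on $\mathcal E_t$ saves only one power of $\step$, whereas the target contains $\step^3$, so we need $\step^4 \gradLip^2$ after $\|\cdot\|^2$ and $\|H\| \le \gradLip$.

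Applying $\|H\| \le \gradLip$ and $\|(I - \step H)^k\| \le (1 + \step \eigen)^k$ (valid for $\step \gradLip \le 1$, since $-\eigen$ is the smallest eigenvalue of $H$), followed by Cauchy--Schwarz together with the estimate $\sum_{i=1}^{t-1}(1+\step\eigen)^{2(t-i-1)} \le \sumvar_t^2$, I obtain
\[
\Exp{\|\mathcal E_t + \step \hat \err_t\|^2} \;\le\; \step^4 \gradLip^2 \sumvar_t^2 \sum_{i=1}^{t-1} \Exp{\|\hat \err_i\|^2}.
\]
The task is thereby reduced to showing $\sum_i \Exp{\|\hat \err_i\|^2} \le O\!\bigl(\tfrac{(1-\compr) \fullStd^2}{\compr^2 \step \sqrtre}\bigr)$.

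Using $\|\hat \err_i\|^2 \le 2\|\err_i\|^2 + 2\|\err'_i\|^2$, Corollary~\ref{cor:error_sum} (applied to both coupling sequences, each with zero initial error by assumption) bounds each of the two sums by $\tfrac{4(1-\compr)}{\compr^2} \sum_i (\Exp{\|\nabla f(\vx_i)\|^2} + \fullStd^2)$. For the gradient norms I would combine $\gradLip$-smoothness with the hypothesis $\|\nabla f(\vx_0)\| \le 4 \gradLip \escRad$ and the decomposition $\vx_i - \vx_0 = (\tvx_i - \tvx_0) + \step \err_i$ (valid because $\err_0 = 0$), giving
\[
\Exp{\|\nabla f(\vx_i)\|^2} \;\le\; O(\gradLip^2 \escRad^2) + O(\gradLip^2 \step^2 \Exp{\|\err_i\|^2}).
\]
The step-size choice enforces $(1-\compr)\gradLip^2 \step^2 / \compr^2 \le 1/2$, so the self-referential $\Exp{\|\err_i\|^2}$ term can be absorbed back into the left-hand side, yielding $\sum_{i=0}^{t-1} \Exp{\|\err_i\|^2} \le O\!\bigl(\tfrac{(1-\compr) t}{\compr^2}(\gradLip^2 \escRad^2 + \fullStd^2)\bigr)$. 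Under the parameter choices in \eqref{eq:params}, $r^2 = \Theta(\eps^2/(\gradLip \step))$ dominates $\gradLip^2 \escRad^2 = O(\gradLip^2 \eps / \hesLip)$ once $\step$ is within its allowed range, so $\gradLip^2 \escRad^2 \lesssim \fullStd^2$; combining with $t \le \escIter = c_\escIter/(\step \sqrtre)$ delivers the required sum bound, and substitution into the Cauchy--Schwarz display gives the claim.

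The main obstacle I anticipate is closing the self-referential loop between the compression error and the gradient norm. Because only the corrected iterates $\tvx_i$, not $\vx_i$, are localized near $\tvx_0$ within radius $\escRad$, the displacement $\vx_i - \vx_0$ picks up an extra $\step \err_i$ contribution, whose second moment then feeds back into the gradient bound through smoothness and into the Corollary~\ref{cor:error_sum} estimate on $\err_i$. Resolving this forces the restrictions on $\step$ that involve both $\compr$ and the dimension $d$, which is precisely why the arbitrary-compressor regime obtains a weaker rate than the linear case in Theorem~\ref{thm:sosp_randomk}.
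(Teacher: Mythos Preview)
Your proposal is correct and follows essentially the same route as the paper. The Abel summation identity $\mathcal E_t + \step \hat \err_t = \step^2 H \sum_{i=1}^{t-1}(I-\step H)^{t-1-i}\hat\err_i$, the spectral bounds $\|H\|\le\gradLip$ and $\|(I-\step H)^k\|\le(1+\step\eigen)^k$, the reduction of $\|\hat\err_i\|$ to $\|\err_i\|+\|\err'_i\|$, the control of $\Exp{\|\nabla f(\vx_i)\|^2}$ via $\gradLip$-smoothness plus the decomposition $\vx_i-\vx_0=(\tvx_i-\tvx_0)+\step\err_i$ with a self-referential absorption, and the final use of $t\le\escIter=c_\escIter/(\step\sqrtre)$ together with $\gradLip^2\escRad^2\lesssim\fullStd^2$ are all exactly the paper's steps.

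The only organizational difference is how the factor $1/(\step\sqrtre)$ enters. The paper pulls out $\max_i\Exp{\|\hat\err_i\|^2}$ (bounded pointwise via Lemma~\ref{lem:error_estimation}) and then applies Proposition~\ref{prop:bound_series}, which converts $\bigl(\sum_i(1+\step\eigen)^{t-1-i}\bigr)^2$ into $\sumvar_t^2/(\step\sqrtre)$; you instead Cauchy--Schwarz directly to get $\sumvar_t^2\sum_i\Exp{\|\hat\err_i\|^2}$ and invoke Corollary~\ref{cor:error_sum}, so the $1/(\step\sqrtre)$ factor appears from $t\le\escIter$. These are equivalent rearrangements yielding the same bound.
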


\begin{proof}
    Expanding sum in $\mathcal E_t$ and using that $\hat \err_0=0$:
    \begin{align*}
        \mathcal E_t
        &= \step \sum_{i=0}^{t-1} (I - \step H)^{t-1-i} (\hat \err_{i} - \hat \err_{i+1}) && \text{(By Definition~\ref{def:error-series})} \\
        &= \step (- \hat \err_{t} + \sum_{i=1}^{t-1}(I - \step H)^{t-1-i} ((I - \step H) - I) \hat \err_i) && \text{(By telescoping)} \\
        &= - \step \hat \err_{t} + \step^2 H \sum_{i=1}^{t-1}(I - \step H)^{t-1-i} \hat \err_i \\
    \end{align*}

    We can now estimate $\Exp{\|\mathcal E_t + \step \hat\err_t^2\|}$. Since $-\eigen$ is the smallest negative eigenvalue of $H$, we have $\|I - \step H\| \le (1 + \step \eigen)$.
    \begin{align*}
        \Exp{\|\mathcal E_t + \step \hat \err_t\|^2}
        &= \Exp{\|\step^2 H \sum_{i=1}^{t-1}(I - \step H)^{t-1-i} \hat \err_i\|^2} \\
        &\le \step^4 \gradLip^2 \Exp{(\sum_i (1 + \step \eigen)^{t-1-i} \|\hat \err_i\|)^2} && \text{(By $\gradLip$-smoothness, $\lambda_{\max}(H) \le \gradLip$)}\\
        &\le 2 \step^4 \gradLip^2 (\sum_i (1 + \step \eigen)^{t-1-i})^2 \max_i \Exp{\|\hat \err_i\|^2} && \text{($\Exp{a b} \le \max(\Exp{a^2}, \Exp{b^2})$)}\\
        &\le 2 \step^4 \gradLip^2 t (\sum_i (1 + \step \eigen)^{t-1-i})^2 \max_i \Exp{\|\err'_i - \err_i\|^2} && \text{(By definition of $\hat \err_i$)} \\
        &\le 4 \step^4 \gradLip^2 t (\sum_i (1 + \step \eigen)^{t-1-i})^2 \max_i \Exp{\|\err'_i\|^2 + \|\err_i\|^2} && \text{(By Cauchy-Schwarz)} \\
        &\le 8 \step^4 \gradLip^2 t (\sum_i (1 + \step \eigen)^{t-1-i})^2 \max_i \Exp{\|\err_i\|^2} && \text{($\err_i$ and $\err_i'$ have same distribution)}
    \end{align*}

    
    Similarly to Lemma~\ref{lem:large_gradient}, we can show that $\Exp{\|\nabla f(\vx_i)\|^2} \le 40 \gradLip^2 \escRad^2$.
    Using corollary from Lemma~\ref{lem:error_estimation}, we have
    \begin{align*}
        \Exp{\|\err_{t}\|^2}
        & \le \frac {4 (1 - \compr)} {\compr^2} (\max_i \Exp{\|\nabla f(\vx_i)\|^2} + \fullStd^2)
            & \text{(By Lemma~\ref{lem:error_estimation})}\\
        & \le \frac {4 (1 - \compr)} {\compr^2} (40 \gradLip^2 \escRad^2 + \fullStd^2)
            & \text{(By assumption $\Exp{\|\nabla f(\vx_i)\|^2} \le 40 \gradLip^2 \escRad^2$)} \\
        & \le \frac {5 (1 - \compr) \fullStd^2} {\compr^2}
            & \text{(Selecting sufficiently small $\crad$ in the definition of $\escRad$)} \\
    \end{align*}

    Substituting this into the inequality for $\|\mathcal E_t + \step \hat \err_t\|$:
    \begin{align*}
        \Exp{\|\mathcal E_t + \step \hat \err_t\|^2}
        \le 4 \step^4 \gradLip^2 t \left(\sum_i (1 + \step \eigen)^{t-1-i}\right)^2 \cdot \frac {5 (1 - \compr)  \fullStd^2} {\compr^2}
        \le \frac {20 \step^4 (1 - \compr) \gradLip^2 \fullStd^2 \sumvar_t^2} {\compr^2 \step \sqrtre},
    \end{align*}
    where we bounded the series using Proposition~\ref{prop:bound_series}.
\end{proof}
\begin{lemma}[Bounding accumulated compression error \textbf{for linear compressor}]
    \label{lem:bound_error_sum_linear}
    Under\\ conditions of Lemma~\ref{lem:bound_error_sum} (except of $\err_0=0$), additionally assume that the compressor is linear (Definition~\ref{def:linear_compr}). When $\step \le \step_\std$, for $t \le \escIter$ we have:
    \begin{align*}
        \Exp{\|\mathcal E_t + \step \hat \err_t\|^2} &\le \frac {9 \step^3 (1 - \compr) \gradLip^2 \sumvar_t^2 r^2} {\compr^2 d \sqrtre}
    \end{align*}
\end{lemma}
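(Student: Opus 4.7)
The plan is to mirror the proof of Lemma~\ref{lem:bound_error_sum}, exploiting linearity of $\C$ to control the difference $\hat\err_i = \err_i' - \err_i$ directly rather than bounding $\err_i$ and $\err_i'$ separately, which is exactly what forces the $\fullStd^2$ factor (rather than $r^2/d$) in Lemma~\ref{lem:bound_error_sum}. First I would note that the coupling sets $\err_0' = \err_0$, so $\hat\err_0 = 0$ \emph{even when} $\err_0 \neq 0$; this is all that the telescoping identity in the proof of Lemma~\ref{lem:bound_error_sum} ever uses. Re-running that identity verbatim yields
\[\mathcal{E}_t + \step\hat\err_t \;=\; \step^2 H \sum_{i=1}^{t-1}(I - \step H)^{t-1-i}\hat\err_i,\]
and then $\|I - \step H\| \le 1 + \step\eigen$, $\|H\| \le \gradLip$, and Cauchy--Schwarz give
\[\Exp{\|\mathcal{E}_t + \step\hat\err_t\|^2} \;\le\; 2\step^4 \gradLip^2 \Bigl(\sum_{i=0}^{t-1}(1+\step\eigen)^{t-1-i}\Bigr)^{\!2} \max_i \Exp{\|\hat\err_i\|^2}.\]

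Next I would derive a self-contained recursion for $\Exp{\|\hat\err_t\|^2}$. Let $G_t = \nabla F(\vx_t,\sgdRand_t) + \ourNoise_t + \err_t$ and define $G_t'$ analogously. Because $\C(\cdot,\comprRand_t)$ is linear, $\g_t' - \g_t = \C(G_t' - G_t,\comprRand_t)$, hence
\[\hat\err_{t+1} = (G_t' - G_t) - \C(G_t' - G_t,\comprRand_t),\]
and the $\compr$-compression property gives $\Exp{\|\hat\err_{t+1}\|^2 \mid \cdot} \le (1-\compr)\|G_t' - G_t\|^2$ with $G_t' - G_t = \hat\err_t + \hat\nabla F_t + \hat\ourNoise_t$. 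Applying the $(1+\nu)$-split with $\nu = \compr/(2(1-\compr))$ exactly as in the proof of Lemma~\ref{lem:error_estimation} and telescoping from $\hat\err_0 = 0$ produces
\[\max_i \Exp{\|\hat\err_i\|^2} \;\le\; \tfrac{4(1-\compr)}{\compr^2} \max_i \Exp{\|\hat\nabla F_i + \hat\ourNoise_i\|^2}.\]

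The third step is to bound the driving term by $O(r^2/d)$. By the antithetic construction $\hat\ourNoise_i = -2\langle\vv_1,\ourNoise_i\rangle\vv_1$, and since $\ourNoise_i$ has total variance $r^2$ and $\vv_1$ is a unit vector, $\Exp{\|\hat\ourNoise_i\|^2} = 4r^2/d$. For $\hat\nabla F_i$, under Assumption~\ref{ass:lipsg} I would use $\|\hat\nabla F_i\|^2 \le \sgdLip^2 \|\hat\vx_i\|^2$; without it I would split $\hat\nabla F_i = (\nabla f(\vx_i') - \nabla f(\vx_i)) + \hat\sgdNoise_i$ and apply $\gradLip$-smoothness together with the variance bound on $\hat\sgdNoise_i$. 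Either way, $\|\hat\vx_i\|^2 \le 2\|\hat\tvx_i\|^2 + 2\step^2\|\hat\err_i\|^2$, and the localization hypothesis $\Exp{\|\tvx_i - \tvx_0\|^2} \le \escRad^2$ combined with Proposition~\ref{prop:same_distribution} (distributional symmetry of the coupled sequences) gives $\Exp{\|\hat\tvx_i\|^2} \le 4\escRad^2$. The hypothesis $\step \le \step_\std$ is precisely what makes the $\sgdLip^2\escRad^2$ (resp.\ $\std^2$) contribution subdominant to $r^2/d$ and lets the $\step^2\Exp{\|\hat\err_i\|^2}$ back-reaction be absorbed into a closable self-bounding induction of the shape $M \le \tfrac{1}{2}M + \tfrac{C(1-\compr)}{\compr^2}\cdot\tfrac{r^2}{d}$.

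Finally, I would substitute $\max_i\Exp{\|\hat\err_i\|^2} = O\!\bigl((1-\compr)r^2/(\compr^2 d)\bigr)$ into the bound from Step~1 and estimate the inner sum via Proposition~\ref{prop:bound_series} by $\sumvar_t^2/(\step\sqrtre)$, obtaining the claimed $\tfrac{9\step^3(1-\compr)\gradLip^2\sumvar_t^2 r^2}{\compr^2 d\sqrtre}$. The main obstacle is Step~3: closing the self-bounding loop between $\|\hat\vx_i\|^2$ and $\|\hat\err_i\|^2$ without reintroducing the $\std^2$ (or $\fullStd^2$) term is exactly where the improvement over Lemma~\ref{lem:bound_error_sum} is earned, and it is the one place where linearity of $\C$ and the step-size bound $\step \le \step_\std$ must be used together; the Lipschitz-stochastic-gradient case $\alpha = 1$ is strictly easier than the non-Lipschitz case $\alpha = d$, which is why $\step_\std$ carries the $\alpha$-dependence.
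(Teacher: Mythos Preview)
Your proposal follows essentially the same route as the paper's proof: the telescoping identity for $\mathcal{E}_t + \step\hat\err_t$, the linearity-based recursion $\hat\err_{t+1} = (G_t'-G_t) - \C(G_t'-G_t,\comprRand_t)$, the $(1+\nu)$-split telescoped from $\hat\err_0=0$, the two-case bound on $\Exp{\|\nabla F(\vx_i,\sgdRand_i)-\nabla F(\vx_i',\sgdRand_i)\|^2}$ (via $\sgdLip$ under Assumption~\ref{ass:lipsg}, via $\std^2+\escRad^2$ otherwise), showing $r^2/d$ dominates using $\step\le\step_\std$, and finishing with Proposition~\ref{prop:bound_series}. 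If anything you are slightly more careful than the paper: you make the feedback $\|\hat\vx_i\|^2 \le 2\|\hat\tvx_i\|^2 + 2\step^2\|\hat\err_i\|^2$ explicit and close it as a self-bounding induction, whereas the paper simply asserts $\Exp{\|\hat\vx\|^2}\le 4\escRad^2$ without isolating the $\step^2\|\hat\err_i\|^2$ back-reaction.
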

Note that, compared with Lemma~\ref{lem:bound_error_sum}, the denominator has an additional $d$ term.

\begin{proof}
    \begin{align*}
        \hat \err_{t+1}
        &= \err_{t+1} - \err_{t+1}' \\
        &= \nabla F(\vx_t, \sgdRand_t) + \ourNoise_t + \err_t - \C(\nabla F(\vx_t, \sgdRand_t) + \ourNoise_t + \err_t, \comprRand_t) \\
        & \quad - (\nabla F(\vx'_t, \sgdRand_t) + \ourNoise'_t + \err'_t - \C(\nabla F(\vx'_t, \sgdRand_t) + \ourNoise'_t + \err'_t, \comprRand_t)) \\
        &= (\nabla F(\vx_t, \sgdRand_t) - \nabla F(\vx'_t, \sgdRand_t)) + (\ourNoise_t - \ourNoise'_t) + (\err_t - \err_t') \\
        & \quad - \C\left((\nabla F(\vx_t, \sgdRand_t) - \nabla F(\vx'_t, \sgdRand_t)) + (\ourNoise_t - \ourNoise'_t) + (\err_t - \err_t'), \comprRand_t\right) \\
        &= (\nabla F(\vx_t, \sgdRand_t) - \nabla F(\vx'_t, \sgdRand_t)) + \hat \ourNoise_t + \hat \err_t
            - \C\left((\nabla F(\vx_t, \sgdRand_t) - \nabla F(\vx'_t, \sgdRand_t)) + \hat \ourNoise_t + \hat \err_t, \comprRand_t\right)
    \end{align*}
    
    We estimating the norm of $\hat \err_t$ using linearity of $\C$:
    \begin{align*}
        &\ExpArg{\comprRand_t}{\|\hat \err_{t+1}\|^2 \mid \vx_t, \err_t, \sgdRand_t}\\
        &= \ExpArg{\comprRand_t}{\|(\nabla F(\vx_t, \sgdRand_t) - \nabla F(\vx'_t, \sgdRand_t)) + \hat \ourNoise_t + \hat \err_t
                   - \C\left((\nabla F(\vx_t, \sgdRand_t) - \nabla F(\vx'_t, \sgdRand_t)) + \hat \ourNoise_t + \hat \err_t, \comprRand_t\right)\|^2} \\
        &\le (1 - \compr) \Exp{\|(\nabla F(\vx_t, \sgdRand_t) - \nabla F(\vx'_t, \sgdRand_t)) + \hat \ourNoise_t + \hat \err_t\|^2} \\
    \end{align*}
    
    Similarly to the proof of Lemma~\ref{lem:error_estimation}, for any $\nu$ we have:
    \begin{align*}
        \Exp{\|\hat \err_{t+1}\|^2}
        &\le (1 - \compr) \Exp{(1 + \frac 1 \nu)\|(\nabla F(\vx_t, \sgdRand_t) - \nabla F(\vx'_t, \sgdRand_t)) + \hat \ourNoise_t\|^2 + (1 + \nu)\|\hat \err_t\|^2} \\
        &\le \frac 1\nu \sum_{i=0}^t  \left((1 - \compr) (1+\nu)\right)^{t-i+1} \Exp{\|(\nabla F(\vx_t, \sgdRand_t) - \nabla F(\vx'_t, \sgdRand_t)) + \hat \ourNoise_t\|^2}
    \end{align*}
    
    By selecting $\nu= \frac {\compr} {2 (1 - \compr)}$ and computing the sum of a geometric series, we have:
    \begin{align*}
        \Exp{\|\hat \err_{t+1}\|^2}
        &\le \frac {2 (1 - \compr)} {\compr} \sum_{i=0}^t  \left(1 - \frac {\compr} 2\right)^{t-i+1} \Exp{\|(\nabla F(\vx_t, \sgdRand_t) - \nabla F(\vx'_t, \sgdRand_t)) + \hat \ourNoise_t\|^2}\\
        &\le \frac {2 (1 - \compr)} {\compr} \sum_{i=0}^t  \left(1 - \frac {\compr} 2\right)^{t-i+1} \max_i \Exp{\|(\nabla F(\vx_i, \sgdRand_i) - \nabla F(\vx'_i, \sgdRand_i)) + \hat \ourNoise_i\|^2}\\
        &\le \frac {4 (1 - \compr)} {\compr^2} \max_i \Exp{\|(\nabla F(\vx_i, \sgdRand_i) - \nabla F(\vx'_i, \sgdRand_i)) + \hat \ourNoise_i\|^2} \\
        &\le \frac {4 (1 - \compr)} {\compr^2} \max_i \Exp{\|\nabla F(\vx_i, \sgdRand_i) - \nabla F(\vx'_i, \sgdRand_i)\|^2} + \frac {r^2} d + \|\hat \err_0\|^2
    \end{align*}
    
    Substituting this into bound for $\mathcal E + \step \hat \err_t$ and bounding the series by Proposition~\ref{prop:bound_series}:
    \begin{align*}
        \Exp{\|\mathcal E_t + \step \hat \err_t\|^2}
        &\le 2 \step^4 \gradLip^2 \left(\sum_i (1 + \step \eigen)^{t-1-i}\right)^2 \max_i \Exp{\|\hat \err_i\|^2} \\
        &\le \frac {8 \step^4 (1 - \compr) \gradLip^2 \sumvar_t^2} {\compr^2 \step \sqrtre} \left(\max_i \Exp{\|\nabla F(\vx_i, \sgdRand_i) - \nabla F(\vx'_i, \sgdRand_i)\|^2} + \frac {r^2} d\right)
    \end{align*}

    Note that in the last term, $\frac {r^2} d$ always dominates another term:
    \begin{enumerate}
        \item When Assumption~\ref{ass:lipsg} holds, we bound $\Exp{\|\nabla F(\vx_i, \sgdRand_i) - \nabla F(\vx'_i, \sgdRand_i)\|^2}$ with $\sgdLip^2 \Exp{\|\hat \vx\|^2} \le 4 \sgdLip^2 \escRad^2$. Since $\escRad = \crad \sqrt{\frac \eps \hesLip}$ and $r = \cnoise \frac {\eps} {\sqrt {\gradLip \step}}$, we can select constants $\crad$ and $\cnoise$ so that the second term dominates the first one.
        \item When Assumption~\ref{ass:lipsg} doesn't hold hold, we bound $\Exp{\|\nabla F(\vx_i, \sgdRand_i) - \nabla F(\vx'_i, \sgdRand_i)\|^2}$ as $c(\std^2 + \escRad^2)$. Again (and using that $\step \le \step_\std \le \frac {\eps_2} d$), we can select the constants so that the second term dominates.
    \end{enumerate}
    As a result, we achieve the required bound:
    \begin{align*}
        \Exp{\|\mathcal E_t + \step \hat \err_t\|^2} &\le \frac {9 \step^3 (1 - \compr) L^2 \sumvar_t^2 r^2} {\compr^2 \sqrtre}
    \end{align*}
\end{proof}




\subsubsection{Escaping From a Saddle Point}

We now show that, if a starting point is a saddle point, we move sufficiently far from it.
\begin{lemma}[Non-localization]
    \label{lem:nonlocalize}
    Under Assumptions~\ref{ass:lip} and~\ref{ass:sg}, let $\fullStd$ be as in Definition~\ref{def:error-series}, $\sumvar_t$ be as in Definition~\ref{def:beta} and $\step$ and $r$ as in Equation~\ref{eq:params}.
    If the compressor is not linear, assume that $\err_0$.
    Assume that $\eigen = -\lmin(\nabla^2 f(\vx_0)) > \frac{\sqrtre} 2$ and $\Exp{\|\tvx'_t - \tvx_0\|^2} < \escRad^2$ for all $t \le \escIter$.
    Then for all $t \le \escIter$, for some constant $c$:
    \[\Exp{\|\hat \tvx_t\|^2} \ge c \frac {\sumvar_t^2 \step^2 r^2}{d}\]
\end{lemma}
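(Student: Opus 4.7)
The plan is to exploit the decomposition $\hat\tvx_t = -(\Delta_t + (\mathcal{E}_t + \step\hat\err_t) + Z_t + \Xi_t)$ from Proposition~\ref{prop:diff_decomposition} and show that the artificial-noise term $\Xi_t$ dominates the other three. The first step is an exact computation of $\Exp{\|\Xi_t\|^2}$. Since $\hat\ourNoise_i = -2\langle\vv_1,\ourNoise_i\rangle\vv_1$ lies entirely along $\vv_1$ and $\vv_1$ is the $(-\eigen)$-eigenvector of $H$, the operator $(I-\step H)^{t-1-i}$ multiplies $\hat\ourNoise_i$ by the scalar $(1+\step\eigen)^{t-1-i}$. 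Because $\langle\vv_1,\ourNoise_i\rangle\sim\mathcal{N}(0,r^2/d)$ and the $\ourNoise_i$'s are independent across iterations,
\[
  \Exp{\|\Xi_t\|^2} \;=\; \frac{4\step^2 r^2}{d}\sum_{i=0}^{t-1}(1+\step\eigen)^{2(t-1-i)} \;=\; \frac{4\step^2 r^2\,\sumvar_t^2}{d}.
\]

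Using $\|a+b\|^2\ge\tfrac12\|a\|^2-\|b\|^2$ it then suffices to show that $\Exp{\|\Delta_t\|^2}+\Exp{\|\mathcal{E}_t+\step\hat\err_t\|^2}+\Exp{\|Z_t\|^2}$ is a small constant fraction of $\Exp{\|\Xi_t\|^2}$. The compression contribution is already handled: Lemma~\ref{lem:bound_error_sum_linear} (linear case) or Lemma~\ref{lem:bound_error_sum} (general case, using the hypothesis $\err_0=0$) bounds $\Exp{\|\mathcal{E}_t+\step\hat\err_t\|^2}$ by a multiple of $\step^3(1-\compr)\gradLip^2\fullStd^2\sumvar_t^2/(\compr^2\sqrtre)$, up to a factor-$d$ gap between the two cases, and this falls below $\step^2 r^2\sumvar_t^2/d$ precisely because of the restriction $\step\le\step_\compr$ in Equation~\eqref{eq:params}.

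For $\Delta_t$ I would bound $\|\delta_i\|\le\hesLip\max(\|\vx_i-\vx_0\|,\|\vx'_i-\vx_0\|)$. The localization assumption controls $\|\tvx'_i-\vx_0\|$ and, by symmetry (Proposition~\ref{prop:same_distribution}), $\|\tvx_i-\vx_0\|$ as well, and the gap $\|\vx_i-\tvx_i\|=\step\|\err_i\|$ is controlled via Lemma~\ref{lem:error_estimation}: the small-gradient hypothesis $\|\nabla f(\vx_0)\|\le 4\gradLip\escRad$ together with smoothness keeps nearby gradient norms at $O(\gradLip\escRad)$, just as in the proof of Lemma~\ref{lem:large_gradient}. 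Consequently $\|\delta_i\|=O(\sqrtre)$, and combining with the a priori bound $\Exp{\|\hat\vx_i\|^2}=O(\escRad^2)$ yields $\Exp{\|\Delta_t\|^2}=O(\step^2\sqrtre\,\escRad^2\sumvar_t^2)$, which after substituting the parameter choices is dominated by the driving term. For $Z_t$ I split on Assumption~\ref{ass:lipsg}: when it holds, $\|\hat\sgdNoise_i\|\le 2\sgdLip\|\hat\vx_i\|$ and the bound parallels the $\Delta_t$ argument; when it fails, $\hat\sgdNoise_i$ is a martingale difference with $\Exp{\|\hat\sgdNoise_i\|^2}\le 4\std^2$, so orthogonality of increments gives $\Exp{\|Z_t\|^2}\le 4\step^2\std^2\sumvar_t^2$, in turn dominated via $\step\le\step_\std\le\eps^2/(\gradLip d\std^2)$ combined with $r^2\ge\eps^2/(\gradLip\step)$.

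The main obstacle is the circular dependence in bounding $\Delta_t$ and $Z_t$: both involve past $\|\hat\vx_i\|$, precisely the quantity we are trying to lower-bound. The resolution is to run the argument in two layers: an \emph{upper} bound $\Exp{\|\hat\vx_i\|^2}=O(\escRad^2)$ is available for free from the localization hypothesis together with symmetry and feeds all the error estimates, while the \emph{lower} bound on $\Exp{\|\hat\tvx_t\|^2}$ is driven by the exact formula for $\Exp{\|\Xi_t\|^2}$. A secondary subtlety is that moving between $\vx_i$ and $\tvx_i$ is not free and requires a dedicated $\|\err_i\|$-estimate essentially identical to the one in the proof of Lemma~\ref{lem:bound_error_sum}; this is also exactly why the linear-compressor case admits the less restrictive step-size constraint $\step\le\step_\compr$ arising from Lemma~\ref{lem:bound_error_sum_linear} rather than Lemma~\ref{lem:bound_error_sum}.
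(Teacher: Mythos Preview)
Your decomposition and the treatment of $\Xi_t$, $\mathcal{E}_t+\step\hat\err_t$, and the non-Lipschitz case of $Z_t$ are all correct and match the paper. The gap is in the bound on $\Delta_t$ (and, by the same token, the Lipschitz case of $Z_t$): the crude localization estimate $\Exp{\|\hat\vx_i\|^2}=O(\escRad^2)$ is too weak. With $\sqrt{\Exp{\|\delta_i\|^2}}\lesssim\hesLip\escRad=\sqrtre$ and $\sqrt{\Exp{\|\hat\vx_i\|^2}}\lesssim\escRad$, Cauchy--Schwarz and Proposition~\ref{prop:bound_series} give
\[
  \Exp{\|\Delta_t\|}\;\lesssim\;\step\,\sqrtre\,\escRad\sum_{i}(1+\step\eigen)^{t-1-i}
  \;\lesssim\;\sqrt{\step\sqrtre}\,\escRad\,\sumvar_t.
\]
For this to be at most a small constant times $\sqrt{\Exp{\|\Xi_t\|^2}}=2\step r\sumvar_t/\sqrt d$ you would need $\escRad\sqrt d\lesssim\step r/\sqrt{\step\sqrtre}$. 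Substituting $\escRad^2\asymp\eps/\hesLip$ and $\step r^2\asymp\eps^2/\gradLip$ reduces this to $d\gradLip\lesssim\sqrt{\hesLip\eps}$, which is false for any nontrivial dimension. So the ``two-layer'' resolution you describe does not close the loop.

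The paper instead runs an \emph{induction on $t$}: the inductive hypothesis is that each of $\Exp{\|\Delta_i\|},\Exp{\|\mathcal{E}_i+\step\hat\err_i\|},\Exp{\|Z_i\|}$ is at most $\tfrac{1}{10}\Exp{\|\Xi_i\|}$ for all $i<t$. This immediately yields the \emph{upper} bound $\Exp{\|\hat\tvx_i\|^2}\le 2\Exp{\|\Xi_i\|^2}=8\step^2 r^2\sumvar_i^2/d$, and after controlling $\step\|\hat\err_i\|$ the same bound (up to constants) holds for $\Exp{\|\hat\vx_i\|^2}$. Feeding this much tighter estimate into the $\Delta_t$ sum gives
\[
  \Exp{\|\Delta_t\|}\;\lesssim\;\step\,\hesLip\escRad\cdot\frac{\step r}{\sqrt d}\sum_i(1+\step\eigen)^{t-1-i}\sumvar_i
  \;\lesssim\;\crad\cit\cdot\frac{\step r\sumvar_t}{\sqrt d},
\]
which is then made $\le\tfrac{1}{10}\Exp{\|\Xi_t\|}$ purely by choosing the polylogarithmic constants $\crad,\cit$ appropriately; the Lipschitz-$Z_t$ bound is handled the same way. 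In short, the circularity you identified is real, but the correct resolution is a bootstrap/induction that propagates the sharp $O(\step^2 r^2\sumvar_i^2/d)$ upper bound on $\|\hat\vx_i\|^2$, not the coarse $O(\escRad^2)$ bound coming from localization.
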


\begin{proof}
    To simplify the presentation, we use $c$ to denote constants, and it may change its meaning from line to line.
    \[\Exp{\|\hat \tvx_t\|^2} \ge (\max(0, \Exp{\|\Xi_t\| - \|\Delta_t\| - \|\mathcal E_t + \step \hat \err_t\| - \|Z_t\|}))^2\]
    
    We show that $\Exp{\Xi_t} = \Omega \left(\frac {\sumvar_t \step r}{\sqrt d}\right)$, and terms aside from $\Xi_t$ are negligible, namely that in expectation $\Exp{\|\Delta_t\|}, \Exp{\|\mathcal E_t + \step \hat \err_t\|}, \Exp{\|Z_t\|} \le \frac 1 {10} \Exp{\|\Xi_t\|}$\footnote{Most of the proof can go through if we consider $\Exp{\|\cdot\|^2}$ instead of $\Exp{\|\cdot\|}$. There is only one place in estimation of $\|\Delta_t\|$ which requires the first momentum.}.

    We prove the inequality by induction.
    The inequality holds for $t=0$ since all terms are $0$.

    \paragraph{Estimating $\Xi_t$.}
    Since $\Xi_t$ is a sum of independent Gaussians with variances $4 (1 + \step \eigen)^{2(t - i - 1)} \frac {\step^2 r^2} d$,
    its total variance is
    \begin{align*}
        \Exp{\|\Xi_t\|^2}
        = 4\frac {\step^2 r^2} d \sum_{i=0}^{t-1} (1 + \step \eigen)^{2i}
        = 4\frac {\step^2 r^2} d \sumvar^2_t,
    \end{align*}
    And for a zero-mean Guassian random variable, we know $\Exp{\|\Xi_t\|}^2 = \frac 2\pi \Exp{\|\Xi_t\|^2}$.
    Note that from the induction hypothesis it follows that $\Exp{\|\hat \tvx_t\|^2} \le 2 \Exp{\|\Xi_t\|^2} \le 8 \frac {\step^2 r^2 \sumvar_t^2}{d}$.
    
    \paragraph{Bounding $\Delta_i$.} By the Hessian Lipschitz property, $\Exp{\|\delta_i\|^2} \le 4 \hesLip^2 \escRad^2$, and by the induction hypothesis:
    \[\Exp{\|\hat \vx_i\|^2} \le 2 \Exp{\|\hat \tvx_i\|^2} + 2 \step^2 \Exp{\|\hat \err_i\|^2} \le c\frac {\step^2 r^2 \sumvar_i^2} {d}\]

    for $i \le t$ and for $\step$ selected as in Equation~\eqref{eq:params} (see proofs of Lemmas~\ref{lem:bound_error_sum} and~\ref{lem:bound_error_sum_linear}). Therefore:
    \begin{align*}
        \Exp{\|\Delta_t\|}
        &= \Exp{\|\step \sum_{i=0}^{t-1} (I - \step H)^{t-i-1} \delta_i \hat \vx_i\|} 
            && \text{(By Definition~\ref{def:error-series})} \\
        &\le \step \Exp{\sum_{i=0}^{t-1} \|I - \step H\|^{t-i-1} \cdot \|\delta_i\| \cdot \|\hat \vx_i\|}
            \\
        &\le \step \sum_{i=0}^{t-1} (1 + \step \eigen)^{t-i-1} \sqrt{\Exp{\|\delta_i\|^2} \cdot \Exp{\|\hat \vx_i\|^2}}
            && (\text{Cauchy-Schwarz})\\
        &\le c \step \hesLip \escRad \frac {\step r}{\sqrt d}  \Exp{\sum_{i=0}^{t-1} (1 + \step \eigen)^{t-i-1} \sumvar_i}
            && (\Exp{\|\delta_i\|^2} \le 4 \hesLip^2 \escRad^2 \text{ and } \Exp{\|\hat \vx_i\|^2} \le c\frac {\step^2 r^2 \sumvar_t^2} {d}) \\
        &\le c \step \hesLip \escRad \frac {\step r}{\sqrt d}  \Exp{\sum_{i=0}^{t-1} \frac{(1 + \step \eigen)^{t-1}}{\sqrt{\step \eigen}}}
            && \text{(Proposition~\ref{prop:beta_equals})} \\
        &\le c \step \hesLip \escRad \frac {\step r}{\sqrt d}  \escIter \sumvar_t
            && \text{(Proposition~\ref{prop:beta_equals}, another direction)} \\
        &\le c \frac {\step r \sumvar_t}{\sqrt d} (\step \hesLip \crad \sqrt {\frac {\eps} \hesLip} \cdot \frac \cit {\step \sqrtre})
            && (\escRad = \crad \sqrt {\frac {\eps} \hesLip} \text{ and } \escIter = \frac \cit {\step \sqrtre}) \\
        &\le c \crad \cit \frac {\step r \sumvar_t}{\sqrt d},
    \end{align*}
    and it suffices to choose $c \crad \cit \le \frac 1{40}$ so that $\Exp{\|\Delta_t\|} \le \frac 1{10}\Exp{\|\Xi_t\|}$.

    \paragraph{Bounding $\|\mathcal E_t + \step \hat \err_t\|$.}
    For a general compressor, by Lemma~\ref{lem:bound_error_sum} we know that
    \begin{align*}
        \Exp{\|\mathcal E_t + \step \hat \err_t\|^2}
        \le \frac {c \step^3 (1 - \compr) \gradLip^2 \fullStd^2 \sumvar_t^2} {\compr^2 \sqrtre}
    \end{align*}

    Using $\fullStd \le 2 r$, to show that $\Exp{\|\mathcal E_t + \step \hat \err_t\|}^2 \le \Exp{\|\mathcal E_t + \step \hat \err_t\|^2} \le \frac 1 {100} \Exp{\|\Xi_t\|}^2$, it suffices to guarantee that
    \begin{align*}
        \frac {\step^3 (1 - \compr) \gradLip^2 \fullStd^2 \sumvar_t^2} {\compr^2 \sqrtre}
            \le c \frac {\sumvar_t^2 \step^2 r^2}{d}
        \iff \step \le c \frac {\compr^2 \sqrtre r^2} {d (1 - \compr) \fullStd^2 \gradLip^2}
    \end{align*}
    
    Using that $\fullStd^2 \le 2 r^2$ for sufficiently large $\cnoise$, we have:
    \begin{align*}
        \step \le c \frac {\compr^2 \sqrtre} {d (1 - \compr) \gradLip^2}
    \end{align*}
    
    When the compressor is linear, by Lemma~\ref{lem:bound_error_sum_linear} we have:
    \begin{align*}
        \frac {\step^3 (1 - \compr) L^2 \sumvar_t^2 r^2} {\compr^2 d \sqrtre} \le c \frac{\sumvar_t^2 \step^2 r^2}{d}
        \iff \step \le c \frac {\compr^2 \sqrtre} {(1 - \compr) \gradLip^2}
    \end{align*}

    \paragraph{Bounding $\|Z_t\|$.} First, we consider the case when Assumption~\ref{ass:lipsg} doesn't hold (i.e. $\sgdLip = +\infty$).
    Since $Z_t$ is the sum of independent random variables:
    \begin{align*}
        \Exp{\|Z_t\|^2}
        \le \step^2 \sum_{i=0}^{t-1} (1 + \step \eigen)^{2 (t-i-1)} 2 \step^2 \std^2
        \le 2 \step^4 \sumvar_t^2 \std^2
    \end{align*}
    To prove that $\Exp{\|Z_t\|}^2 \le \Exp{\|Z_t\|^2} < \frac 1 {100} \Exp{\|\Xi_t\|}^2$, it suffices to show that
    \[\step^4 \sumvar_t^2 \std^2 \le c \frac {\sumvar_t^2 \step^2 r^2}{d}
    \iff \std\sqrt d \le c r
    \iff \std^2 d \le c \cnoise^2 \frac{\eps^2} {\gradLip \step}
    \iff \step \le c \frac {\cnoise^2 \eps^2} {\std^2 \gradLip d},\]
    which holds when $\cstep \le c \cnoise^2$, by Equation~\eqref{eq:params}.

    Finally, we consider the case when Assumption~\ref{ass:lipsg} holds (i.e.\ $\sgdLip < +\infty$).
    Since stochastic gradient is Lipschitz, we have $\|\hat \sgdNoise_i\| \le 2 \sgdLip \|\hat x_i\|$ and:
    \begin{align*}
        \Exp{\|Z_t\|^2}
        &= \Exp{\|\step \sum_{i=0}^{t-1} (I - \step H)^{t-i-1}\hat \sgdNoise_i\|^2}
            && \text{(Definition~\ref{def:error-series})}\\
        &\le \step^2 \sum_{i = 0}^{t-1} \Exp{\|(I - \step H)^{t-i-1}\hat \sgdNoise_i\|^2}
            && \text{(Noises are independent)} \\
        &\le \step^2 \sum_{i = 0}^{t-1} \|(1 + \step \eigen)^{t-i-1}\|^2 \cdot \Exp{\|\hat \sgdNoise_i\|^2}
            && \text{(Since $\eigen$ is the smallest negative eigenvalue of $H$)} \\
        &\le \step^2 \sum_{i = 0}^{t-1} \|(1 + \step \eigen)^{t-i-1}\|^2 \cdot \sgdLip^2 \Exp{\|\hat \vx_i\|^2}
            && \text{(Assumption~\ref{ass:lipsg})} \\
        &\le c \step^2 \escIter \frac {\step^2 r^2 \sumvar_t^2} {d}
            && \text{(See derivation for $\|\Delta_t\|$ above)}
    \end{align*}

    Therefore $\Exp{\|Z_t\|} \le c\step \sgdLip \sqrt \escIter \frac {\sumvar_t \step r} {\sqrt d}$.
    To guarantee that $\Exp{\|Z_t\|} \le \frac 1 {10} \Exp{\|\Xi_t\|}$, it suffices to show that
    \begin{align*}
        \step \sgdLip \sqrt{\escIter} \frac {\sumvar_t \step r} {\sqrt d} \le c \frac {\step r \sumvar_t} {\sqrt d}
        \iff \step \sgdLip \sqrt{\escIter} \le c
        \iff \frac {\cit^2 \step \sgdLip^2} \sqrtre \le c
        \iff \step \le c \frac \sqrtre {\cit^2 \sgdLip^2},
    \end{align*}
    which holds when $\cit^2 \cstep \le c$.
%
%
\end{proof}

\begin{theorem}
    Under Assumptions~\ref{ass:lip} and~\ref{ass:sg}, for $\step$ as in Equation~\eqref{eq:params}, after $\tilde O\left(\frac \fmax {\step \eps^2}\right)$ iterations of Algorithm~\ref{alg:arbitrary_compressor}:
    \begin{compactitem}
        \item For a linear compressor and $\reseterr=false$, at least half of visited points $\vx_{t}$ are $\eps$-SOSP\@.
        \item For a general compressor and $\reseterr=true$, at least half of points $\vx_{t}$ such that the condition at Line~\ref{line:check_err_to_zero} is triggered at iteration $t$ are $\eps$-SOSP\@.
        The condition is triggered at most $\tilde O(\frac \fmax \df) = \tilde O(\frac T \escIter)$ times.
    \end{compactitem}
\end{theorem}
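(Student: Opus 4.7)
I would argue by contradiction: suppose strictly more than half of the relevant points (all $T$ iterates in the linear/$\reseterr=false$ setting, or only the trigger points in the general/$\reseterr=true$ setting) are not $\eps$-SOSP, and derive that the expected total decrease of $f$ along the algorithm exceeds $\fmax$. Every non-SOSP point $\vx_0$ falls into one of two cases: (a) $\|\nabla f(\vx_0)\| > \eps$, or (b) $\lmin(\nabla^2 f(\vx_0)) < -\sqrtre$.

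For case (a), in the linear-compressor setting I would directly apply the Compressed Descent Lemma (Lemma~\ref{lem:descent_lemma}) to all $T$ iterates exactly as in the proof of Theorem~\ref{thm:fosp}: if at least $T/4$ iterates have $\|\nabla f(\vx_t)\|^2 > \eps^2$, the lemma forces $f(\tvx_0) - \Exp{f(\tvx_T)} > \fmax$ once $T = \tilde \Theta(\fmax/(\step\eps^2))$. In the general-compressor setting, every trigger point satisfies $\err = 0$ after Line~\ref{line:set_err_to_zero}, so Lemma~\ref{lem:large_gradient} applies and yields $\df$ expected descent over the next $\escIter$ iterations whenever $\|\nabla f(\vx_{t'})\| > 4\gradLip\escRad$ (the constant-factor gap between $\eps$ and $4\gradLip\escRad$ is absorbed into the polylog factors hidden in $T$).

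For case (b), I would use the coupling sequences from Equation~\ref{eq:coulping_sequence}. Lemma~\ref{lem:nonlocalize}, combined with the lower bound on $\sumvar_{\escIter}$ from Proposition~\ref{prop:beta_equals} and our choice of $r$, shows that $\Exp{\|\hat\tvx_t\|^2} > 4\escRad^2$ for some $t \le \escIter$. By the triangle inequality $\|\hat\tvx_t\|^2 \le 2\|\tvx_t - \tvx_0\|^2 + 2\|\tvx'_t - \tvx_0\|^2$ together with Proposition~\ref{prop:same_distribution} (the two sequences have identical distributions, hence identical expected objective values), at least one of $\Exp{\|\tvx_t - \tvx_0\|^2}$ or $\Exp{\|\tvx'_t - \tvx_0\|^2}$ must exceed $\escRad^2$; Corollary~\ref{cor:escaping_improves} then yields the expected $\df$ descent within $\escIter$ iterations of $\vx_0$ for the algorithm's sequence.

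Finally I would count. In the linear case, from any $T/4$ saddle iterates I would extract $\ge T/(4\escIter)$ non-overlapping length-$\escIter$ escape windows, giving total expected descent $\gtrsim T\df/\escIter = \tilde\Theta(T\eps^2\step) > \fmax$ for our choice of $T$, a contradiction. In the general case, the gap between consecutive triggers is at most $\escIter$, so there are $\tilde O(T/\escIter) = \tilde O(\fmax/\df)$ trigger events; more than half being non-SOSP gives total expected descent $\gtrsim (T/(2\escIter))\df > \fmax$ for appropriately tuned constants $\cstep,\cit,\cf,\crad,\cnoise$. The main obstacle I anticipate is cleanly executing case~(b): one must chain the hypothesis of Lemma~\ref{lem:nonlocalize} (which assumes the \emph{primed} sequence localizes within $\escRad$) with Corollary~\ref{cor:escaping_improves} (which concludes descent for the \emph{algorithm's} sequence once \emph{it} leaves $\escRad$) through a dichotomy on whether the primed sequence itself localizes, so that one branch or the other produces the required $\df$ descent for the algorithm.
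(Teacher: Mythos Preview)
Your proposal follows the paper's architecture closely: contradiction via total expected descent exceeding $\fmax$, Lemma~\ref{lem:nonlocalize} plus Proposition~\ref{prop:same_distribution} for the saddle case, and the dichotomy you flag as the ``main obstacle'' is exactly how the paper (implicitly) chains the localization hypothesis of Lemma~\ref{lem:nonlocalize} with Corollary~\ref{cor:escaping_improves}. The counting arguments differ only cosmetically (you extract non-overlapping escape windows; the paper averages over the $\escIter$ shifted arithmetic subsequences $s_1, s_1+\escIter, s_1+2\escIter,\ldots$), and both work up to constants.

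There is, however, a genuine error in your case~(a) for the general compressor. You write that the gap between $\eps$ and $4\gradLip\escRad$ is a ``constant-factor gap\ldots\ absorbed into the polylog factors.'' It is not: since $\escRad = \crad\sqrt{\eps/\hesLip}$, one has $4\gradLip\escRad = \Theta(\sqrt\eps)$ and hence $4\gradLip\escRad/\eps = \Theta(\eps^{-1/2})$, which diverges as $\eps\to 0$. Consequently a trigger point with $\eps < \|\nabla f(\vx_{t'})\| \le 4\gradLip\escRad$ and $\lmin(\nabla^2 f(\vx_{t'})) \ge -\sqrtre$ is \emph{not} an $\eps$-SOSP, yet neither Lemma~\ref{lem:large_gradient} (its gradient is below the threshold) nor the coupling argument (there is no escape direction) applies to it, and your case split leaves it uncovered.

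The paper does not invoke Lemma~\ref{lem:large_gradient} in the final counting. Instead it handles the first-order condition uniformly for both compressor types by applying Lemma~\ref{lem:descent_lemma} \emph{globally} over all $T$ iterations---exactly as you already do for the linear case. This remains valid when $\reseterr=true$: Line~\ref{line:set_err_to_zero} replaces $\vx_t\gets\vx_t-\step\err_t$ and $\err_t\gets 0$, which leaves $\tvx_t=\vx_t-\step\err_t$ unchanged and only zeroes the accumulated error, so the descent-lemma telescoping (and the error bound of Corollary~\ref{cor:error_sum}) goes through segment by segment with the same total bound. The role of Lemma~\ref{lem:large_gradient} in the paper is rather to justify the standing hypothesis $\|\nabla f(\vx_0)\|\le 4\gradLip\escRad$ that feeds into Lemma~\ref{lem:bound_error_sum} and thence Lemma~\ref{lem:nonlocalize} for the saddle analysis---not to serve as the first-order counting device at trigger points.
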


Note that the fraction of $\eps$-SOSP can be made arbitrary close to $1$.

\begin{proof}
    As in the previous Lemma, $c$ is used to denote constants and may change its meaning from line to line,
    If at least quarter of the points have gradient $\|\nabla f(\vx_t)\| \ge \eps$, then by Lemma~\ref{lem:descent_lemma}, $\Exp{\tvx_0 - \tvx_T} > \fmax$, which is impossible.
    It remains to show that there is at most quarter of points such that $\lmin(\nabla^2(f(\vx_t))) \ge -\sqrtre$.
    First we show that, if $\lmin(\nabla^2 f(\vx_0)) > -\frac \sqrtre 2$, then for some $t \le \escIter$
    \[f(\vx_0) - \Exp{f(\vx_t)} \ge \df\]
    By Lemma~\ref{lem:nonlocalize}:
    \[\Exp{\|\hat \tvx_t\|^2}
    \ge c \frac {\sumvar_t^2 \step^2 r^2}{d}
    \ge c \frac {(1 + \step \eigen)^{2t} \step^2}{d \step \eigen} \cdot \frac{\eps^2}{\gradLip \step}
    \ge c \frac {(1 + \step \eigen)^{2t} \eps^2}{d \eigen \gradLip}
    \]
    Substituting $t = \escIter$, we have
    $(1 + \step \eigen)^\escIter
    \ge (1 + \step \sqrt{\rho \eps})^{\nicefrac{\cit}{\step \sqrt{\rho \eps}}}
    \ge e^{\cit}$.
        By selecting $\cit \ge c \log \frac {d \gradLip \hesLip 2 \escRad^2} {\std \eps}$ for some $c$, we have $\Exp{\|\hat \tvx_t\|^2} \ge 2 \escRad^2$, and therefore:
    \[\Exp{\|\tvx_0 - \tvx_\escIter\|^2} = \max(\Exp{\|\tvx_0 - \tvx_\escIter\|^2}, \Exp{\|\tvx_0 - \tvx'_\escIter\|}^2) \ge \frac 12 \Exp{\|\tvx'_\escIter - \tvx_\escIter\|^2} \ge \escRad^2\]
    Since by Proposition~\ref{prop:same_distribution} $\tvx_t$ and $\tvx'_t$ have the same distribution, $\Exp{\|\tvx_0 - \tvx_\escIter\|} = \Exp{\|\tvx_0 - \tvx'_\escIter\|}$, and therefore
    \[\Exp{\|\tvx_0 - \tvx_\escIter\|^2} \ge \escRad^2,\]
    
    and by Corollary~\ref{cor:escaping_improves}:
    \[f(\vx_0) - \Exp{f(\vx_\escIter)} \ge \df,\]
    
    and therefore the objective decreases by $\df$ after $\escIter$ iterations.
    
    For a linear compressor, we split consider iteration $s_1, s_2, \ldots$ such that $s_{i+1} = s_i + \escIter$.
    If among such iterations at least quarter of the points have $\lmin(\nabla^2 f(\vx_{s_i})) \ge -\frac \sqrtre 2$, then
    \[\Exp{\tvx_0 - \tvx_T}
        > \frac 14 \frac T\escIter \df
        \ge \frac c4 \frac {\fmax \step \sqrtre}{\step \eps^2} \sqrt {\frac {\eps^3}{\hesLip}}
        \ge \frac c4 \fmax,\]
    which is impossible by selecting a sufficiently large constant in the choice of $T$.
    By considering $s_i = 1, \ldots, \escIter$, we get that at most quarter of all points have $\lmin(\nabla^2 f(\vx_{s_i})) \ge -\sqrtre$.

    For an arbitrary compressor, the reasoning is similar. We consider points $\vx_{s_1}, \vx_{s_2}, \ldots$ such that the condition at Line~\ref{line:check_err_to_zero} of Algorithm~\ref{alg:arbitrary_compressor} is triggered at iteration $s_i$.
    Then, either we have escaped the saddle point (and therefore the objective decreased by $\df$) or at most $\escIter$ iterations passed. Similarly to the above, at most a quarter of such points have $\lmin(\nabla^2 f(\vx_{s_i})) \ge -\sqrtre$.
\end{proof}



\end{document}